\documentclass[journal]{IEEEtran}

\usepackage[utf8]{inputenc} 
\usepackage[T1]{fontenc}    

\usepackage{url}            
\usepackage{booktabs}       
\usepackage{amsfonts}       
\usepackage{nicefrac}       
\usepackage{microtype}      
\usepackage{xcolor}         

\usepackage{ulem}
\usepackage{graphicx}
\usepackage{wrapfig}
\usepackage{mathtools}
\usepackage{amsmath, amssymb,amsthm}
\usepackage{enumitem}
\usepackage[caption=false,font=footnotesize]{subfig}
\usepackage{cite}
\usepackage{algorithm}
\usepackage{algpseudocode}
\usepackage{multirow}
\usepackage{soul}    
\usepackage{verbatim}
\usepackage{array}
\usepackage{graphbox}
\usepackage{hyperref}       
\usepackage[capitalize,noabbrev]{cleveref}
\theoremstyle{plain}
\newtheorem{theorem}{Theorem}[section]

\newtheorem{lemma}[theorem]{Lemma}

\theoremstyle{definition}
\newtheorem{definition}[theorem]{Definition}

\title{Conditional Unbalanced Optimal Transport Maps: An Outlier-Robust Framework for Conditional Generative Modeling}

\author{
Jiwoo Yoon, Kyumin Choi, Jaewoong Choi%
\thanks{Jiwoo Yoon and Jaewoong Choi are with the Department of Statistics, Sungkyunkwan University. Kyumin Choi is with the Department of Applied Artificial Intelligence, Sungkyunkwan University. E-mail: yoon6019@skku.edu, skku02ky1224@g.skku.edu, jaewoongchoi@skku.edu. Jiwoo Yoon and Kyumin Choi contribute equally to this work. (Corresponding author: Jaewoong Choi)}%
\thanks{This work was supported by the National Research Foundation of Korea (NRF) grant funded by the Korea government (MSIT) (No. RS-2024-00349646).}%
\thanks{This work has been submitted to the IEEE for possible publication. Copyright may be transferred without notice, after which this version may no longer be accessible.}
}

\begin{document}
\maketitle

\begin{abstract}
Conditional Optimal Transport (COT) problem aims to find a transport map between conditional source and target distributions while minimizing the transport cost.
Recently, these transport maps have been utilized in conditional generative modeling tasks to establish efficient mappings between the distributions. 
However, classical COT inherits a fundamental limitation of optimal transport, i.e., sensitivity to outliers, which arises from the hard distribution matching constraints. This limitation becomes more pronounced in a conditional setting, where each conditional distribution is estimated from a limited subset of data. 
To address this, we introduce the Conditional Unbalanced Optimal Transport (CUOT) framework, which relaxes conditional distribution-matching constraints through Csiszár divergence penalties while strictly preserving the conditioning marginals. We establish a rigorous formulation of the CUOT problem and derive its dual and semi-dual formulations. Based on the semi-dual form, we propose Conditional Unbalanced Optimal Transport Maps (CUOTM), an outlier-robust conditional generative model built upon a triangular $c$-transform parameterization. We theoretically justify the validity of this parameterization by proving that the optimal triangular map satisfies the $c$-transform relationships. 
Our experiments on 2D synthetic and image-scale datasets demonstrate that CUOTM achieves superior outlier robustness and competitive distribution-matching performance compared to existing COT-based baselines, while maintaining high sampling efficiency.
\end{abstract}

\begin{IEEEkeywords}
Conditional Optimal Transport, Unbalanced Optimal Transport, Generative modeling, Outlier-robustness
\end{IEEEkeywords}

\section{Introduction}
\IEEEPARstart{C}{onditional} generative modeling aims to approximate a conditional distribution $p_{\mathrm{data}}(\cdot \mid y)$, where $y$ denotes a conditioning variable, such as a class label \cite{miyato2018cgans}, a text prompt \cite{esser2024scaling}, or noisy measurements in an inverse problem \cite{wang2025efficient}. 
Recent advances in generative modeling, including GANs \cite{miyato2018cgans, brock2018large, styleganada2020}, diffusion models \cite{song2020score}, and flow matching models \cite{liu2022flow}, have achieved remarkable success in modeling such conditional distributions.
Among existing approaches, \textbf{Conditional Optimal Transport (COT)} has emerged as a principled framework for conditional generation due to its rigorous theoretical formulation and sampling efficiency \cite{hosseini2025conditional, kerrigan2024dynamic, chemseddine2025conditional}. While standard Optimal Transport (OT) investigates the transport problem between two \textit{marginal distributions} (the source $\eta$ and the target $\nu$), the COT problem focuses on the transport between \textit{conditional distributions}. Specifically, for each fixed condition $y$, the objective is to learn a transport map (or transport plan) that transforms the source conditional distribution $\eta(\cdot|y)$ into the target conditional distribution $\nu(\cdot|y)$. This provides a natural principled foundation for conditional generative modeling by setting $\nu = p_{\mathrm{data}}$.

However, classical OT formulations (both marginal and conditional) are known to be sensitive to outliers \cite{NEURIPS2020_9719a00e, uot-robust}. This sensitivity originates from the hard distribution-matching constraints, which force the transport plan to account for every empirical sample \cite{uotm, uot-robust}. When the data contains noise or corrupted samples, the optimal transport map can be significantly distorted, resulting in unstable estimation depending on sampling variation.
\textbf{This vulnerability is further amplified in the conditional setting}. In conditional generative modeling, the dataset is partitioned according to the conditioning variable. As a result, each conditional distribution is estimated from a smaller subset of data.

In these data-sparse regimes, even a few outliers can exert a disproportionately large influence on the learned conditional transport map, thereby amplifying the instability inherent in classical OT. In practical applications, where imperfect or contaminated data are common \cite{10.1145/2983323.2983660, hu2024anomalydiffusion}, this limitation can severely affect conditional generation quality.

To overcome this limitation, we introduce the \textbf{\textit{Conditional Unbalanced Optimal Transport (CUOT)}} framework. Our key idea is to relax the rigid distribution-matching constraints of COT while preserving the structural constraint on the conditioning variable. Specifically, we incorporate Csiszár divergence penalties to allow controlled deviations within each conditional distribution, while strictly enforcing the marginal constraint on the conditioning variable $y$. This formulation preserves conditional alignment while introducing robustness against outliers.

Building upon this formulation, we propose \textbf{\textit{Conditional Unbalanced Optimal Transport Maps (CUOTM)}}, a conditional generative model derived from the semi-dual formulation of the CUOT problem. CUOTM parameterizes the conditional $c$-transform using a triangular transport map and enables efficient adversarial-style training. We theoretically justify this parameterization by proving that the optimal conditional transport map satisfies this relationship. In addition, we establish bounds that quantify how the relaxed conditional marginals deviate from the original distributions, explicitly characterizing the robustness–accuracy trade-off.

Experimental results on 2D synthetic datasets and class-conditional CIFAR-10 demonstrate that CUOTM achieves strong robustness to outliers while maintaining competitive distribution-matching performance. In particular, our model outperforms existing dynamic approaches that require 100 NFEs, while employing only one NFE for CIFAR-10 conditional generation task. Our contributions can be summarized as follows:
\begin{itemize}
    \item We introduce the first mathematical formulation of the Conditional Unbalanced Optimal Transport problem, which relaxes conditional distribution-matching constraints via divergence penalties while preserving the conditioning marginals.
    
    \item We establish the dual and semi-dual formulations of CUOT, extending classical Unbalanced Optimal Transport (UOT) theory to the conditional setting.
    
    \item We propose Conditional Unbalanced Optimal Transport Maps (CUOTM), a conditional generative model derived from the semi-dual formulation of CUOT through a triangular $c$-transform parameterization. We further provide theoretical justification for the validity of this parameterization.
    \item CUOTM demonstrates superior robustness to outliers while maintaining competitive distribution-matching performance on conditional generation benchmarks.
\end{itemize}

\section{Preliminaries} 
\subsubsection*{1) Notations and Assumptions}
Let $\mathcal{V}, \mathcal{U} \subseteq \mathbb{R}^d$ and $\mathcal{Y} \subseteq \mathbb{R}^c$ be Polish spaces, where $\mathcal{V}$ and $\mathcal{U}$ are closures of connected open sets in $\mathbb{R}^d$, and $\mathcal{Y}$ is the closure of a connected open set in $\mathbb{R}^c$. 
We define $\mathcal{V}$ and $\mathcal{U}$ as the \textbf{source and target data spaces}, respectively, and $\mathcal{Y}$ as the \textbf{conditioning variable space}. The set of Borel probability measures on a space $(\cdot)$ is denoted by $\mathcal{P}(\cdot)$. For a measure $\eta \in \mathbb{P}(\mathcal{V})$ and a measurable map $T: \mathcal{V} \to \mathcal{U}$, the pushforward measure of $\eta$ through $T$ is denoted by $T_\# \eta$, i.e., $T_\# \eta(A) := \eta(T^{-1}(A))$ for any Borel set $A \subseteq \mathcal{U}$. We also denote the space of continuous functions on a given domain by $\mathcal{C}(\cdot)$. Throughout this paper, we consider the quadratic cost function $c(z, v; y, u) = c(v,u) = \tau \| v-u \|_{2}^{2}$ for positive constant $\tau > 0$.

\subsubsection*{2) Optimal Transport}
Optimal Transport (OT) theory investigates the problem of transforming a source probability distribution $\mu \in \mathbb{P}(\mathcal{V})$ into a target distribution $\nu \in \mathbb{P}(\mathcal{U})$ with minimal transport cost \cite{villani, santambrogio}. This transformation can be formulated as a deterministic transport map (\textit{Monge's OT problem}) or a stochastic transport plan (\textit{Kantorovich's OT problem}). 
Formally, in Monge's OT formulation, the objective is to find a deterministic transport map $T: \mathcal{V} \to \mathcal{U}$ that minimizes the total transport cost, given by a cost function $c: \mathcal{V} \times \mathcal{U} \to \mathbb{R}$:

\begin{equation} \label{eq:monge}
\inf_{T_\# \mu = \nu} M(T), \quad \text{where} \quad M(T) := \int_{\mathcal{V}} c(v, T(v)) d\mu(v).
\end{equation}

The optimization is conducted under the constraint $T_\# \mu = \nu$, which requires that the transport map $T$ exactly transforms $\mu$ to $\nu$.
Because $T$ is a measurable function, it defines a deterministic coupling between the source sample $v \sim \mu$ and the target $T(v)$. However, due to this restriction to deterministic maps, the existence of an \textbf{optimal transport map} $T^{\star}$, i.e., the minimizer of $M(T)$, is often not guaranteed depending on the selection of $\mu$ and $\nu$. For instance, a solution may not exist if $\mu$ is a Dirac mass and $\nu$ is a continuous distribution \cite{villani}. 

To address the existence challenge of Monge's formulation, Kantorovich proposed a relaxed formulation that explores an optimal transport plan rather than a deterministic map \cite{Kantorovich1948}:
\begin{equation} \label{eq:kantorovich}
\inf_{\pi \in \Pi(\mu,\nu)}
K(\pi), \,\,\, \text{where} \,\,
K(\pi) := \int_{\mathcal{V}\times\mathcal{U}} c(v,u) \, d\pi(v,u).
\end{equation}
$\Pi(\mu,\nu)$ denote the set of all couplings between $\mu$ and $\nu$:
\begin{equation}
    \Pi(\mu,\nu) := \{ \pi \in \mathbb{P}(\mathcal{V}\times\mathcal{U}) : \pi_{\mathcal{V}} = \mu, ~ \pi_{\mathcal{U}} = \nu \}.
\end{equation}
where $\pi_{\mathcal{V}}$ and $\pi_{\mathcal{U}}$ denote the marginal distributions over $\mathcal{V}$ and $\mathcal{U}$, respectively. The coupling $\pi$ allows a stochastic transformation through the conditional distribution $\pi(\cdot | v)$. Intuitively, this represents how the probability mass of a source sample $v$ is distributed across the target space $\mathcal{U}$. We refer to the minimizer $\pi^{\star}$ of $K(\pi)$ as the \textbf{optimal transport plan}. 

Theoretically, the Kantorovich's OT problem (Eq. \ref{eq:kantorovich}) is a convex relaxation of the non-convex Monge's OT problem (Eq. \ref{eq:monge}). Moreover, while the optimal transport map may fail to exist, the optimal transport plan $\pi^{\star}$ is guaranteed to exist under mild regularity conditions on $\mu$ and $\nu$ \cite{villani}. 
When $T^{\star}$ exists, the solutions to the Monge and Kantorovich problems coincide, i.e., $\pi^\star = (\mathrm{Id}, T^\star)_\# \mu$.
Throughout this paper, we refer to either the transport map $T$ or the transport plan $\pi$ depending on the context.

\subsubsection*{3) Conditional Optimal Transport}
Consider two joint distributions over the data spaces $\mathcal{V}, \mathcal{U}$ and the conditioning variable space $\mathcal{Y}$: a source measure $\eta \in \mathbb{P}(\mathcal{Y} \times \mathcal{V})$ and a target measure $\nu \in \mathbb{P}(\mathcal{Y} \times \mathcal{U})$. In the task of conditional generative modeling, the goal is to recover the conditional distribution $\nu(\cdot | y)$ for any given condition variable $y \in \mathcal{Y}$. 

In this case, the standard OT formulations between $\eta$ and $\nu$ (e.g., Eq. \ref{eq:monge} and \ref{eq:kantorovich}) fail because they do not consider the coupled data-condition structure. This leads to undesirable mappings where a source sample $(y, v)$ is transported to a target $(y', u)$ where $y \neq y'$.
Consequently, \textbf{standard OT is generally insufficient} for tasks requiring condition alignment, such as conditional generation.

To address this, recent works \cite{hosseini2025conditional, kerrigan2024dynamic} proposed the \textbf{Conditional Optimal Transport (COT)} framework, which imposes a \textbf{triangular} structure on the transport map. A map $T: \mathcal{Y} \times \mathcal{V} \to \mathcal{Y} \times \mathcal{U}$ is called \textit{triangular} if it takes the form:
\begin{equation}
    T(y, v) = (T_\mathcal{Y}(y), T_\mathcal{U}(T_\mathcal{Y}(y), v)).
\end{equation}
This parameterization characterizes how the source conditional distribution $\eta(\cdot | y)$ is transported to the target conditional distribution $\nu(\cdot | T_\mathcal{Y}(y))$ \cite{hosseini2025conditional}. 

In particular, by assuming that the conditioning variables are preserved via an identity map, i.e., $T_\mathcal{Y} = \mathrm{Id}$ (which requires matching conditioning marginals $\eta_\mathcal{Y} = \nu_\mathcal{Y}$), the resulting conditional transport map $T_\mathcal{U}$ satisfies the following desired property:
\begin{equation} \label{eq:triangular_pushforward}
    T_\mathcal{U}(y, \cdot)_\# \eta(\cdot|y) = \nu(\cdot|y), \quad \text{for } \eta_\mathcal{Y}\text{-a.e. } y.
\end{equation}

As shown in Eq. \ref{eq:triangular_pushforward}, this formulation ensures that each target conditional $\nu(\cdot|y)$ is generated by the pushforward measure of the source conditional $\eta(\cdot|y)$ with the same conditioning variable $y$. This property motivates the following formal definitions of the Conditional Monge (Eq. \ref{eq:cond_monge}) and Kantorovich problems (Eq. \ref{eq:cond_kantorovich}).

Formally, given $\eta \in \mathcal{P}(\mathcal{Y} \times \mathcal{V})$, $\nu \in \mathcal{P}(\mathcal{Y} \times \mathcal{U})$ with matching conditioning marginals $\eta_\mathcal{Y} = \nu_\mathcal{Y}$, and a cost function $c: (\mathcal{Y} \times \mathcal{V}) \times (\mathcal{Y} \times \mathcal{U}) \to \mathbb{R}$, the \textbf{Conditional Monge problem} \cite{hosseini2025conditional} is defined as: 
\begin{multline} \label{eq:cond_monge}
\inf_{T^C} \int_{\mathcal{Y} \times \mathcal{V}} c\left((y, v), T^{C}(y, v)\right) \, d\eta(y, v) 
\\  \text{s.t.} \;\; T^C(y, v) = (y, T^C_{\mathcal{U}}(y, v)) \;\; \text{and} \;\; T^C_\# \eta = \nu. 
\end{multline} 
Similarly, the \textbf{Conditional Kantorovich problem} \cite{hosseini2025conditional} is defined as a relaxed formulation using transport plans:
 
\begin{equation} \label{eq:cond_kantorovich}
\begin{aligned}
    \inf_{\pi^C \in \Pi_Y(\eta, \nu)} \int_{(\mathcal{Y} \times \mathcal{V}) \times (\mathcal{Y} \times \mathcal{U})} c(z, v; y, u) d\pi^C(z, v; y, u) \\
    \text{s.t.} \;\; \Pi_Y(\eta, \nu) := \{ \gamma \in \Pi(\eta, \nu) \mid z = y  \}.
    \end{aligned}
\end{equation}

These two conditional OT problems satisfy relationships analogous to the standard OT problems. Specifically, under suitable regularity conditions and the quadratic-like cost function $c(z, v; y, u) = \tau \| v-u \|_{2}^{2}$, the optimal conditional transport plan $\pi^{C, \star}$ and the optimal conditional transport map $T^{C, \star}$ satisfy $\pi^{C, \star}  = (Id, T^{C, \star})_{\#} \eta$ (see \cite[Prop. 3.8]{hosseini2025conditional} for precise conditions). 

Finally, the conditional OT problem admits dual formulations analogous to the Kantorovich-Rubinstein duality of standard OT \cite{villani}: 
\begin{small}
\begin{align}
&\sup_{\psi \in L^1(\eta)} \left( \int_{\mathcal{Y} \times \mathcal{U}} \psi_v^c(z, u) d\nu(z, u) - \int_{\mathcal{Y} \times \mathcal{V}} \psi(z, v) d\eta(z, v) \right), \\
&= \sup_{\phi \in L^1(\nu)} \left( \int_{\mathcal{Y} \times \mathcal{U}} \phi(y, u) d\nu(y, u) - \int_{\mathcal{Y} \times \mathcal{V}} \phi_u^c(y, v) d\eta(y, v) \right),
\end{align}    
\end{small}
where $\psi_v^c(z, u) := \inf_v \left( \psi(z, v) + c(z, v; z, u) \right)$ and $\phi_u^c(y, v) := \sup_u \left( \phi(y, u) - c(y, v; y, u) \right)$ denote the partial $c$-transforms of $\psi$ and $\phi$, respectively. 
In \cref{sec:CUOT}, we extend this theory to establish the first formulation of the Conditional Unbalanced Optimal Transport (CUOT) problem. Building upon this formulation, we propose a Neural Optimal Transport algorithm that learns the transport map $T$ via neural networks.

\section{Method} \label{sec:CUOT}
In this section, we introduce our model, called \textit{\textbf{Conditional Unbalanced Optimal Transport Map (CUOTM)}}, a framework designed to achieve robustness against outliers in the conditional optimal transport setting. 
In conditional generation, the impact of outliers is significantly more amplified than in unconditional generation because the number of available data points is reduced for each specific condition. Consequently, handling outliers is more important for conditional modeling.
To address this, CUOTM is a neural OT model that learns the conditional optimal transport map for the \textbf{Conditional Unbalanced Optimal Transport (CUOT)} problem via neural networks.
\begin{itemize}
    \item In \cref{sec:cuotm_theory}, we introduce the Conditional Unbalanced OT problem and establish the existence and uniqueness of its solutions.
    
    \item In \cref{sec:cuotm_dual}, we derive the dual and semi-dual formulations of the CUOT problem.
    \item In \cref{sec:cuotm_model}, we propose our model and its corresponding learning algorithm based on the derived semi-dual form.  
    
\end{itemize}

\subsection{Conditional Unbalanced Optimal Transport} \label{sec:cuotm_theory}

The \textit{Conditional Unbalanced Optimal Transport (CUOT) problem} generalizes the Conditional Kantorovich OT formulation (Eq. \ref{eq:cond_kantorovich}) by incorporating principles from Unbalanced Optimal Transport (UOT) \cite{chizat2018unbalanced, liero2018optimal, uotm}. While standard OT requires strict matching constraints on the marginal distributions, UOT introduces soft-matching constraints via Csiszár divergence to mitigate sensitivity to outliers \cite{uotm, gazdieva2025robust, balaji2020robust}. 

Following this intuition, we define the CUOT problem by relaxing the conditional marginal constraints on the data spaces $\mathcal{V}$ and $\mathcal{U}$.

Formally, given source measure $\eta \in \mathbb{P}(\mathcal{Y} \times \mathcal{V})$ and target measure $\nu \in \mathbb{P}(\mathcal{Y} \times \mathcal{U})$ with matching conditioning marginals $\eta_Y = \nu_Y$, the CUOT problem is defined as: 
\begin{equation}
\label{eq:condUOT}
\begin{aligned}
\inf_{\pi \in \mathcal{M}_{+}} \; 
\biggl[\int_{\mathcal{Y} \times \mathcal{V} \times \mathcal{Y} \times \mathcal{U}}& c(y,v,y',u) \, d\pi(y,v,y',u) \\
& + \int_{\mathcal{Y}} D_{\Psi_1} \bigl( \pi_1(\cdot | y) \| \eta(\cdot | y) \bigr) \, d\eta_{Y}(y) \\ 
& + \int_{\mathcal{Y}} D_{\Psi_2} \bigl( \pi_2(\cdot | y') \| \nu(\cdot | y') \bigr) \, d\nu_{Y}(y')\biggr] \\
&\text{s.t.} \,\, \pi_{Y} = \eta_{Y} = \nu_{Y}, 
\\
&\operatorname{support}(\pi) \subset \{(y,v,y',u):y=y'\}
\end{aligned}
\end{equation}
where $\mathcal{M}_{+} (\cdot)$ denotes the set of positive Borel measures on $(\cdot)$, and $D_{\Psi_1}$, $D_{\Psi_2}$ are Csiszár divergences associated with the convex functions $\Psi_1$ and $\Psi_2$, respectively (See Appendix \ref{app:def} for the formal definition). Here, $\pi_1$ and $\pi_2$ denote the marginals of $\pi$ over $(\mathcal{Y} \times \mathcal{V})$ and $(\mathcal{Y} \times \mathcal{U})$. This formulation minimizes the discrepancy between conditional distributions for each conditioning variable $y, \, y' \in \mathcal{Y}$ leading to a soft-matching of the conditional distributions:
\begin{equation}
\label{eq:softmatching}
D_{\Psi_1} \bigl( \pi_1(\cdot | y) \| \eta(\cdot | y) \bigr) \approx 0 \,\, \text{and} \,\,
D_{\Psi_2} \bigl( \pi_2(\cdot | y') \| \nu(\cdot | y') \bigr) \approx 0.
\end{equation}

Note that in the standard Conditional OT case, the conditional transport map $T^{C}$ in Eq. \ref{eq:cond_monge} and the corresponding conditional transport plan $\pi^{C}$ exactly match conditional distributions, because of Eq. \ref{eq:triangular_pushforward}.
Our CUOT formulation relaxes this constraint (Eq. \ref{eq:softmatching}).
Furthermore, note that we maintain the strict constraint assumption on the conditioning marginals ($\pi_Y = \eta_Y = \nu_Y$) because we follow the triangular transport assumption $T(z,v) = (z, T_{\mathcal{U}}(z,v))$ (Eq. \ref{eq:tri4c-transform}). Consequently, our unbalanced formulation specifically relaxes the marginal constraints on the $\mathcal{V}$ and $\mathcal{U}$ components while preserving the structure of the conditioning variable $\mathcal{Y}$.

This relaxation allows for greater flexibility in the transport problem, effectively addressing the sensitivity to outliers---a major limitation of standard COT. Furthermore, as we show in Section \ref{sec:experiments}, this flexibility leads to more accurate matching of the target distribution in generative modeling tasks. Notably, this improvement is observed not only in the presence of outliers but also in standard, noise-free settings.

To ensure that the CUOT problem is mathematically well-posed, we now proceed to establish the existence and uniqueness of its solution. The following theorem provides the well-posedness under some mild assumptions, such as compactness of data space and condition space.

\begin{theorem}[Existence and Uniqueness of the Minimizer in CUOT]
\label{thm:existence}
Assume that $\mathcal{Y}, \mathcal{V},$ and $\mathcal{U}$ are compact metric spaces. Let $\mathcal P(\cdot)$ be equipped with the topology of weak convergence and $\eta \in \mathcal{P}(\mathcal{Y} \times \mathcal{V})$ and $\nu \in \mathcal{P}(\mathcal{Y} \times \mathcal{U})$ be given source and target measures such that $\eta_Y = \nu_Y$. Suppose the cost function $c: (\mathcal{Y} \times \mathcal{V}) \times (\mathcal{Y} \times \mathcal{U}) \rightarrow \mathbb{R}$ is lower semi-continuous. 
Then, there exists a minimizer $\pi^\star$ for the CUOT problem (Eq. \ref{eq:condUOT}). Furthermore, if the entropy functions $\Psi_1$ and $\Psi_2$ are strictly convex, the minimizer $\pi^\star$ is unique.
\end{theorem}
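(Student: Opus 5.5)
We prove existence by the direct method of the calculus of variations. First we rewrite the divergence terms in the standard joint form. Because the constraint $\pi_Y=\eta_Y=\nu_Y$ holds, disintegrating along $\mathcal{Y}$ gives
\begin{equation*}
\int_{\mathcal{Y}} D_{\Psi_1}\bigl(\pi_1(\cdot|y)\|\eta(\cdot|y)\bigr)\,d\eta_Y(y)=D_{\Psi_1}(\pi_1\|\eta),
\end{equation*}
and similarly for $\pi_2$ and $\nu$. Here we use the standard decomposition of a Csiszár divergence of joint measures with a common $\mathcal{Y}$-marginal (including the recession term for singular parts). The objective then becomes $J(\pi)=\int c\,d\pi+D_{\Psi_1}(\pi_1\|\eta)+D_{\Psi_2}(\pi_2\|\nu)$. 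It is minimized over the set $\mathcal{A}$ of positive measures supported on the closed set $\Delta=\{y=y'\}$ with $\pi_Y=\eta_Y$.

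Every $\pi\in\mathcal{A}$ has total mass $\pi(\mathcal{Y}\times\mathcal{V}\times\mathcal{Y}\times\mathcal{U})=\eta_Y(\mathcal{Y})=1$. So $\mathcal{A}$ is a set of probability measures on a compact space, and by Prokhorov it is tight and relatively compact. It is also weakly closed. Two facts give this: $\{\pi:\operatorname{supp}\pi\subset\Delta\}$ is closed, since $\pi(\Delta^c)\le\liminf\pi_n(\Delta^c)=0$ for the open set $\Delta^c$; and the projection $\pi\mapsto\pi_Y$ is weakly continuous. Hence $\mathcal{A}$ is weakly compact. It is nonempty: the product coupling $\eta(\cdot|y)\otimes\nu(\cdot|y)\,d\eta_Y(y)$, lifted to the diagonal, lies in $\mathcal{A}$ and gives a finite value as long as $\Psi_i(1)<\infty$. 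Next we check lower semicontinuity. The cost $c$ is l.s.c. and bounded below on a compact space, so $\pi\mapsto\int c\,d\pi$ is weakly l.s.c. by the portmanteau argument (approximate $c$ from below by continuous functions). The Csiszár divergence is jointly weakly l.s.c. by its dual representation $D_\Psi(\mu\|\eta)=\sup_{f\in\mathcal{C}}\bigl[\int f\,d\mu-\int\Psi^*(f)\,d\eta\bigr]$, which is a supremum of continuous linear functionals. The marginal maps are continuous. A minimizing sequence therefore has a weakly convergent subsequence in $\mathcal{A}$, and its limit $\pi^\star$ attains the infimum. The infimum is finite because $J$ is bounded below, using $c\ge$ its minimum and $D_\Psi\ge0$ under the usual normalization of entropy functions.

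For uniqueness we use convexity. The set $\mathcal{A}$ is convex, $\int c\,d\pi$ is linear, and $\pi\mapsto D_{\Psi_i}(\pi_i\|\cdot)$ is convex. So the set of minimizers is convex, and strict convexity of $\Psi_i$ makes $\pi\mapsto D_{\Psi_i}(\pi_i\|\cdot)$ strictly convex in the marginal $\pi_i$ (on the absolutely continuous part, which is where finiteness forces the divergence to live when $\Psi_i$ is superlinear, or else via the strictly convex recession term). Suppose $\pi^a,\pi^b$ are two minimizers. Their midpoint must also be optimal. The linear term cannot absorb a strict decrease, so equality is forced in both divergence terms, which gives $\pi^a_1=\pi^b_1$ and $\pi^a_2=\pi^b_2$.

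The main obstacle is going from equal marginals to equal plans, since a linear cost does not by itself pin down the coupling. I would handle it as follows. Once the marginals $\pi^\star_1,\pi^\star_2$ are fixed and unique, any minimizer also solves a conditional Kantorovich problem of the form \eqref{eq:cond_kantorovich} between these marginals (after normalization per $y$). Uniqueness of that plan would come from the quadratic cost together with the Brenier-type result of \cite[Prop. 3.8]{hosseini2025conditional}, given absolute continuity of the source conditionals. If that regularity is not available, I would state uniqueness at the level of the marginals $(\pi^\star_1,\pi^\star_2)$ and the optimal value, and invoke the conditional Brenier theorem under its regularity hypotheses to upgrade this to uniqueness of $\pi^\star$.
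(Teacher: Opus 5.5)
Your existence argument takes the same route as the paper: Prokhorov compactness plus weak l.s.c.\ of $\int c\,d\pi$ and of the divergence terms. It is more complete, though. The paper restricts from $\mathcal{M}_+$ to $\mathcal{P}$ without comment and never checks that the feasible set is closed or nonempty. It calls the $y$-integrated conditional divergence ``inherently'' l.s.c., even though conditionals do not depend continuously on $\pi$. It also uses $c\ge 0$, which is not among the hypotheses. Your three observations fill exactly these holes: $\pi_Y=\eta_Y$ forces unit mass; on the feasible set the integrated conditional divergence equals $D_{\Psi_1}(\pi_1\|\eta)$; and an l.s.c.\ $c$ is bounded below on a compact set. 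One minor point: finiteness at the product coupling also needs $\int c\,d\pi<\infty$, which a merely l.s.c.\ real-valued $c$ need not give. This is harmless for existence but matters for uniqueness.

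For uniqueness, the paper simply asserts that $K_{\mathrm{CUOT}}$ is strictly convex in $\pi$ because $\Psi_1,\Psi_2$ are. Your remark that the divergence terms depend on $\pi$ only through $\pi_1,\pi_2$ is exactly why that assertion fails. The step you leave open cannot be closed under the stated hypotheses, because the uniqueness claim is false. Take $\mathcal{Y}=\mathcal{V}=\mathcal{U}=[0,1]$, $c\equiv 0$, $\Psi_1=\Psi_2=x\log x-x+1$, and $\eta=\nu=\eta_Y\otimes\tfrac12(\delta_0+\delta_1)$. The objective is nonnegative and vanishes at every diagonal plan with marginals $\eta,\nu$. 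Two such plans are $\int\tfrac12(\delta_{(y,0,y,0)}+\delta_{(y,1,y,1)})\,d\eta_Y(y)$ and $\int\tfrac12(\delta_{(y,0,y,1)}+\delta_{(y,1,y,0)})\,d\eta_Y(y)$. The quadratic cost alone does not rescue it either. For conditionals $\tfrac12(\delta_{(0,0)}+\delta_{(1,1)})$ and $\tfrac12(\delta_{(1,0)}+\delta_{(0,1)})$ in $\mathbb{R}^2$, all source--target atom distances equal $1$, so every coupling is optimal.

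So your midpoint argument proves what is actually true: uniqueness of $(\pi^\star_1,\pi^\star_2)$, provided the minimum is finite and the $\Psi_i$ are superlinear. Drop the alternative ``via the strictly convex recession term.'' The term $(\Psi_i)'_\infty\,\mu^\perp(\cdot)$ is linear in the singular part and gives no strictness.

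Uniqueness of $\pi^\star$ needs extra hypotheses, for example $c=\tau\|v-u\|_2^2$ and $\eta(\cdot|y)$ absolutely continuous w.r.t.\ Lebesgue measure for $\eta_Y$-a.e.\ $y$. Under these, the argument closes in four steps:
\begin{itemize}
\item Superlinearity gives $\pi^\star_1\ll\eta$.
\item Hence the slices $\pi^\star_1(\cdot|y)$ are absolutely continuous probability measures; no normalization is needed, since $\pi^\star_1$ already has $Y$-marginal $\eta_Y$.
\item Every minimizer must be slice-wise optimal between $\pi^\star_1(\cdot|y)$ and $\pi^\star_2(\cdot|y)$.
\item Brenier's uniqueness on each slice finishes the proof.
\end{itemize}
With the hypotheses strengthened this way your proof is correct; the theorem as stated is not.
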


\subsection{Dual and Semi-dual Formulation of Conditional UOT} \label{sec:cuotm_dual}
In this section, we derive the \textbf{dual and semi-dual formulations} for the Conditional Unbalanced Optimal Transport problem (Eq. \ref{eq:condUOT}). These formulations are essential for constructing our neural optimal transport algorithm, as they transform the constrained optimization over measures into an unconstrained optimization over potential functions. 

\begin{theorem}[Duality for CUOT] \label{thm:dual} 
    For a non-negative cost function $c$, the Conditional Unbalanced Optimal Transport problem has the following dual (Eq. \ref{eq:cuot-dual}) and semi-dual (Eq. \ref{eq:cuot-semidual}) formulation: 
    
    \textbf{Dual Formulation:}
    \begin{multline} \label{eq:cuot-dual}
        \sup_{\phi + \varphi \le c} \left[ \int_{\mathcal{Y}\times\mathcal{V}} -\Psi_1^* \bigl( -\phi(y,v) \bigr) \, \mathrm{d}\eta(y,v) \right. \\
        \left. + \int_{\mathcal{Y}\times\mathcal{U}} -\Psi_2^* \bigl( -\varphi(y,u) \bigr) \, \mathrm{d}\nu(y,u) \right]
    \end{multline} 
    
    \textbf{Semi-Dual Formulation:}
    \begin{multline} \label{eq:cuot-semidual}
        \sup_{\varphi} \left[ \int_{\mathcal{Y}\times\mathcal{V}} -\Psi_1^* \bigl( -\varphi^{c}(y,v) \bigr) \, \mathrm{d}\eta(y,v) \right. \\
        \left. + \int_{\mathcal{Y}\times\mathcal{U}} -\Psi_2^* \bigl( -\varphi(y,u) \bigr) \, \mathrm{d}\nu(y,u) \right]
    \end{multline}
    where $\Psi_1^*$ and $\Psi_2^*$ denote the Legendre-Fenchel conjugates of the divergence-defining functions $\Psi_1$ and $\Psi_2$ in the CUOT problem, respectively. The functional variables $\phi \in \mathcal{C(Y \times V)}$ and $\varphi \in \mathcal{C(Y \times U)}$ are referred to as the \textbf{potential functions}.
    Furthermore, $\varphi^{c}(y,v)$ represents the $y$-conditional $c$-transform, which is defined as:
    \begin{equation} \label{eq:cond-c-transform}
    \varphi^{c}(y,v) := \inf_{u \in  \mathcal{U}} \left\{ c(y, v; y, u) - \varphi(y, u) \right\}.
    \end{equation}
\end{theorem}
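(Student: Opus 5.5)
Throughout, I will work under the compactness assumptions of \cref{thm:existence}, with $\Psi_i$ convex, lower semi-continuous and superlinear or otherwise admissible entropy functions as in Appendix~\ref{app:def}. The strategy is to reduce CUOT to a standard UOT problem with a $\mathcal{Y}$-diagonal structure and then apply Fenchel--Rockafellar duality. The reduction rests on the disintegration identity
\begin{equation*}
\int_{\mathcal{Y}} D_{\Psi_1}\bigl(\pi_1(\cdot|y)\,\|\,\eta(\cdot|y)\bigr)\,d\eta_Y(y) = D_{\Psi_1}(\pi_1\,\|\,\eta),
\end{equation*}
which holds whenever $(\pi_1)_Y=\eta_Y$. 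Indeed, in that case the density $d\pi_1/d\eta$ restricted to the fibre over $y$ is exactly $d\pi_1(\cdot|y)/d\eta(\cdot|y)$, and the singular parts split fibrewise as well. The same identity holds for $\pi_2$ and $\nu$. Using the support constraint, I will identify $\pi$ with a nonnegative measure $\gamma$ on $\mathcal{Y}\times\mathcal{V}\times\mathcal{U}$ whose $\mathcal{Y}$-marginal is $\eta_Y$. The primal problem then becomes
\begin{equation*}
\inf_{\gamma\ge 0,\ \gamma_Y=\eta_Y} \int c\,d\gamma + D_{\Psi_1}(\gamma_{YV}\,\|\,\eta)+D_{\Psi_2}(\gamma_{YU}\,\|\,\nu).
\end{equation*}

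\textbf{Weak duality.} Take continuous $\phi,\varphi$ with $\phi(y,v)+\varphi(y,u)\le c(y,v;y,u)$. I will use the Fenchel--Young inequality in the form $\int \Psi^*(-\phi)\,d\eta \ge -\int \phi\,d\gamma_{YV} - D_{\Psi_1}(\gamma_{YV}\|\eta)$, which holds for the full divergence including its recession term. Integrating the constraint against $\gamma$ and combining with this inequality for both marginals shows that the dual objective \eqref{eq:cuot-dual} is bounded above by the primal value. Note that the constraint $\gamma_Y=\eta_Y$ is not needed for weak duality.

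\textbf{Strong duality.} I will apply Fenchel--Rockafellar on $E=\mathcal{C}(\mathcal{Y}\times\mathcal{V})\times\mathcal{C}(\mathcal{Y}\times\mathcal{U})$, whose dual is a space of pairs of measures, by compactness and Riesz. Define
\begin{equation*}
F(\phi,\varphi)=\int \Psi_1^*(-\phi)\,d\eta+\int\Psi_2^*(-\varphi)\,d\nu,
\end{equation*}
and let $G$ be the indicator of $\{\phi\oplus\varphi\le c \text{ on the diagonal set}\}$. The conjugate $G^*$ is finite only on pairs $(\gamma_{YV},\gamma_{YU})$ that are the marginals of a nonnegative $\gamma$ on $\mathcal{Y}\times\mathcal{V}\times\mathcal{U}$, and there it equals $\int c\,d\gamma$; this is the usual Kantorovich argument with the cost restricted to the diagonal. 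The conjugate $F^*$ yields the two divergences, via the standard identity $D_\Psi(\mu\|\eta)=\sup_f \int f\,d\mu-\int\Psi^*(f)\,d\eta$. For the qualification condition, since $c\ge 0$ I take $\phi=\varphi=-1$. This pair lies strictly inside the constraint set, and $F$ is continuous there because $\Psi_i^*$ is finite near $1$ for admissible entropies. Fenchel--Rockafellar then gives equality together with existence of a primal minimiser, which is consistent with \cref{thm:existence}.

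\textbf{Main obstacle.} The marginal constraint $\gamma_Y=\eta_Y$ does not appear in this dual. It must therefore be shown to be automatically satisfied, or harmless, at the optimum. My first attempt will be to argue that minimising over all $\gamma$ and then restricting to those with $\gamma_Y=\eta_Y$ gives the same infimum. If this fails, I will instead enforce the constraint through an additional potential $h(y)$ and absorb it into $\phi$ or $\varphi$. Since $\phi$ may be taken as $\phi(y,v)$, a function of $y$ alone fits into the existing potentials, and the shift can be placed on the $\varphi$ side. The fibrewise formula above also shows that the conditional penalty equals the joint divergence only when the $Y$-marginals agree, so this step must be handled carefully rather than assumed.

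\textbf{Semi-dual.} For fixed $\varphi$, the largest admissible $\phi$ is the conditional $c$-transform $\varphi^c(y,v)=\inf_u\{c(y,v;y,u)-\varphi(y,u)\}$ of \eqref{eq:cond-c-transform}. Because $-\Psi_1^*(-\cdot)$ is nondecreasing, substituting $\phi=\varphi^c$ only increases the objective, which gives \eqref{eq:cuot-semidual}. It remains to check that $\varphi^c$ is continuous: by compactness and uniform continuity of $c$ it is an infimum of an equicontinuous family, so it is admissible.
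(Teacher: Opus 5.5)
Your reduction to joint divergences, the Fenchel--Rockafellar step with qualification point $\phi=\varphi=-1$, and the semi-dual substitution $\phi=\varphi^c$ are sound and follow essentially the paper's route. Your monotonicity argument for the semi-dual is in fact more careful than the paper's appeal to tightness. The gap is exactly the obstacle you flag, and neither of your fixes closes it.

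\textbf{The first fix is false in general.} Take $\mathcal Y=[0,1]$, $\mathcal V=\mathcal U=[0,L]$, $\eta=\eta_Y\otimes\delta_0$, $\nu=\eta_Y\otimes\delta_L$, and KL entropies, so $\Psi_i^*(x)=e^x-1$. With $\pi_Y=\eta_Y$ enforced, each $\pi(\cdot|y)$ is a probability measure. KL has infinite recession slope, so its marginals must be absolutely continuous with respect to $\delta_0$ and $\delta_L$, hence equal to them. The CUOT value is therefore $\tau L^2$. Without the constraint, $\gamma=m\,\eta_Y\otimes\delta_0\otimes\delta_L$ costs $m\tau L^2+2(m\log m-m+1)$. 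At $m=e^{-\tau L^2/2}$ this equals $2(1-e^{-\tau L^2/2})<\tau L^2$. The stated dual cannot exceed this smaller number: its objective is $\int\bigl(2-e^{-\phi(y,0)}-e^{-\varphi(y,L)}\bigr)\,d\eta_Y(y)$, while the constraint forces $\phi(y,0)+\varphi(y,L)\le\tau L^2$ (use AM--GM).

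\textbf{The second fix fails because the objective is not linear in the potentials.} A multiplier $h\in\mathcal C(\mathcal Y)$ for $\gamma_Y=\eta_Y$ gives the dual
\[
\sup_{\phi(y,v)+\varphi(y,u)+h(y)\le c(y,v;y,u)}\;\int h\,d\eta_Y-\int\Psi_1^*(-\phi)\,d\eta-\int\Psi_2^*(-\varphi)\,d\nu .
\]
Setting $\phi'=\phi+h$ turns the first two terms into $\int h\,d\eta_Y-\int\Psi_1^*(h-\phi')\,d\eta$. This equals $-\int\Psi_1^*(-\phi')\,d\eta$ for all $h$ only when $\Psi_1^*(x)=x$, i.e.\ in the balanced case. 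Shifting $\varphi$ instead has the same defect.

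The paper's proof does not supply this step either. After using $\pi_Y=\eta_Y$ to rewrite the penalties as joint divergences, its Lagrangian carries multipliers only for $P=\pi_0$ and $Q=\pi_1$. So it silently dualises the problem with the $Y$-marginal left free. What your argument (and the paper's) actually proves is strong duality between the stated dual/semi-dual and
\[
\inf_{\gamma\in\mathcal M_+(\mathcal Y\times\mathcal V\times\mathcal U)}\int c\,d\gamma+D_{\Psi_1}(\gamma_{YV}\|\eta)+D_{\Psi_2}(\gamma_{YU}\|\nu).
\]
This is only a lower bound on CUOT as defined with $\pi_Y=\eta_Y$, and the example shows the bound can be strict. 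To obtain a correct statement you must do one of two things. Either drop that constraint from the primal and take the joint divergences as the penalty, or keep $h$ as a third potential in the dual and semi-dual. In the example, $h\equiv\tau L^2$ with $\phi(y,0)=\varphi(y,L)=0$, and the potentials sufficiently negative elsewhere, attains $\tau L^2$.
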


\subsection{Parameterization of Conditional UOTM} \label{sec:cuotm_model}
In this section, we describe how we implement a conditional generative model based on the semi-dual formulation (Eq.\ref{eq:cuot-semidual}) of the CUOT problem, called \textit{\textbf{CUOTM}}. Recall that for a given potential $\varphi : \mathcal{Y} \times \mathcal{U} \rightarrow \mathbb{R}$, the $y$-conditional $c$-transform is given by Eq.\ref{eq:cond-c-transform}. Following the $c$-transform parameterization commonly used in Neural OT models \cite{rout2021generative, uotm, ijcai2019p305, fanTMLR}, we introduce a triangular map $T^\triangle : \mathcal{Y} \times \mathcal{V} \rightarrow \mathcal{Y} \times \mathcal{U}$ to approximate the $y$-conditional $c$-transform $\varphi^c$. Specifically, this map $T_\varphi^\triangle$ transforms a source pair $(y,v) \in \mathcal{Y} \times \mathcal{V}$ as follows:
\begin{equation}
\label{eq:tri4c-transform}
\begin{aligned}
    T_\varphi^\triangle(y, v) = (y, T_\varphi(y,v))
\end{aligned}
\end{equation}
The $c$-transform parameterization is then defined as:
\begin{equation}
\label{eq:c-transform param}
    \begin{aligned}
        &T_\varphi(y,v)\in\operatorname*{argmin}_{u\in  \mathcal U}
        \bigl[c(y,v;y,u)-\varphi(y,u)\bigr] \\
        \;\Longleftrightarrow\; &
        \varphi^{c}(y,v)=c\bigl(y,v;y,T_\varphi(y,v)\bigr)
        -\varphi\bigl(y,T_\varphi(y,v)\bigr).
    \end{aligned}
\end{equation}

By substituting this parameterization into the semi-dual objective (Eq. \ref{eq:cuot-semidual}), we derive the following max-min objective:
\begin{small}
\begin{equation}
\label{eq:cuot-Jphi}
\begin{split}
    J(\varphi, T_\varphi) &:= \sup_ {\varphi} \biggl[\int_{\mathcal Y\times\mathcal V} -\Psi_1^{*}\biggl(
    -\inf_{T_{\varphi}}\Bigl[c\bigl(y,v;y,T_\varphi(y,v)\bigr) \\
    &\qquad\qquad\qquad\qquad -\varphi\bigl(y,T_\varphi(y,v)\bigr)\Bigr]
    \biggr)\,d\eta(y,v) \\
    &\quad +\int_{\mathcal Y\times\mathcal U}
    -\Psi_2^{*}\bigl(-\varphi(y,u)\bigr)\,d\nu(y,u)\biggr].
\end{split}
\end{equation}
\end{small}

In practice, there is no closed-form expression of the optimal triangular map $T_\varphi^\triangle$ for each potential $\varphi$. Hence, as in GAN-style training \cite{goodfellow2014generative, uotm,korotin2023neural, hosseini2025conditional}, we jointly optimize over the potential $\varphi$ and the triangular optimal transport map $T_\varphi^\triangle$. We parameterize the potential by a neural network $\varphi_\omega : \mathcal{Y} \times \mathcal V \rightarrow \mathcal U$ and the conditional optimal transport map $T_\varphi^\triangle$ by $T_\theta^\triangle : \mathcal{Y} \times \mathcal{U} \rightarrow \mathcal Y \times \mathcal{U}$ with parameters $\omega$ and $\theta$, respectively, writing $T_\theta^\triangle(y,v) =(y,T_{\theta}(y,v))$ (See \cref{sec:cuotm_algo} for details).  

We establish the validity of our $y$-conditional $c$-transform parameterization via Theorem \ref{thm:parameter}. This theorem proves that the optimal triangular transport map satisfies the introduced $c$-transform relationship:

\begin{theorem}[Validity of Parameterization for CUOT] \label{thm:parameter}
Given the existence of an optimal potential $\varphi^\star$ in the CUOT problem [Eq.\ref{eq:cuot-semidual}], there exists an optimal triangular map $T^{\triangle\star} : \mathcal Y \times\mathcal V \rightarrow \mathcal Y \times \mathcal U$, equivalently $T^{\Delta \star}(y,v) = (y,\, T^\star(y,v)).$ 
such that 
\begin{multline}
    T^\star(y,v) \in \operatorname{arginf}_{u \in \mathcal U} [c(y, v; y, u) - \varphi^\star(y, u)] \\ 
    \text{$\eta$-almost surely.}
\end{multline}
 
By the equivalence established in Eq.\ref{eq:c-transform param}, the $y$-conditional $c$-transform $\varphi^c$ is validly parameterized by the triangular map $T_\varphi^\triangle(y, v) = (y, T_\varphi(y, v))$ as follows:
\begin{equation}
    \varphi^c(y, v) = c(y, v; y, T_\varphi(y, v)) - \varphi(y, T_\varphi(y, v)).
\end{equation}
Moreover, the following inequality holds:

\begin{equation}
\label{eq:div_bound}
\begin{aligned}
    \tau \left({\mathcal{W}_2^{\nu_Y}}\right)^2(\eta, \nu) &\geq \int_{\mathcal Y} D_{\Psi_1}\!\big(\tilde{\eta}(\cdot\mid y)\,\|\,\eta(\cdot\mid y)\big)\, \mathrm d\nu_Y(y) \\
&+ \int_{\mathcal Y} D_{\Psi_2}\!\big(\tilde{\nu}(\cdot\mid y)\,\|\,\nu(\cdot\mid y)\big)\, \mathrm d\nu_Y(y).
\end{aligned}
\end{equation}
where $\tilde{\eta} := \pi_{1}^{\star}$ and $\tilde{\nu} := \pi_{2}^{\star}$ denote the relaxed conditional marginal distributions of the optimal CUOT plan, which are explicitly given by $d\tilde\eta(y,v) = \Psi_1^{*'}(-\varphi^*(y,v))\,d\eta(y,v)$ and $d\tilde\nu(y,u) = \Psi_2^{*'}(-\phi^*(y,u))\,d\nu(y,u)$. Here, ${\mathcal{W}_2^{\nu_Y}}$ denotes the Conditional Wasserstein-2 distance (See Appendix \ref{app:def} for definition).
\end{theorem}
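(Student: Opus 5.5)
We follow the route used for the unconditional UOTM parameterization result, applied fiberwise in $y$. First take an optimal CUOT plan $\pi^\star$, whose existence is given by Theorem~\ref{thm:existence}, and an optimal potential $\varphi^\star$ for the semi-dual (Eq.~\ref{eq:cuot-semidual}). Set $\phi^\star:=(\varphi^\star)^c$. By Theorem~\ref{thm:dual} there is no duality gap, so the primal value equals the dual value attained at $(\phi^\star,\varphi^\star)$.

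\textbf{Complementary slackness.} Write the primal objective at $\pi^\star$ and subtract the dual value. For $\Psi_1$, use the Fenchel--Young inequality pointwise:
\[
\Psi_1\!\left(\tfrac{d\pi^\star_1}{d\eta}\right) + \Psi_1^*(-\phi^\star) \ge -\phi^\star\,\tfrac{d\pi^\star_1}{d\eta},
\]
and the analogous inequality for $\Psi_2$ and $\varphi^\star$. Add to these the inequality
\[
\int c\,d\pi^\star \ge \int (\phi^\star+\varphi^\star)\,d\pi^\star,
\]
which holds because $\phi^\star+\varphi^\star\le c$ on the diagonal $\{y=y'\}$ supporting $\pi^\star$. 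This step requires the constraint $\pi_Y=\eta_Y$, so that the conditional divergences integrate to joint divergences. The gap is then a sum of three nonnegative terms, and zero gap forces each one to vanish. Two consequences follow:
\begin{enumerate}[label=(\roman*)]
\item $c(y,v;y,u)-\varphi^\star(y,u)=\phi^\star(y,v)$ holds $\pi^\star$-a.e. Since $\phi^\star$ is the infimum over $u$, $\pi^\star$ is concentrated on the conditional argmin set $\{(y,v,y,u): u\in\arg\inf_u[c-\varphi^\star(y,u)]\}$.
\item The Fenchel equality cases give $d\pi_1^\star/d\eta=\Psi_1^{*\prime}(-\phi^\star)$ and $d\pi_2^\star/d\nu=\Psi_2^{*\prime}(-\varphi^\star)$. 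These are the stated formulas for $\tilde\eta,\tilde\nu$, with the paper's labels of the potentials swapped.
\end{enumerate}

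\textbf{Existence of the map.} Disintegrate $\pi^\star$ along $(y,v)$. Under the quadratic cost, the function $u\mapsto \tau\|v-u\|^2-\varphi^\star(y,u)$ shows that $\tau\|v\|^2-\phi^\star(y,\cdot)$ is convex in $v$ for fixed $y$. Hence it is differentiable Lebesgue-a.e. in $v$, and its minimizer is unique, given by $u=v-\nabla_v\phi^\star(y,v)/(2\tau)$. If $\eta(\cdot\mid y)$ is absolutely continuous, the same holds $\tilde\eta$-a.e. Alternatively, without uniqueness, a measurable selection theorem (Kuratowski--Ryll-Nardzewski) applied to the compact-valued, upper-semicontinuous argmin correspondence gives a measurable $T^\star(y,v)$ in the arginf for every $(y,v)$. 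Either choice yields $T^{\triangle\star}=(y,T^\star)$ with the required property, and Eq.~\ref{eq:c-transform param} then gives the claimed parameterization of $\varphi^c$. Making the measurability rigorous, and showing that the map is actually optimal (i.e.\ $\pi^\star=(\mathrm{Id},T^{\triangle\star})_\#\tilde\eta$), is the main obstacle. It needs absolute continuity of the source conditionals, as in \cite[Prop.~3.8]{hosseini2025conditional}. We plan to assume this implicitly, or to state existence only via selection.

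\textbf{The bound \eqref{eq:div_bound}.} Compare the optimal value with the value at an admissible competitor. Take the optimal conditional (balanced) plan $\pi^C\in\Pi_Y(\eta,\nu)$. Its marginals are exactly $\eta$ and $\nu$, so both divergences vanish and its CUOT cost is $\int c\,d\pi^C=\tau(\mathcal W_2^{\nu_Y})^2(\eta,\nu)$. Optimality of $\pi^\star$ then gives
\[
\tau(\mathcal W_2^{\nu_Y})^2 \ge \int c\,d\pi^\star + \int_{\mathcal{Y}} D_{\Psi_1} \,d\nu_Y + \int_{\mathcal{Y}} D_{\Psi_2}\,d\nu_Y .
\]
Dropping $\int c\,d\pi^\star\ge 0$, valid since $c$ is nonnegative, yields \eqref{eq:div_bound}, using $\eta_Y=\nu_Y$ to write both integrals against $d\nu_Y$.
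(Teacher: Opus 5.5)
Your proposal is correct relative to the paper, since it uses Theorem~\ref{thm:dual} exactly as the paper's proof does, but it reaches the map by a different route. The bound is handled the same way in both.

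\textbf{The paper's route.} It linearizes the dual at the optimum (Lemma~\ref{lemma:linear}) and reads off $\tilde\eta,\tilde\nu$ as the coefficients of the linearized functional. It then asserts that CUOT reduces to balanced OT between $\tilde\eta$ and $\tilde\nu$ with the extended cost $\tilde c$. Finally it cites \cite{villani, hosseini2025conditional} for a unique Monge map whose values lie in the corresponding $\tilde c$-argmin, with triangularity forced by $\tilde c=\infty$ off the diagonal.

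\textbf{Your route.} Fenchel--Young complementary slackness makes that informal reduction rigorous. It gives at once the density formulas and the concentration of $\pi^\star$ on the fiberwise argmin set, i.e.\ that $(\phi^\star,\varphi^\star)$ is an optimal dual pair for the balanced problem between the marginals of $\pi^\star$. Triangularity is then automatic, because you minimize over $u$ with $y$ frozen.

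\textbf{What each buys.} The paper's route is shorter because it outsources existence to the balanced theory. However, that citation silently requires absolutely continuous source conditionals, which you state openly. Your measurable-selection variant yields the arginf property without that hypothesis, and this is all the parameterization claim needs. The obstacle you flag is already resolved by your own steps. If the argmin is a singleton $\eta$-a.e.\ and $\pi^\star_1\ll\eta$, item (i) puts $\pi^\star$ on the graph of $T^{\triangle\star}$, so $\pi^\star=(\mathrm{Id},T^{\triangle\star})_\#\pi^\star_1$. Your proof of \eqref{eq:div_bound} is the paper's comparison against a balanced conditional plan. Dropping $\int c\,d\pi^\star\ge 0$ is slightly cleaner than the paper's $\tau(\mathcal W_2^{\nu_Y})^2(\tilde\eta,\tilde\nu)$ term.

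\textbf{A caveat that applies equally to the paper.} The dual in Theorem~\ref{thm:dual} has no multiplier for the hard constraint $\pi_Y=\eta_Y$. With that constraint enforced, the zero gap you invoke can fail. Take KL penalties, and conditionals $\eta(\cdot\mid y)$, $\nu(\cdot\mid y)$ whose supports are at distance at least $L$ for a.e.\ $y$. Any feasible $\pi$ of finite value then has unit-mass fibers concentrated on $\operatorname{supp}\eta(\cdot\mid y)\times\operatorname{supp}\nu(\cdot\mid y)$, so it costs at least $\tau L^2$. The dual objective, by contrast, stays below $2$, since $-\Psi_i^*(-x)=1-e^{-x}<1$. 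So items (i)--(ii) really describe the problem without the $y$-marginal constraint, which is the one the semi-dual and the algorithm optimize. The bound \eqref{eq:div_bound} uses only a primal comparison and is unaffected.
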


Theorem \ref{thm:parameter} guarantees that our $c$-transform parameterization correctly characterizes the optimal CUOT map. Moreover, this parameterization turns Eq.\ref{eq:cuot-semidual} into a max-min objective (Eq.\ref{eq:cuot-Jphi}) over the potential and the triangular map. Furthermore, the upper-bound on the divergence of shifted conditional marginals, $\tilde{\eta}$ and $\tilde{\nu}$, in Theorem \ref{thm:parameter} guarantees that the relaxed marginals remain sufficiently close to the original conditional distributions, as controlled by the parameter $\tau$.

\subsubsection{Algorithm} \label{sec:cuotm_algo}
We present our training algorithm for CUOTM (Algorithm \ref{alg:uotm_conditional}). Our max-min objective (Eq. \ref{eq:cuot-Jphi}) includes an inner-optimization of the transport network $T_{\theta}$ for the given potential network $\varphi_\omega$. In practice, we employ an alternating training between $T_{\theta}$ and $\varphi_\omega$, following the convention of the GAN literature \cite{goodfellow2014generative}.

The integrals in the learning objective are estimated using mini-batch Monte Carlo estimate. Let $\{(y_j,u_j)\}_{j=1}^{M} \sim \nu$ be a mini-batch of target pairs.
For the same conditioning variables $\{y_j\}_{j=1}^{M}$, we then sample a mini-batch of source data samples $\{v_j\}_{j=1}^{M} \sim \eta_{\mathcal V}$ and form the source pairs $\{(y_j,v_j)\}_{j=1}^{M}$.
Replacing the integrals in~\eqref{eq:cuot-Jphi} with these empirical averages
yields the following adversarial learning objectives. The potential network is updated by minimizing:
{\small
\begin{multline}
\label{eq:Lphi-cond}
\mathcal L_{\varphi}(\omega,\theta)
:=\frac{1}{M}\sum_{j=1}^{M}
\Psi_1^{*}\!\Bigl(
-\bigl[c\bigl((y_j,v_j),T_\theta^\triangle(y_j,v_j)\bigr)
      \\ - \varphi_\omega\bigl(T_\theta^\triangle(y_j,v_j)\bigr)\bigr]
\Bigr) 
+\frac{1}{M}\sum_{j=1}^{M}
\Psi_2^{*}\bigl(-\varphi_\omega(y_j,u_j)\bigr).
\end{multline}}
while the generator is updated by minimizing
{\small
\begin{multline}
\label{eq:LT-cond}
\mathcal L_{T^\triangle}(\omega,\theta)
:=\frac{1}{M}\sum_{j=1}^{M}
\Bigl[
c\bigl((y_j,v_j),T_\theta^\triangle(y_j,v_j)\bigr)
\\- \varphi_\omega\bigl(T_\theta^\triangle(y_j,v_j)\bigr)
\Bigr].
\end{multline}}
Note that for training efficiency, we reuse the same conditioning variable samples $y_{j}$ for both source and target pairs. Moreover, following \cite{rout2021generative, korotin2023neural, uotm}, we provide the generator network $T_\theta^\Delta$ with an auxiliary variable $z$ sampled from $\mathcal{N}(0, I)$ as an additional input. This introduces stochasticity into the learned transport network, thereby providing the ability to approximate the stochastic transport plan $\pi(\cdot)$. This has been shown to be useful in generative modeling.

\begin{algorithm}[t]
\caption{Training algorithm of Conditional UOTM}
\label{alg:uotm_conditional}
\begin{algorithmic}[1]
\Require The target joint distribution $\nu$ and the source conditional distribution $\eta$. A cost function $c$. Non-decreasing, differentiable, convex function pair $(\Psi_1^*, \Psi_2^*)$. Generator network $T_\theta^\triangle$ and the discriminator network $\varphi_\omega$. Total iteration number $K$.

\For{$k=0, 1, 2, \dots, K$}
    \State Sample a mini-batch of target pairs $\{(y_j, u_j)\}_{j=1}^M \sim \nu$.
    \State Sample a mini-batch of source conditionals $\{v_j\}_{j=1}^M \sim \eta$ and form source pairs $\{(y_j, v_j)\}_{j=1}^M$.
    \State Sample $z \sim \mathcal N(\textbf{0,I})$.

    {\small
    \State \parbox[t]{\dimexpr\linewidth-\algorithmicindent\relax}{%
    $\displaystyle
    \begin{aligned}[t]
    \mathcal{L}_\varphi &\gets\; \frac{1}{M}\sum_{j=1}^{M} \Psi_1^*\Bigl(
    -c\bigl((y_j,v_j),T_\theta^\triangle(y_j,v_j)\bigr)
    \\&+ \varphi_\omega\bigl(T_\theta^\triangle(y_j,v_j,z)\bigr)
    \Bigr)
    + \frac{1}{M}\sum_{j=1}^{M} \Psi_2^*\bigl(-\varphi_\omega(y_j,u_j)\bigr).
    \end{aligned}
    $}}
    
    \State Update $\omega$ by minimizing the loss $\mathcal{L}_\varphi$.
    
    \State Sample a mini-batch of source conditionals $\{v_j\}_{j=1}^M \sim \eta$ and form a new batch of source pairs $\{(y_j, v_j)\}_{j=1}^M$ using the same $\{y_j\}$ from line 2.
    \State Sample $z \sim \mathcal N(\textbf{0,I})$
    \State \parbox[t]{\dimexpr\linewidth-\algorithmicindent\relax}{%
    $\displaystyle
    \begin{aligned}[t]
    \mathcal{L}_T \gets\; \frac{1}{M}\sum_{j=1}^{M} c\bigl((y_j,v_j)&,T_\theta^\triangle(y_j,v_j)\bigr)\\
    &-\frac{1}{M}\sum_{j=1}^{M} \varphi_\omega\bigl(T_\theta^\triangle(y_j,v_j,z)\bigr).
    \end{aligned}
    $}
    \State Update $\theta$ by minimizing the loss $\mathcal{L}_T$.
\EndFor
\end{algorithmic}
\end{algorithm}

\section{Related Works}

\subsection{Conditional Optimal Transport}

Recently, Conditional Optimal Transport (COT) has been explored through dynamic formulations, such as flow matching \cite{chemseddine2025conditional, kerrigan2024dynamic, cheng2025curseconditionsanalyzingimproving}. By coupling source and target samples, the COT transport plan aims to provide a more guided inference trajectory. For instance, \cite{chemseddine2025conditional} introduces the conditional Wasserstein distance and analyzes its geodesics to propose OT-based flow matching for Bayesian inverse problems. Extending these concepts beyond finite dimensions, \cite{kerrigan2024dynamic} generalizes the dynamic formulation of COT to infinite-dimensional function spaces via triangular vector fields, broadening its applicability to complex functional data.

The above dynamic formulations, such as the flow matching approaches, require computationally expensive multiple function evaluations (NFE) for sampling. In contrast, static formulations like our model present efficient single-NFE sampling \cite{bunne2022supervised, hosseini2025conditional}. \cite{bunne2022supervised} approaches the COT problem by interpreting the gradients of partially input convex neural networks as optimal transport maps. In particular, \cite{hosseini2025conditional} is highly relevant to our work, as it shares the objective of directly modeling the transport map itself. In this work, we extend the existing COT framework to an unbalanced setting through a semi-dual formulation (Eq. \ref{eq:cuot-semidual}), significantly enhancing robustness against outliers. This is particularly critical for conditional generative modeling, which is inherently more susceptible to outlier-induced distortions than standard (marginal) generative modeling. This increased sensitivity arises because conditional modeling deals with a reduced number of data points available for each specific condition.

\begin{table*}[t]
\caption{\textbf{Quantitative results on various 2D synthetic datasets.} Baselines are categorized into static (NFE = 1) and dynamic (NFE $\ge$ 1) frameworks for comparison. Note that COTM corresponds to a special case of CUOTM where $\Psi_1^*(x) = x$ and $\Psi_2^*(x) = x$. The results are reported as the average $\pm$ standard deviation across five independent trials. Bold and underlined values denote the best and second-best performance, respectively.} 

\label{tab:results_2D}
\centering
\begin{tabular}{lllcccc}
\toprule
\multirow{2}{*}{\textbf{Class}} &
\multirow{2}{*}{\textbf{Method}} &
\multirow{2}{*}{\textbf{NFE}} &
\textbf{Checkerboard} &
\textbf{Moons} &
\textbf{Circles} &
\textbf{Swissroll} \\
\cmidrule(lr){4-7}
& & &
$W_2(10^{-2}\!\downarrow)$ &
$W_2(10^{-2}\!\downarrow)$ &
$W_2(10^{-2}\!\downarrow)$ &
$W_2(10^{-2}\!\downarrow)$ \\
\midrule
\multirow{3}{*}{\textbf{Static}} 
& PCP-Map \cite{wang2025efficient}
& 1
& $\underline{6.27 \pm 0.81}$
& $8.44 \pm 1.09$
& $6.19 \pm 0.43$
& $5.35 \pm 0.93$ \\
& COTM
& 1
& $7.81 \pm 0.66$
& $12.00 \pm 0.33$
& $6.34 \pm 0.29$
& $8.52 \pm 0.32$ \\
& CUOTM (Ours)
& 1
& $6.53 \pm 0.40$
& $\underline{6.52 \pm 0.86}$
& $\mathbf{4.46 \pm 0.10}$
& $\underline{4.92 \pm 0.22}$ \\
\midrule
\multirow{3}{*}{\textbf{Dynamic}} 
& COT-Flow \cite{wang2025efficient}
& 16
& $8.20 \pm 0.49$
& $18.49 \pm 2.22$
& $10.04 \pm 1.69$
& $6.47 \pm 0.69$ \\
& FM \cite{lipman2022flow}
& 596
& $8.81 \pm 0.58$
& $15.55 \pm 0.77$
& $7.03 \pm 0.17$
& $8.18 \pm 0.34$ \\
& COT-FM \cite{kerrigan2024dynamic}
& 596
& $\mathbf{4.69 \pm 1.00}$
& $\mathbf{6.50 \pm 1.41}$
& $\underline{5.56 \pm 0.43}$
& $\mathbf{4.64 \pm 1.26}$ \\
\bottomrule
\end{tabular}
\end{table*}

\begin{figure*}[t]
    \centering

    \begin{minipage}{0.15\linewidth} \centering \small \hspace*{14pt}Target \end{minipage} \hfill
    \begin{minipage}{0.15\linewidth} \centering \small \hspace*{14pt}COTM \end{minipage} \hfill
    \begin{minipage}{0.15\linewidth} \centering \small \hspace*{14pt}CUOTM \end{minipage} \hfill
    \begin{minipage}{0.21\linewidth} \centering \small \hspace*{11pt}COTM (KDE) \end{minipage} \hfill
    \begin{minipage}{0.21\linewidth} \centering \small \hspace*{11pt}CUOTM (KDE) \end{minipage}
    
    \vspace{0pt}

    \begin{minipage}[t]{0.15\linewidth}
        \includegraphics[width=\textwidth, align=c]{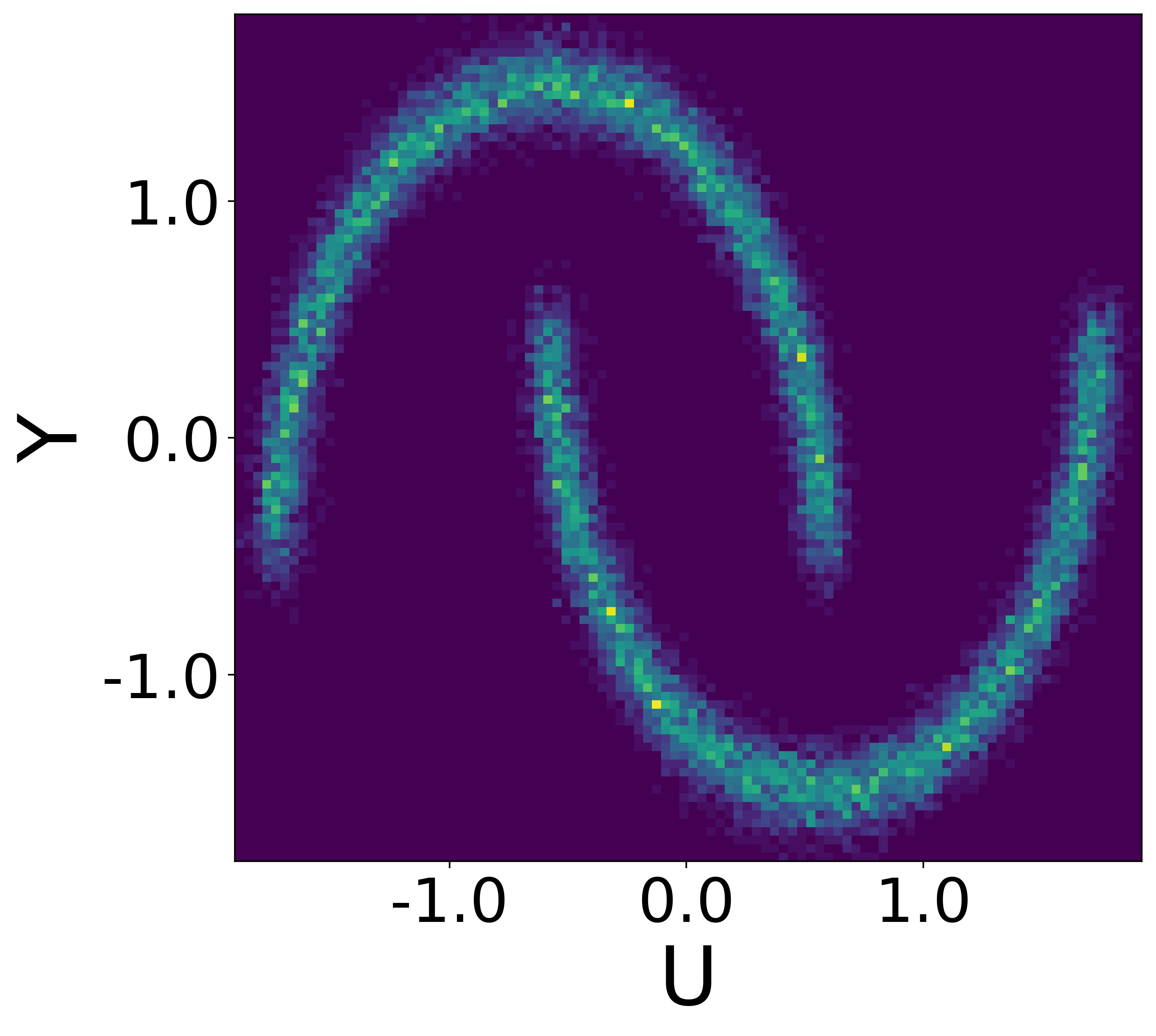}
    \end{minipage} \hfill
    \begin{minipage}[t]{0.15\linewidth}
        \includegraphics[width=\textwidth, align=c]{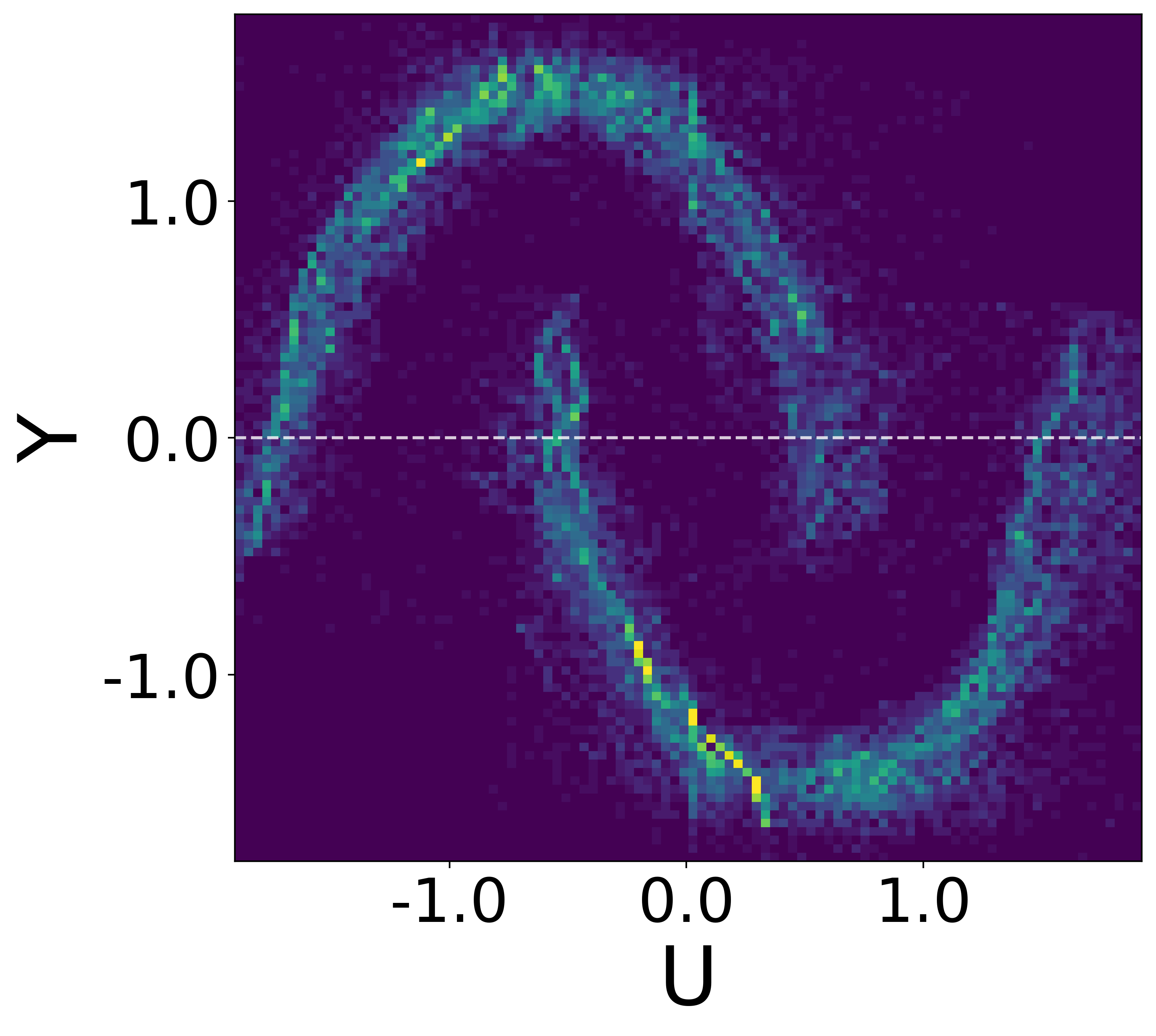}
    \end{minipage} \hfill
    \begin{minipage}[t]{0.15\linewidth}
        \includegraphics[width=\textwidth, align=c]{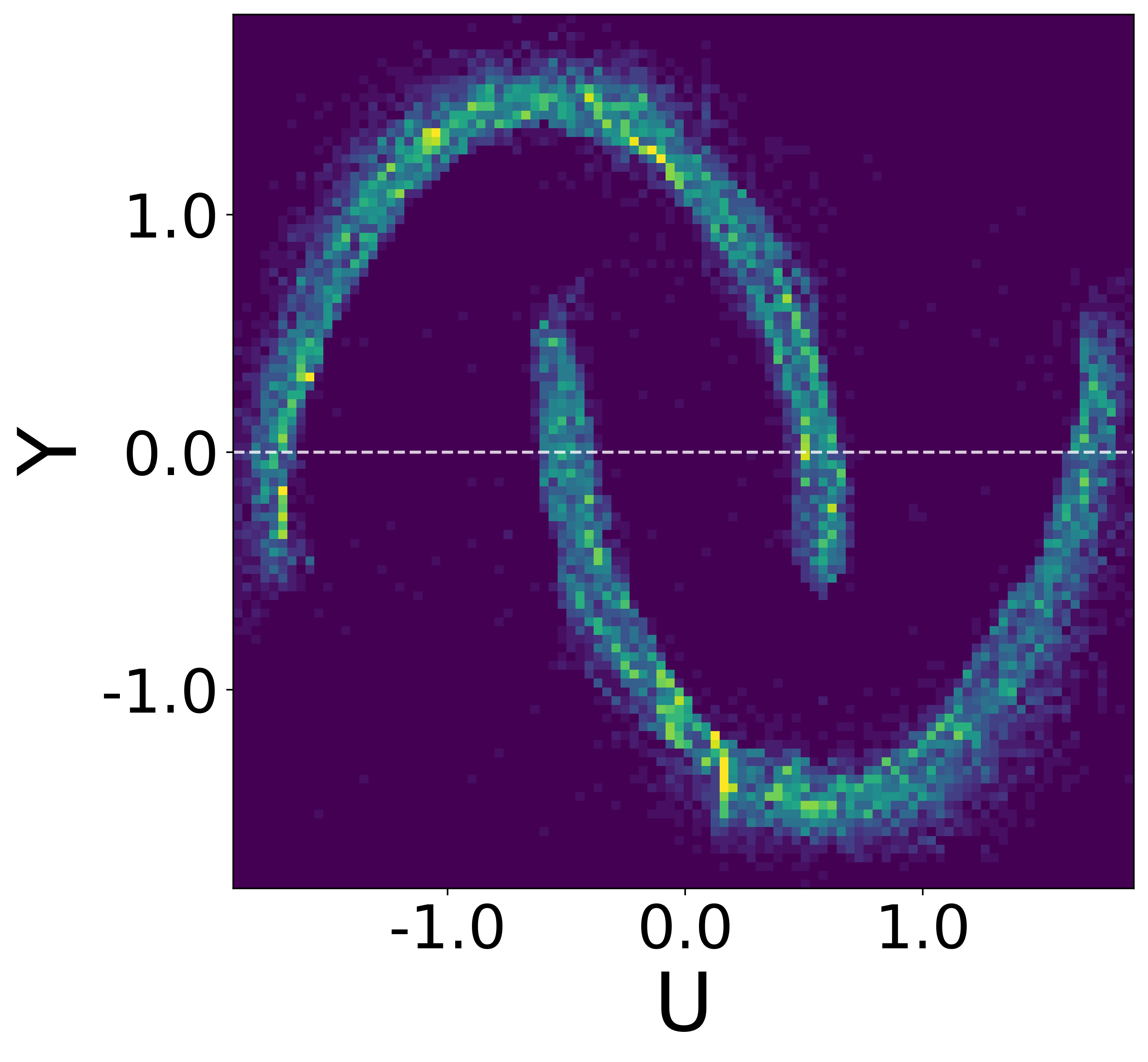}
    \end{minipage} \hfill
    \begin{minipage}[t]{0.21\linewidth}
        \includegraphics[width=\textwidth, align=c]{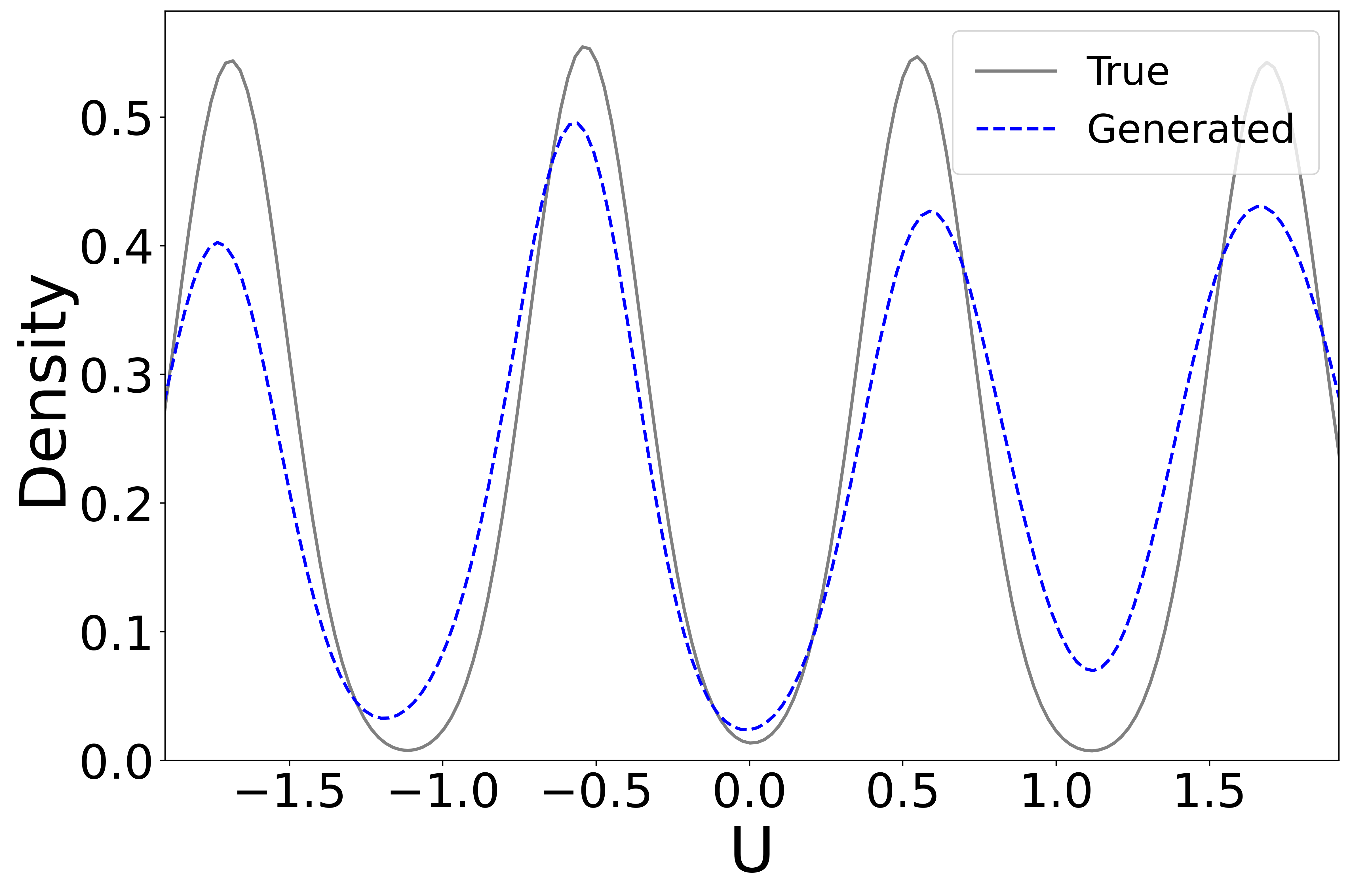}
    \end{minipage} \hfill
    \begin{minipage}[t]{0.21\linewidth}
        \includegraphics[width=\textwidth, align=c]{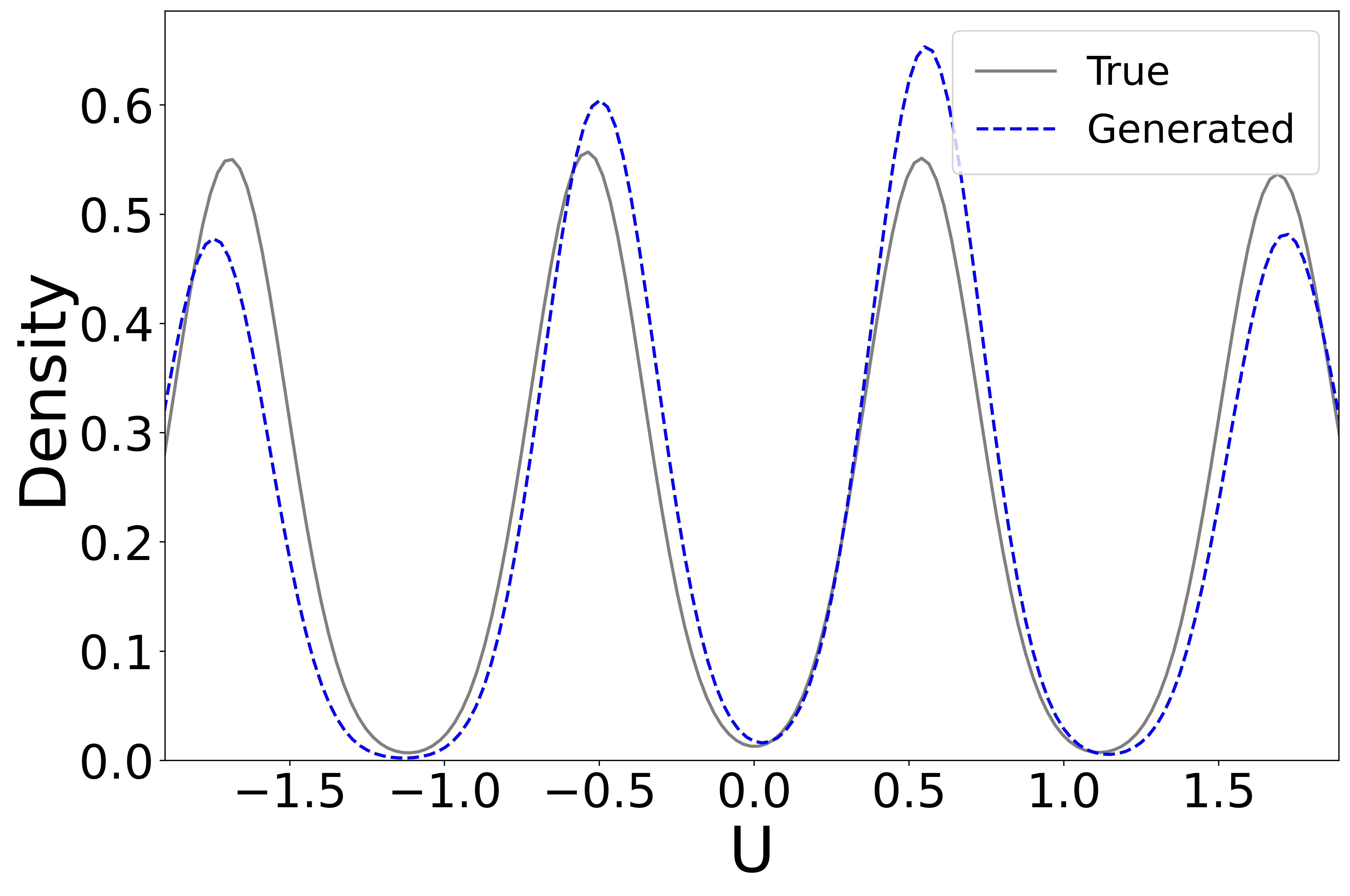}
    \end{minipage}
    
    \medskip
    
    \begin{minipage}[t]{0.15\linewidth}
        \includegraphics[width=\textwidth, align=c]{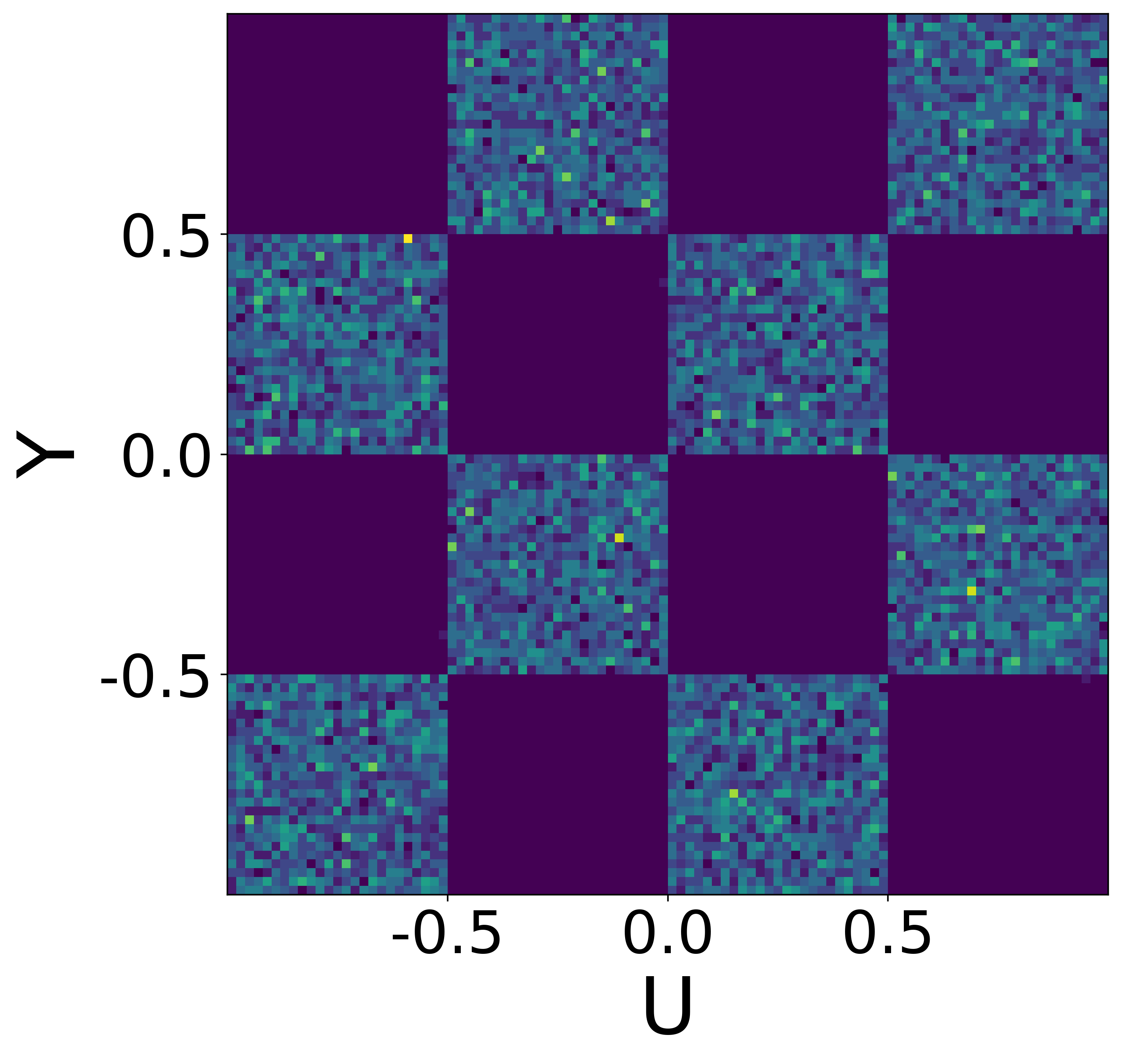}
    \end{minipage} \hfill
    \begin{minipage}[t]{0.15\linewidth}
        \includegraphics[width=\textwidth, align=c]{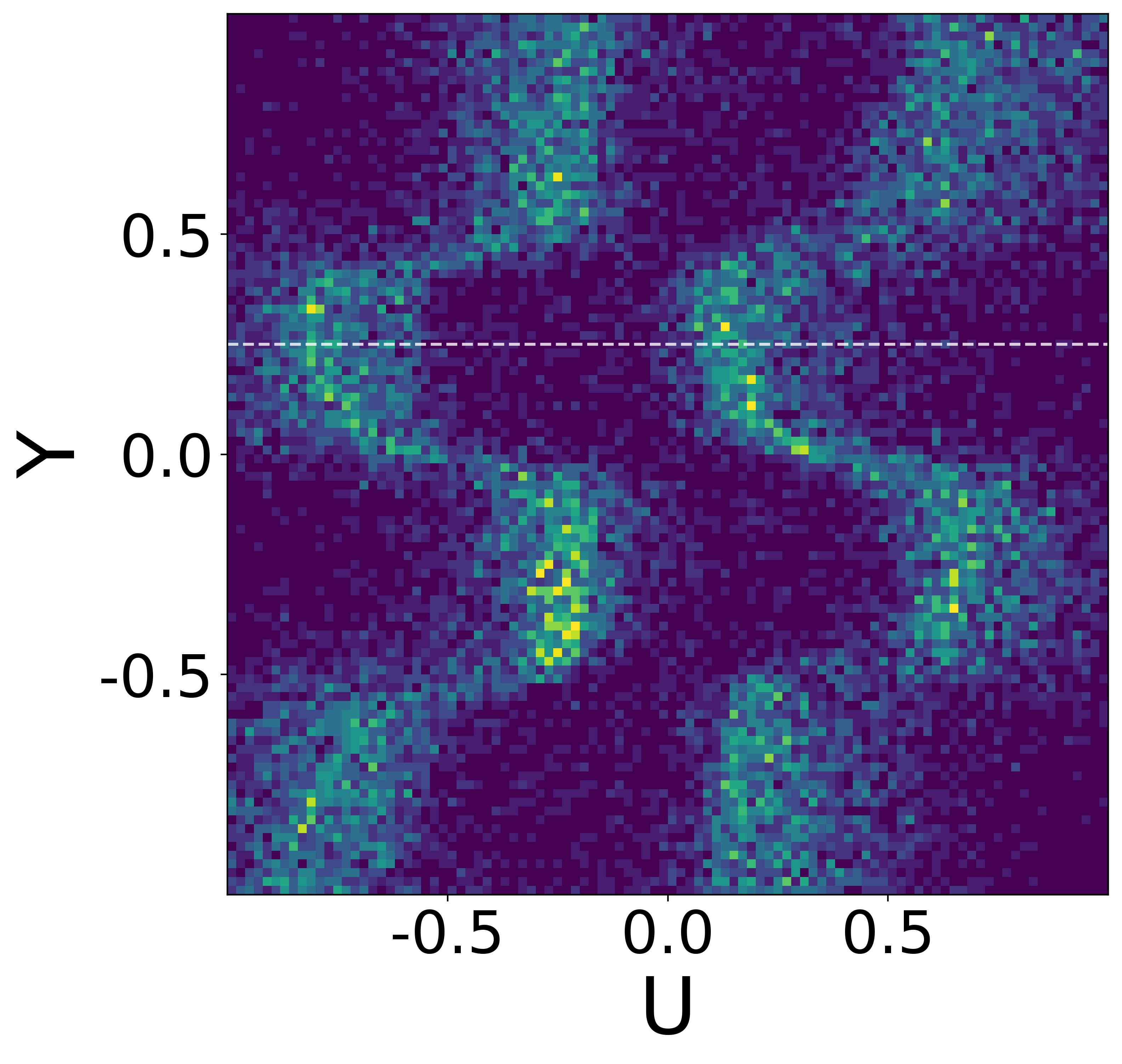}
    \end{minipage} \hfill
    \begin{minipage}[t]{0.15\linewidth}
        \includegraphics[width=\textwidth, align=c]{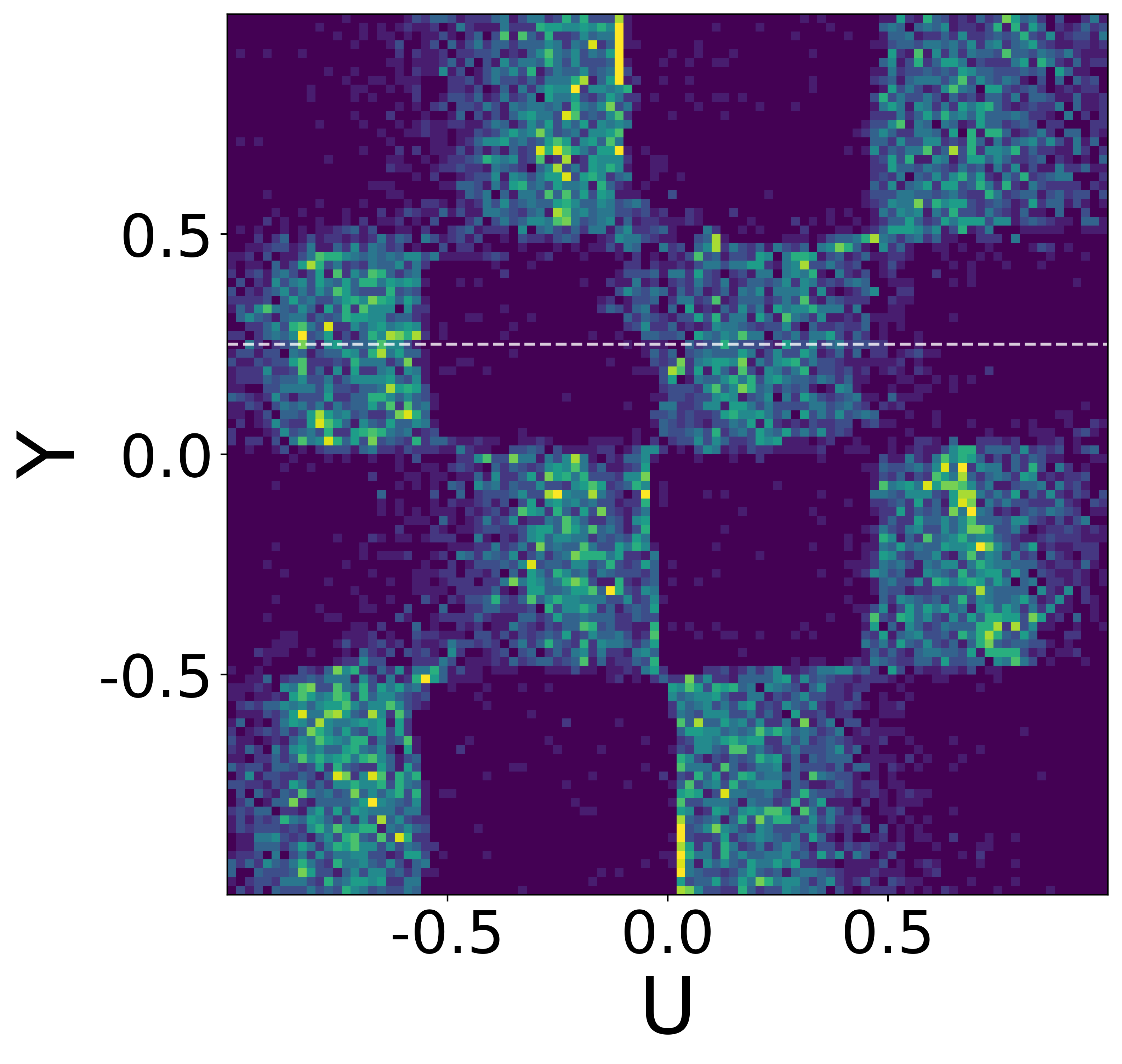}
    \end{minipage} \hfill
    \begin{minipage}[t]{0.21\linewidth} 
        \includegraphics[width=\textwidth, align=c]{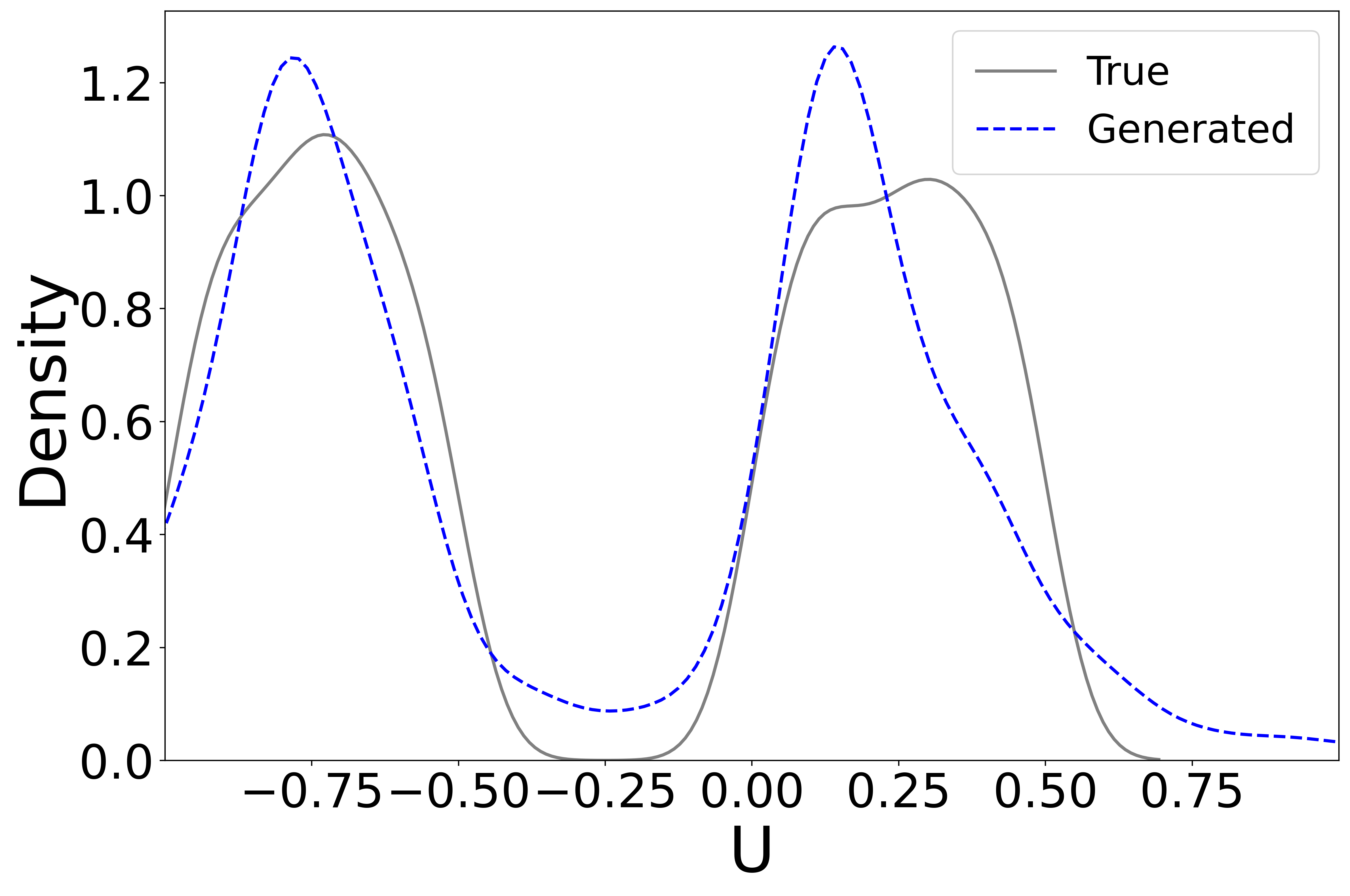}
    \end{minipage} \hfill
    \begin{minipage}[t]{0.21\linewidth} 
        \includegraphics[width=\textwidth, align=c]{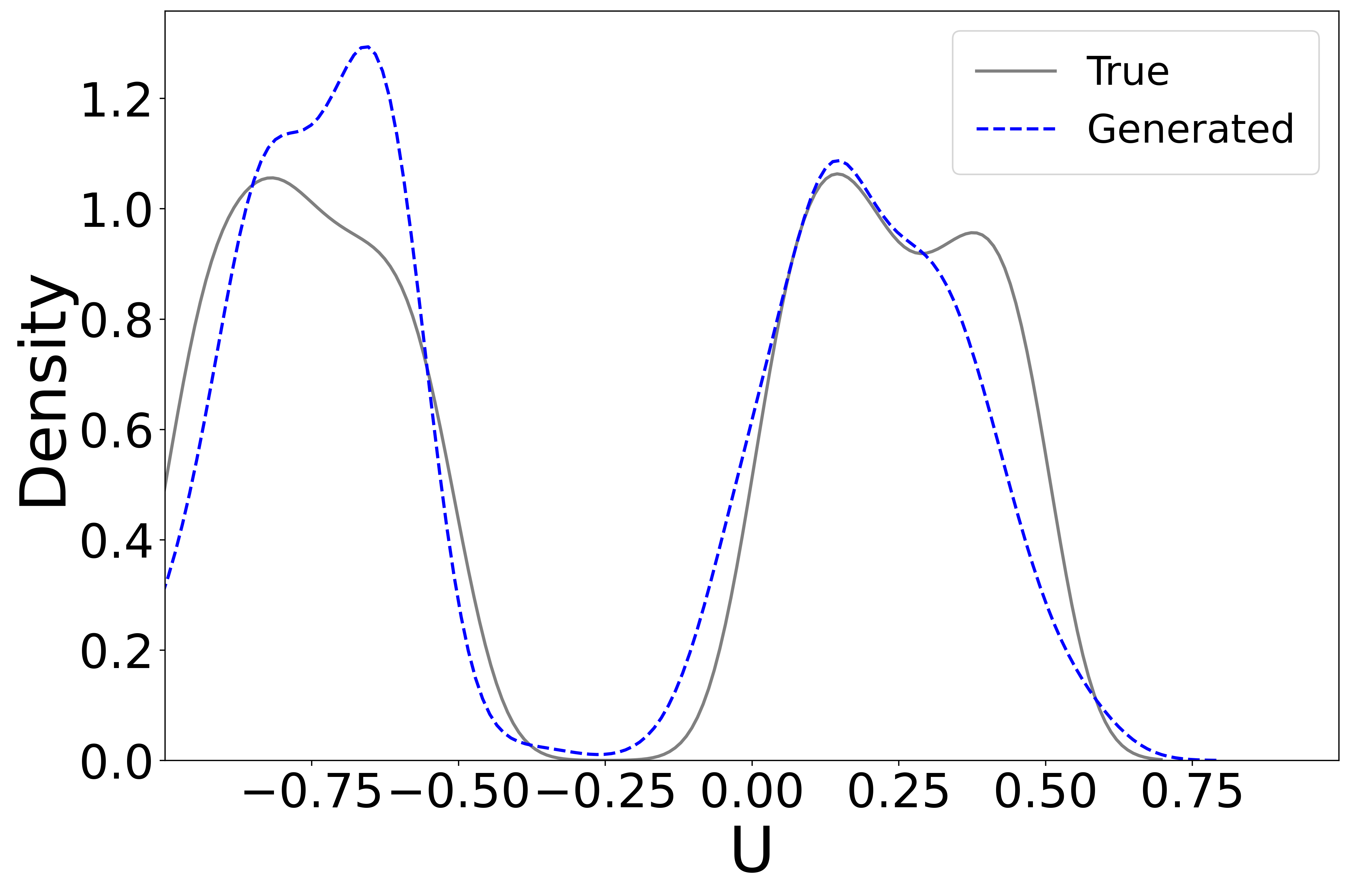}
    \end{minipage}
    
    \medskip

    \caption{\textbf{Qualitative results on 2D synthetic datasets.} From left to right: target distribution, samples from COTM, samples from CUOTM (Ours), and the respective KDE density visualizations. The top and bottom rows show results for the \textit{Moons} and \textit{Checkerboard} datasets, respectively. CUOTM demonstrates superior density recovery and sharper boundary alignment compared to the standard COTM baseline.
    }
    \label{fig:2d_synthetic_results}
\end{figure*}

\subsection{Unbalanced Optimal Transport}

Standard OT problem is known to be highly sensitive to outliers due to the hard constraint on distribution matching \cite{balaji2020robust, uotm}. To overcome this limitation, \cite{chizat2018unbalanced, liero2018optimal} introduces the Unbalanced Optimal Transport (UOT) problem, which relaxes the hard marginal constraints by introducing divergence penalties.
 
In the unconditional setting, \cite{uotm} introduced the Unbalanced Optimal Transport Maps (UOTM) framework, which effectively resolves the drawbacks of classical OT by leveraging neural networks to learn the transport maps.
By employing a semi-dual formulation of UOT \cite{pmlr-v202-vacher23a}, UOTM avoids the instability of training and provides a robust objective that handles outliers \cite{uotm2}. Despite these advantages in the unconditional setting, the application of UOT to conditional settings remains unexplored. To the best of our knowledge, there is currently a lack of a theoretical formulation that integrates conditional structures with unbalanced transport relaxations. Our work bridges this gap by presenting the first framework for conditional UOT (Eq. \ref{eq:condUOT}), combining a triangular COT \cite{hosseini2025conditional} parameterization with a semi-dual UOT objective to enable stable and robust conditional map modeling.

\section{Experiments}
\label{sec:experiments}
In this section, we evaluate our CUOTM across various conditional generation tasks to demonstrate its performance in distribution matching, generation efficiency, and robustness to outliers. Our experimental evaluation is structured as follows:

\begin{itemize}

    \item In Section \ref{sec:gener_exp}, we evaluate our method on 2D synthetic and CIFAR-10 datasets to assess the accuracy and efficiency on conditional generative modeling.
    \item In Section \ref{sec:outlier}, we investigate outlier robustness, showing how relaxing marginal constraints helps mitigate sensitivity to noisy data.
    \item In Section \ref{sec:ablation}, we conduct ablation studies on the cost intensity hyperparameter $\tau$ and the divergence types $D_{\psi}$.
\end{itemize}
We compare our method with existing conditional OT baselines, including dynamic models (\textit{COT-FM} \cite{kerrigan2024dynamic} and \textit{COT-Flow} \cite{wang2025efficient}) and static models (\textit{PCP-Map} \cite{wang2025efficient}). Additionally, we include \textit{COTM}, the standard COT variant of our framework, as a static baseline.

To be more specific, COTM represents a special case of our framework where $\Psi_{1}^\ast(x) = x$ and $\Psi_{2}^\ast(x) = x$ in Eq. \ref{eq:cuot-Jphi}.
This corresponds to setting the Csiszár divergences to zero when measures match exactly and infinity otherwise. In this setting, the CUOT problem (Eq. \ref{eq:condUOT}) reduces to the standard Conditional Kantorovich problem (Eq. \ref{eq:cond_kantorovich}). We validate CUOTM on both 2D synthetic datasets \cite{pedregosa2011scikit} and the CIFAR-10 image dataset \cite{cifar10}.

\subsection{Conditional Generative Modeling}\label{sec:gener_exp}

\paragraph{2D synthetic data}

We evaluate the effectiveness of CUOTM on 2D synthetic datasets \cite{pedregosa2011scikit} as a low-dimensional conditional simulation, following the experimental setup established by \cite{kerrigan2024dynamic}. Specifically, the target data distribution $\nu \in \mathbb{P}(\mathcal{Y} \times \mathcal{U})$ is a 2D synthetic distribution supported on $\mathbb{R}^{2}$. We interpret the first variable as the conditioning variable and the second variable as the data variable, i.e., $\mathcal Y = \mathcal U =\mathbb{R}$. To satisfy our $y$-marginal preservation assumption, the source distribution $\eta$ is defined as the product measure between the $y$-marginal distribution of $\nu$ and the Gaussian prior, i.e., $\eta =\pi^{\mathcal Y}_{\#}\nu \otimes \mathcal{N}(0, 1)$. We use Wasserstein-2 ($\mathcal{W}_2$) distance between the generated distribution $T_{\#}\mu$ and the target distribution $\nu$ as the evaluation metric.

Table \ref{tab:results_2D} present the results on four synthetic datasets: \textit{Checkerboard}, \textit{Moons}, \textit{Circles}, and \textit{Swissroll}. CUOTM achieves superior performance in matching target distributions compared to the static COT baseline \cite{wang2025efficient}. While dynamic models \cite{wang2025efficient,lipman2022flow,kerrigan2024dynamic}, such as COT-FM \cite{kerrigan2024dynamic}, achieve competitive generative performance, they require multiple Number of Function Evaluations (NFE) for sample generation. For example, COT-Flow \cite{wang2025efficient} requires 16 NFE, and COT-FM \cite{kerrigan2024dynamic} and FM \cite{lipman2022flow} require 596 NFEs for 2D synthetic datasets.
In this regard, our CUOTM achieves at least second-best performance while serving as a computationally efficient one-step generator (NFE 1). In particular, our model yields performance comparable to COT-FM on the Moons and Swissroll datasets and even achieves a lower $\mathcal{W}_2$ score on the Circles dataset.
Furthermore, our model effectively recovers the complex manifold of the Checkerboard distribution, including its characteristic discontinuous grid pattern (see Figure \ref{fig:2d_synthetic_results}). This is also reflected in the KDE plots (second row, rightmost two columns): COTM generates spurious density in regions where the true density is near zero, whereas CUOTM preserves these near-zero-density regions (See Appendix \ref{app:addtional_results} for additional visualizations.).

Interestingly, even in outlier-free 2D synthetic tasks requiring precise distribution matching, CUOTM achieves lower $\mathcal{W}_2$ distances than COTM. This is particularly interesting because CUOTM allows the conditional distribution error for outlier robustness, while COTM assumes the exact conditional generative modeling. This demonstrates that the relaxed marginal constraints do not necessarily hinder precision, even under the outlier-free setting. A similar phenomenon has been observed in a marginal generative modeling case under the unbalanced setting \cite{uotm}.

\begin{figure}[t]
    \centering     \includegraphics[width=0.9\linewidth]{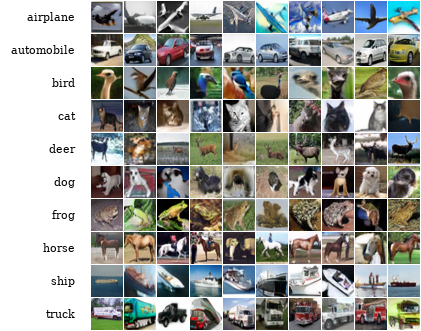}
     \caption{\textbf{Qualitative results for class-conditional CIFAR-10 generation.} Each row corresponds to one of the ten class categories.}
    \label{fig:Exp on cifar10}
\end{figure}

\paragraph{CIFAR-10}
We conducted class-conditional generation on the CIFAR-10 dataset \cite{cifar10} to evaluate our method on image-scale conditional generation (32$\times$32$\times$3). Here, the conditioning variable $y$ corresponds to the discrete class label, while the data variable $u$ lies in the image space $\mathcal{U}=\mathbb{R}^{32\times 32\times 3}$. As in the 2D synthetic experiments, we define the source distribution as $\pi^{\mathcal Y}_{\#}\nu \otimes \mathcal{N}(0, I)$, satisfying the $y$-marginal preservation assumption.

In this experiment, we additionally consider the \textbf{CUOTM+SD}, which incorporates $\alpha$-scheduling \cite{choi2023analyzing} into the standard CUOTM model. Theorem \ref{thm:parameter} (Eq. \ref{eq:div_bound}) implies that decreasing $\tau$ leads to tighter distribution matching. Motivated by this, the $\alpha$-scheduling implicitly down-scales $\tau$ (i.e., $\tau \to \tau / \alpha$) by up-scaling the Csiszár divergences (i.e., $D_{\Psi_{i}} \to \alpha D_{\Psi_{i}}$). This is equivalent to replacing $(\Psi_i)^{\ast}$ in the CUOTM objective as follows:
\begin{equation}
\alpha D_{\Psi_i} = D_{\alpha\Psi_i} \quad (\alpha\Psi_i)^{\ast}(x) = \alpha\,\Psi_i^{\ast}\!\left(\frac{x}{\alpha}\right)
\end{equation}
By adjusting $\alpha$, we control the relative weight of the divergence penalties against the transport cost, and consequently the degree of marginal relaxation in the CUOT objective (Eq. \ref{eq:condUOT}). This mechanism allows $\alpha$-scheduling to dynamically tighten the distribution matching throughout the training process (See Appendix \ref{app:tightness_alpha} for details).

Performance is measured by the Fréchet Inception Distance (FID) \cite{fid} and Inception Score (IS) \cite{salimans2016improved}. FID and IS evaluate the fidelity and diversity of generated images by leveraging representations from a pre-trained Inception network.
To assess class-specific quality, we report the Intra-class FID (IFID) \cite{miyato2018cgans}, calculated as the average of FID scores across all classes. While FID measures the discrepancy between marginal distributions, IFID evaluates discrepancies between class-conditional distributions.

As shown in Table \ref{tab:cifar10_results}, CUOTM and CUOTM+SD achieve competitive performance on FID, IS, and IFID scores compared to other COT baselines, including dynamic and static models. This demonstrates that CUOTM offers scalability to work on image-scale datasets. In particular, CUOTM+SD with only 1 NFE outperforms OT Bayesian Flow \cite{chemseddine2025conditional} using 100 NFE, demonstrating superior distribution matching and sampling efficiency simultaneously. Similar to 2D synthetic results, our standard OT-variant (COTM) failed to scale to image datasets. We consider that this happens because of the following key factor. The hard marginal constraints of COTM make the model sensitive to outliers, as the transport plan is forced to match the entire source mass to the target distribution. This can be one reason that previous COT baselines are restricted to synthetic datasets \cite{wang2025efficient, kerrigan2024dynamic}. Implementation details are provided in Appendix \ref{CIFAR-10 implementations}.

\begin{table}[t]
    \centering
    \caption{\textbf{Quantitative results on class-conditional CIFAR-10 comparing conditional OT-based baselines.} CUOTM+SD denotes our framework implemented with the $\alpha$-scheduling \cite{choi2023analyzing}. 
    \textdagger\ indicates the results reproduced by ourselves.} 
    
    \label{tab:cifar10_results}
    
    \begin{tabular}{lcccc}
        \toprule
        \textbf{Method} & \textbf{NFE} & \textbf{FID ($\downarrow$)} & \textbf{IFID ($\downarrow$)} & \textbf{IS ($\uparrow$)}  \\
        \midrule
        OT Bayesian Flow \cite{chemseddine2025conditional} & 100 & 4.10 & 14.98$^\dagger$ & 8.81$^\dagger$ \\
        COTM$^\dagger$ & 1 & 33.04 & 65.66 & 6.58 \\
        CUOTM$^\dagger$ (Ours) & 1 & 5.30 & 15.55 & 8.79 \\
        CUOTM+SD$^\dagger$ (Ours) & 1 & \textbf{3.71} & \textbf{13.44} & \textbf{8.83} \\
        \bottomrule
    \end{tabular}
    
\end{table}

\begin{figure*}[t]
    \centering

    \subfloat{
        \includegraphics[width=0.3\linewidth]{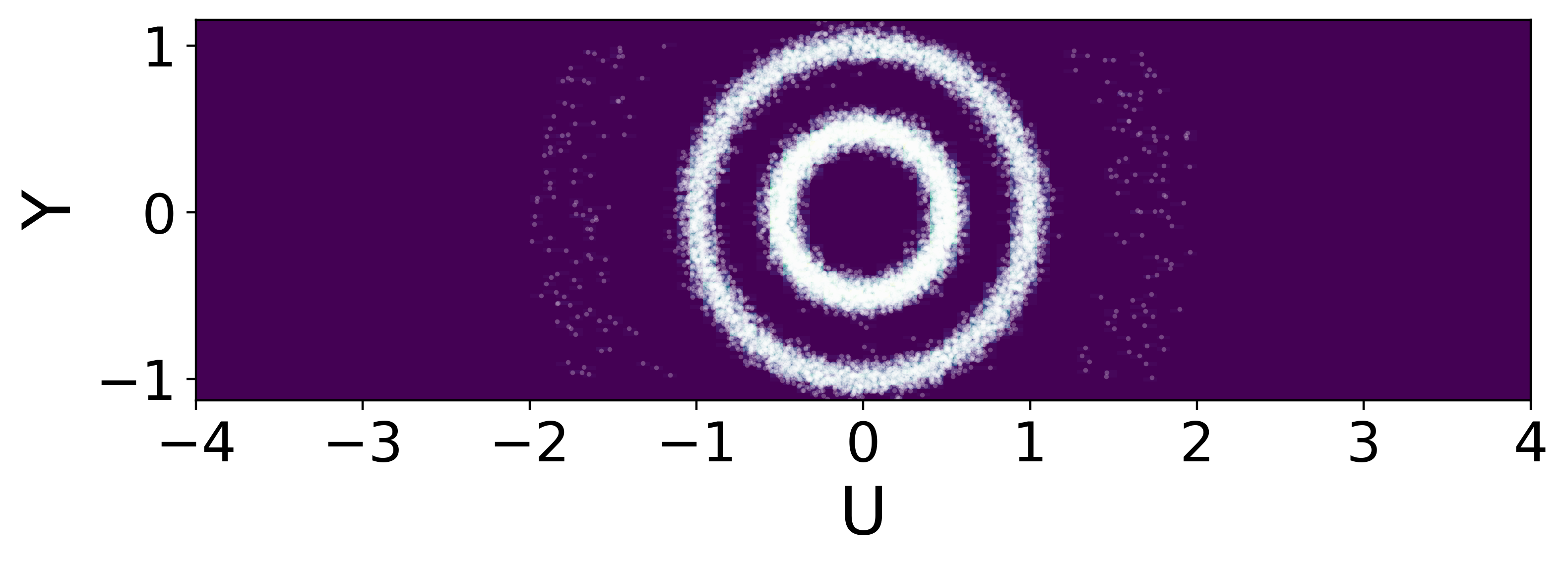}
    }
    \hfill
    \subfloat{
        \includegraphics[width=0.3\linewidth]{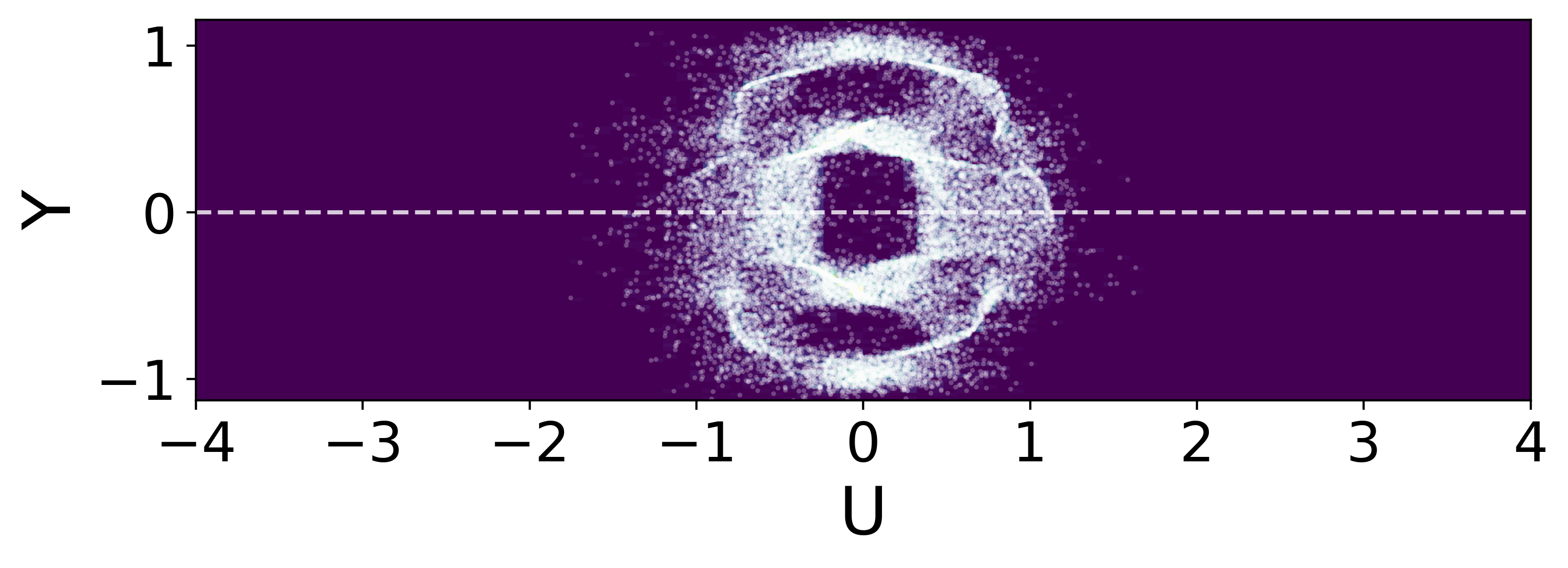}
    }
    \hfill
    \subfloat{
        \includegraphics[width=0.3\linewidth]{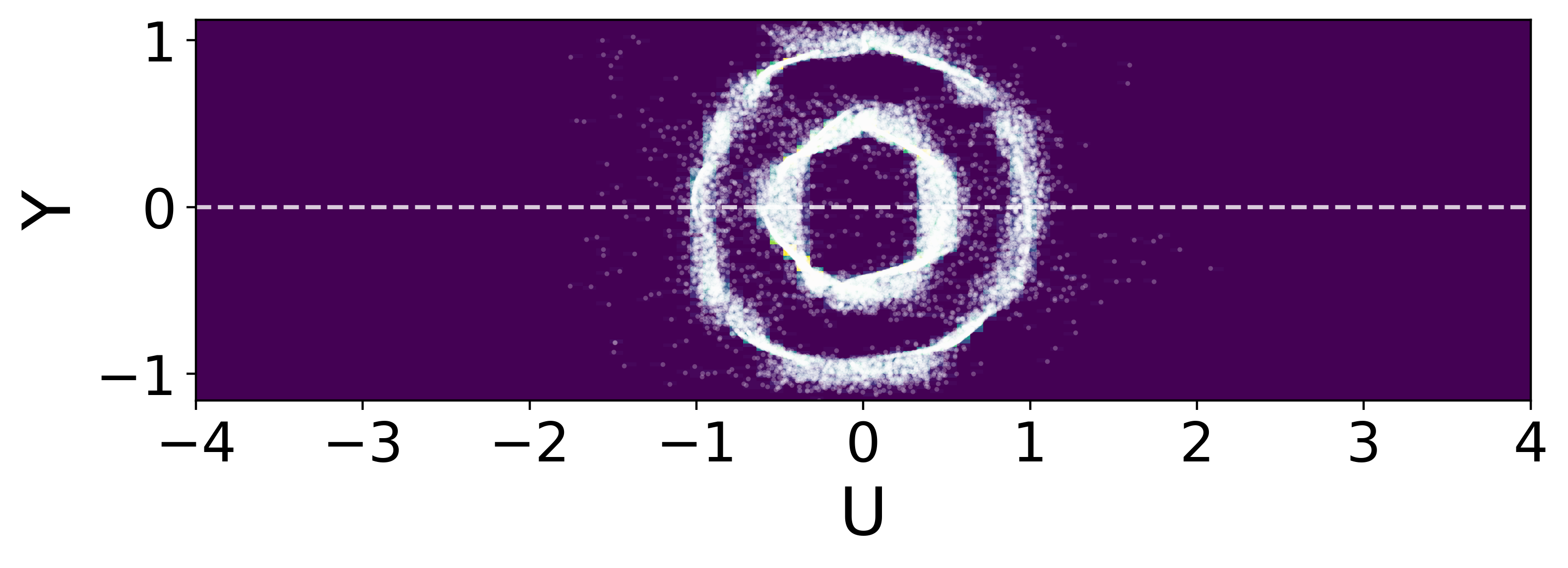}
    }
    
    \medskip

    \setcounter{subfigure}{0}
    \subfloat[True Distribution $\nu$]{
        \includegraphics[width=0.3\linewidth]{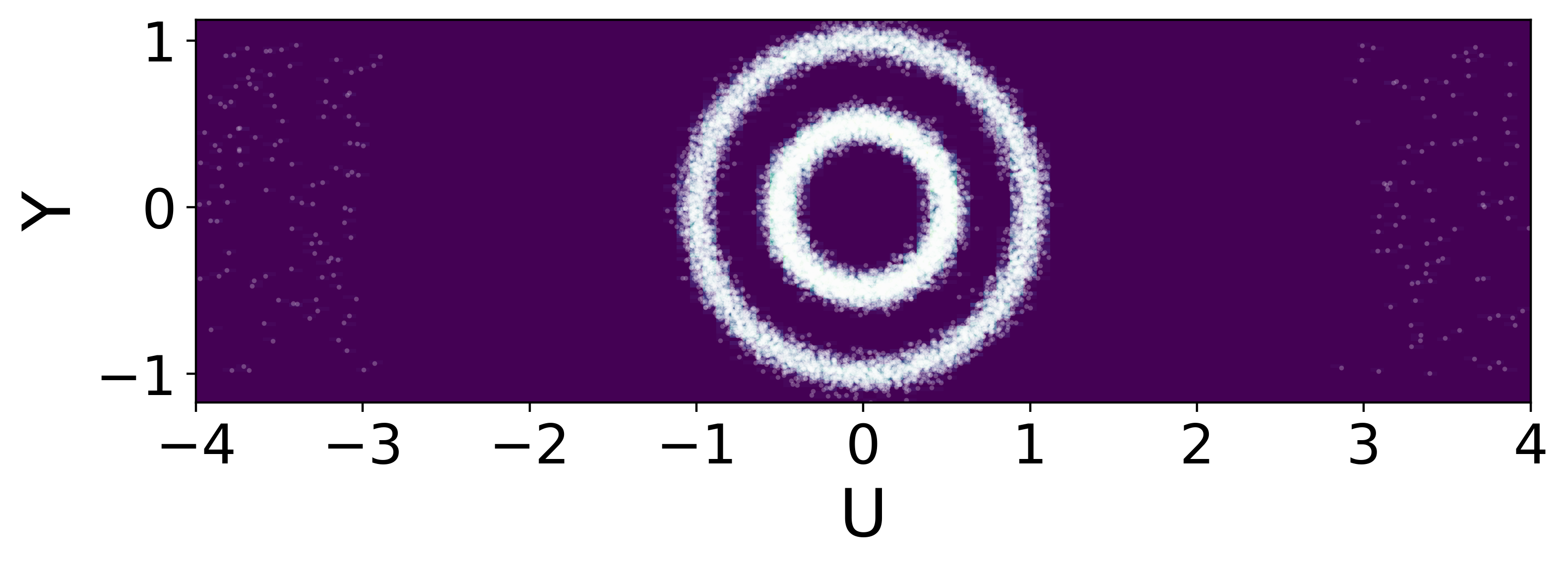}
    }
    \hfill
    \subfloat[Generated distribution (COTM)]{
        \includegraphics[width=0.3\linewidth]{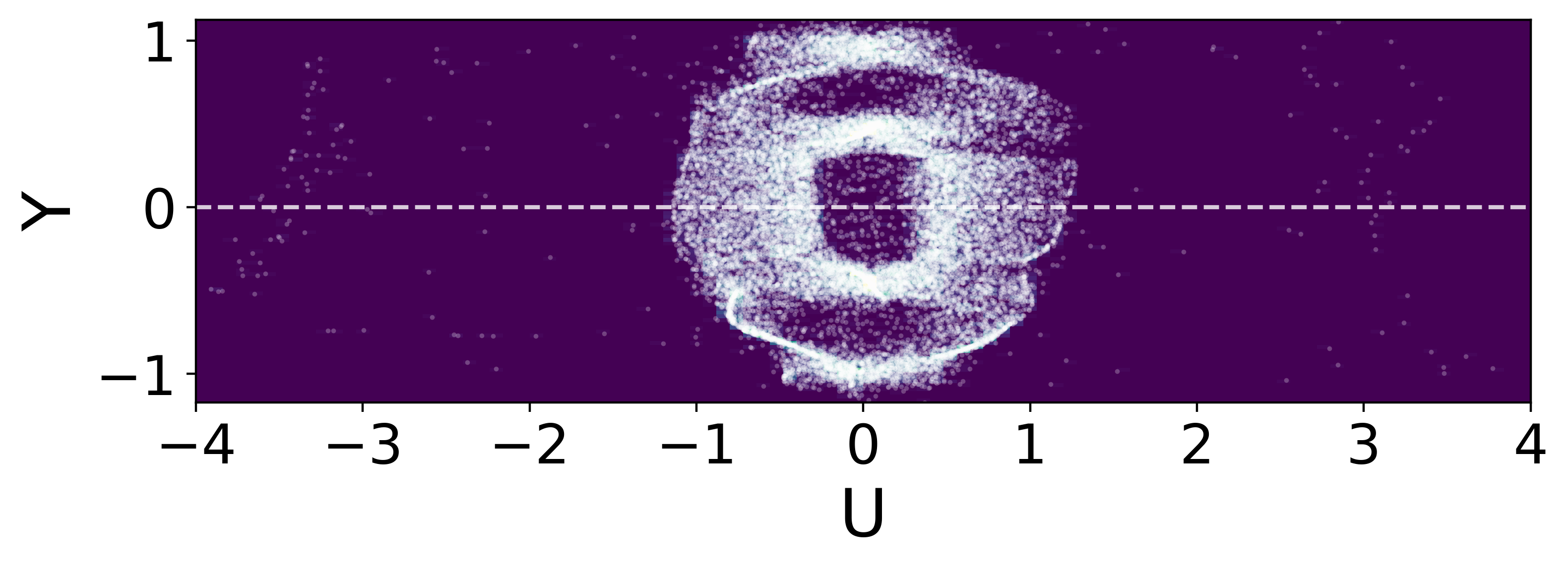}
    }
    \hfill
    \subfloat[Generated distribution (CUOTM)]{
        \includegraphics[width=0.3\linewidth]{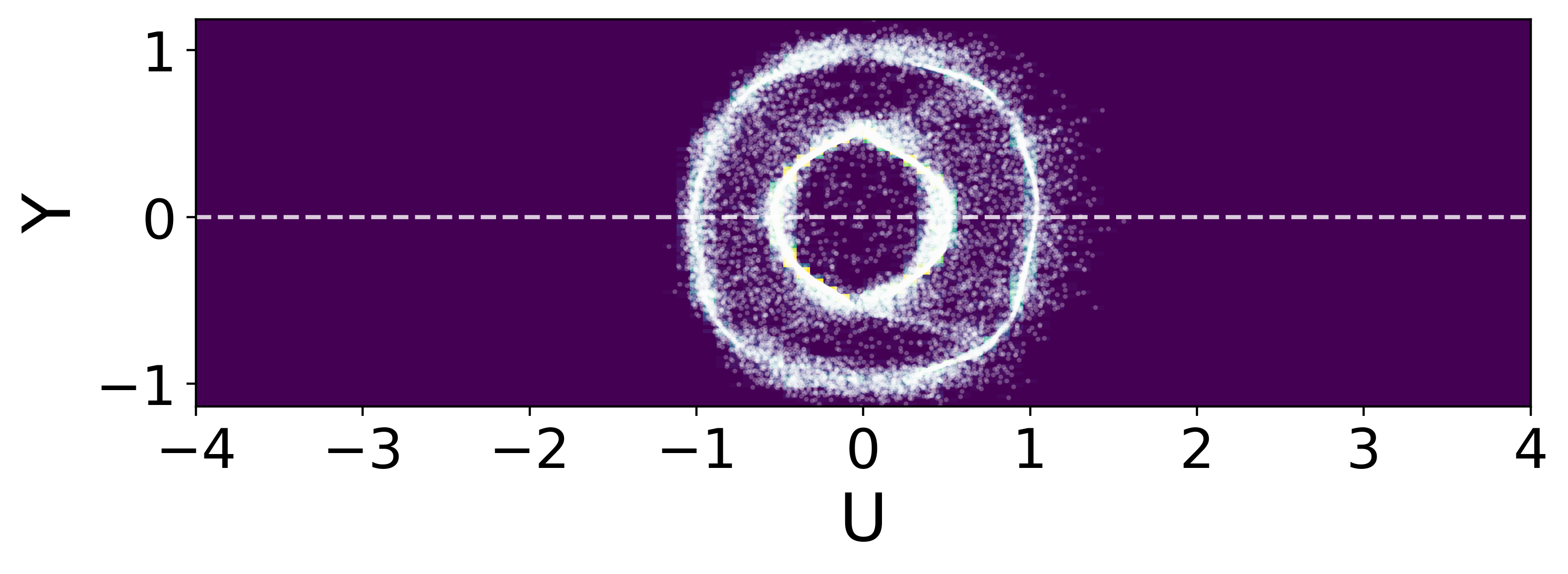}
    }
    
    \caption{
    \textbf{Qualitative comparison of outlier robustness on the Circles dataset.} The in-distribution modes (two center circles) are perturbed with $1\%$ outliers located within annular regions of radius $r \in [1.5, 2]$ (top row) and $r \in [3, 4]$ (bottom row). While COTM (middle) exhibits significant distortion by attempting to match the outlier, CUOTM (right) effectively ignores the noise by prioritizing high-density regions through marginal relaxation, thereby accurately recovering the majority distribution.
    }
    \label{fig:outlier}
\end{figure*}

\subsection{Outlier robustness}\label{sec:outlier}

\begin{table}[t]
    \centering
    \caption{\textbf{Quantitative comparison of outlier robustness on the Circles dataset.} The table reports the Wasserstein-2 distance ($\mathcal{W}_2 \times 10^{-3}$) between the majority (clean) distribution and the generated samples under $1\%$ outlier contamination. The outlier range $[a, b]$ denotes that outliers are distributed within an annular region of radius $r \in [a, b]$ from the origin.}
    \label{tab:outlier}
    \begin{tabular}{ccc}
        \toprule
        \multirow{2}{*}{\textbf{Outlier Range}} & \multicolumn{2}{c}{$\mathcal W_2(\times 10^{-3}, \downarrow)$} \\
        \cmidrule(r){2-3}
         & CUOTM & COTM \\
        \midrule
        {[4,5]}    & \textbf{0.047} & 0.205 \\ 
        {[3,4]}    & \textbf{0.062} & 0.192 \\
        {[2,3]}    & \textbf{0.070} & 0.124 \\
        {[1.5,2]}  & \textbf{0.055} & 0.084 \\ 
        \bottomrule
    \end{tabular}
\end{table}

In this section, we demonstrate that our CUOTM effectively mitigates the sensitivity to outliers, which is a limitation of standard OT-based model, by relaxing marginal constraints. This robustness is particularly crucial in conditional generation scenarios. 

Because the dataset is partitioned across various conditions, each conditional distribution is characterized by a reduced sample size,  making the model more susceptible to outlier-induced distortions than the unconditional setting. Such challenges are ubiquitous in practical, real-world applications \cite{10.1145/2983323.2983660,hu2024anomalydiffusion}.

We verify this robustness by comparing CUOTM against the COTM model. We conducted a controlled experiment using 2D sythetic \textit{Circles} dataset, perturbed with $1\%$ outliers. The outlier intensity is controlled by the outlier position, following \cite{mukherjee2021outlier}. Note that even when the outlier proportion remains constant, the Conditional Kantorovich OT objective (Eq. \ref{eq:cond_kantorovich}) increases sharply as the outliers are positioned further from the majority samples.
To quantify this effect, we define the outlier range by the radial distance from the origin, placing outliers within an annular region of radius $r \in [a, b]$.

Table \ref{tab:outlier} presents the quantitative results by measuring the $\mathcal{W}_2$ distance between the majority (non-outlier) mode distribution and the generated distribution.
Under a strong outlier regime (e.g., $r \in [4,5]$), CUOTM effectively excludes the distant outlier modes, achieving a much lower $W_2$ distance ($0.047 \times 10^{-3}$) than COTM ($0.205 \times 10^{-3}$).
This performance gap decreases as the outliers get closer to the majority distribution manifold, reflecting the theoretical trade-off between transport costs and marginal penalties in Eq. \ref{eq:condUOT}.

As illustrated in the bottom row of Figure \ref{fig:outlier}, when outliers are located at $r \in [3, 4]$, the OT model (COTM) collapses and fails to recover the target distribution even in the presence of merely $1\%$ outliers. In contrast, our CUOTM successfully matches the in-distribution data, effectively avoiding outlier generation. 
This happens because the cost of transporting mass to distant outliers outweighs the penalty incurred by shifting the marginal distributions. Consequently, the relaxation of marginal constraints allows the model to prioritize high-density regions, thereby ensuring robustness against outliers. Moreover, as shown in Figure \ref{fig:outlier}, if the distance is $[1.5,2]$, CUOTM achieves better distribution matching even when outliers are located close to the inliers, whereas COTM fails to distinguish them.

\subsection{Ablation Study} \label{sec:ablation}

This section presents an ablation study on the key components of CUOTM. 
We specifically focus on the cost intensity hyperparameter $\tau$ and the choice of entropy functions $\Psi$, as these are the primary factors governing the training dynamics and distribution-matching behavior of our framework (Eq. \ref{eq:condUOT}).

\begin{figure}
    \centering
    \includegraphics[width=0.7\linewidth]{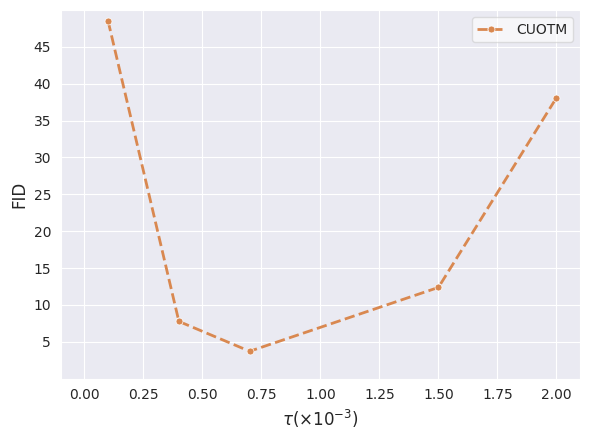}
    \caption{Ablation Study on the cost intensity parameter $\tau$}
    \label{fig:ablationtau}
\end{figure}

\begin{table}[t]
    \centering
    \caption{Ablation Study on Csisz\`{a}r Divergence $D_{\Psi_i}(\cdot|\cdot)$.}
    \label{tab:divergence}
    \begin{tabular}{cc} 
        \toprule
        $(\Psi_1, \Psi_2)$ & FID($\downarrow$) \\
        \midrule
        (KL, KL) & \textbf{3.71} \\
        ($\chi^2, \chi^2$) & 5.05 \\
        (Softplus, Softplus) & 3.79 \\
        \bottomrule
    \end{tabular}
\end{table}

\subsubsection{Cost Intensity}

As formulated in Eq. \ref{eq:condUOT}, the hyperparameter $\tau$ in the cost function $c(z, v; y, u) = \tau \| v-u \|_{2}^{2}$ controls the relative weight between the transport cost and the soft marginal matching penalties. As the theorem \ref{thm:parameter} suggests, a smaller $\tau$ results in tighter marginal distribution matching, whereas increasing $\tau$ further relaxes these constraints.
Figure \ref{fig:ablationtau} shows how the FID score of CUOTM changes with different $\tau$ on the CIFAR-10 dataset. We observed that the optimal performance is achieved at $\tau = 0.0007$. Consistent with our theoretical intuition, increasing $\tau$ beyond this threshold results in a degradation of the FID score, as excessive relaxation prevents the model from achieving a precise match with the target distribution. Conversely, performance also declines when $\tau$ is too small (e.g., $10^{-4}$). Following the interpretation in \cite{uotm}, we consider that a moderate $\tau$ provides a necessary regularization effect that prevents mode collapse during the early stages of training.

\subsubsection{Divergence}

We conducted an ablation study on the choice of the entropy functions $\Psi_1$ and $\Psi_2$ for the Csiszár Divergence $D_{\Psi_1}$ and $D_{\Psi_2}$ used in the CUOT problem (Eq. \ref{eq:condUOT}). 

Following the definition of Csis\'ar Divergence, $\Psi_1$ and $\Psi_2$ must be convex, lower semi-continuous, and non-negative \cite{liero2018optimal, chizat2018unbalanced}. 
This framework provides the freedom to choose various entropy functions $\Psi$. 
We tested several candidate functions $\Psi$ satisfying these criteria, including Kullback-Leibler (KL) divergence and $\chi^2$ divergence. Moreover, following \cite{uotm}, we assessed the Softplus case, where $\Psi^{*}(\cdot) = \textrm{Softplus}(\cdot)$(see Appendix \ref{app:candidate_div} for detailed forms of $\Psi$).

As summarized in Table \ref{tab:divergence}, $(\Psi_1, \Psi_2) = (\text{KL, KL})$ showed the best performance with an FID of 3.71. While the Softplus divergence provided competitive results (3.79), the $\chi^2$ divergence resulted in a relatively higher FID (5.05). Consequently, we employed the KL divergence setting.

\section{Conclusion}

In this work, we established the first formulation of the Conditional Unbalanced Optimal Transport (CUOT) problem and introduced CUOTM, a novel conditional generative framework built upon its semi-dual formulation. By incorporating Csiszár divergence penalties, our approach relaxes rigid conditional matching constraints while strictly preserving the conditioning marginal.
We showed both its theoretical guarantees for outlier robustness and its empirical verification across diverse benchmarks. Our empirical results demonstrate that CUOTM provides superior robustness against outliers compared to standard conditional OT. Notably, CUOTM achieves competitive performance in terms of target distribution matching compared to multi-step dynamic conditional OT baselines despite its one-step static generation process.

From a practical standpoint, the robustness of CUOTM offers a reliable solution for real-world applications where data corruption and outliers are often inevitable. 

Despite these strengths, CUOTM also has some limitations. The adversarial training strategy can lead to training instability, a common challenge in GAN literature. Moreover, the performance of the model is highly sensitive to the choice of the cost intensity hyperparameter $\tau$.  Investigating non-adversarial training strategies for the conditional optimal transport maps within the CUOT framework represents a meaningful future research direction.

\bibliographystyle{IEEEtran}
\bibliography{mybib.bib}

\appendices
\section{Definitions} \label{app:def}
This section provides formal definitions for the mathematical components utilized in our work.

\begin{definition}[Csiszár Divergence]
Let $\varphi: \mathbb{R}_+ \to \mathbb{R}$ be a convex, lower semi-continuous function such that $\varphi(1)=0$. Given two measures $\alpha, \beta \in \mathcal{M}_+(\mathcal{X})$, let $\alpha = \alpha_{ac} + \alpha^{\perp}$ be the Radon-Nikodym-Lebesgue decomposition of $\alpha$ with respect to $\beta$, where $\alpha_{ac} \ll \beta$ and $\alpha^{\perp} \perp \beta$. The \textit{Csiszár divergence} $D_{\varphi}$ is defined as:
\begin{equation}
D_{\varphi}(\alpha \| \beta) := \int_{\mathcal{X}} \varphi\left(\frac{\mathrm{d} \alpha_{ac}}{\mathrm{d} \beta}(x)\right) \mathrm{d} \beta(x) 
+ \varphi_{\infty}^{\prime} \int_{\mathcal{X}} \mathrm{d} \alpha^{\perp}(x),
\end{equation}
where $\frac{\mathrm{d}\alpha_{ac}}{\mathrm{d}\beta}$ is the Radon-Nikodym derivative and $\varphi_{\infty}^{\prime} := \lim_{x \to \infty} \frac{\varphi(x)}{x}$ represents the recession constant, which accounts for the singular part of the measure.
\end{definition}

\begin{definition}[Conditional $p$-Wasserstein Distance]
Following the formulation in \cite{kerrigan2024dynamic}, let $\eta, \nu \in \mathcal{P}_p^\mu(\mathcal{Y} \times \mathcal{U})$ for $1 \leq p < \infty$, where $\mathcal{P}_p^\mu(\mathcal{Y} \times \mathcal{U})$ denotes the conditional $p$-Wasserstein space with a fixed $y$-marginal $\mu$:
\begin{equation}
    \mathcal{P}_p^\mu(Y \times U) = \{ \gamma \in \mathcal{P}_p(Y \times U) \mid \pi_{\#}^Y \gamma = \mu \}.
\end{equation}
The \textit{conditional $p$-Wasserstein distance} $\mathcal{W}_p^\mu : \mathcal{P}_p^\mu(\mathcal{Y} \times \mathcal{U}) \times \mathcal{P}_p^\mu(\mathcal{Y} \times \mathcal{U}) \to \mathbb{R}$ is defined as:
\begin{equation}
\begin{aligned}
    \mathcal{W}_p^\mu(\eta, \nu) &= \left( \mathbb{E}_{y \sim \mu} \left[ \mathcal{W}_p^p(\eta(\cdot|y), \nu(\cdot|y)) \right] \right)^{1/p} \\&= \left( \int_{\mathcal{Y}} \mathcal{W}_p^p(\eta(\cdot|y), \nu(\cdot|y)) \, \mathrm{d}\mu(y) \right)^{1/p}
    \label{eq:cond_wasserstein}
\end{aligned}
\end{equation}
where $\mathcal{W}_p$ denotes the standard $p$-Wasserstein distance for probability measures supported on the data space $\mathcal{U}$. 

\end{definition}

\section{Proofs}
 
\subsection{Proof of Theorem \ref{thm:existence} (Existence and uniqueness of Conditional UOT problem)} 

\begin{theorem}[Existence and Uniqueness of the Minimizer in CUOT]
Assume that $\mathcal{Y}, \mathcal{V},$ and $\mathcal{U}$ are compact metric spaces. Let $\mathcal P(\cdot)$ be equipped with the topology of weak convergence and $\eta \in \mathcal{P}(\mathcal{Y} \times \mathcal{V})$ and $\nu \in \mathcal{P}(\mathcal{Y} \times \mathcal{U})$ be given source and target measures such that $\eta_Y = \nu_Y$. Suppose the cost function $c: (\mathcal{Y} \times \mathcal{V}) \times (\mathcal{Y} \times \mathcal{U}) \rightarrow \mathbb{R}$ is lower semi-continuous. 
Then, there exists a minimizer $\pi^\star$ for the CUOT problem (Eq. \ref{eq:condUOT}). Furthermore, if the entropy functions $\Psi_1$ and $\Psi_2$ are strictly convex, the minimizer $\pi^\star$ is unique.
\end{theorem}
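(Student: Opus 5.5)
We argue by the direct method of the calculus of variations. Write $J(\pi)$ for the objective in Eq. \ref{eq:condUOT} and $\mathcal A$ for the admissible set of positive measures $\pi$ on $\mathcal Y\times\mathcal V\times\mathcal Y\times\mathcal U$ with $\pi_Y=\eta_Y$ and support in the diagonal $\{y=y'\}$.

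\textbf{Step 1: mass and tightness.} The constraint $\pi_Y=\eta_Y$ forces $\pi(\mathcal Y\times\mathcal V\times\mathcal Y\times\mathcal U)=\eta_Y(\mathcal Y)=1$, so every admissible $\pi$ is a probability measure. Since $\mathcal Y,\mathcal V,\mathcal U$ are compact, $\mathcal A$ is tight, and by Prokhorov it is relatively compact for weak convergence. The constraint $\pi_Y=\eta_Y$ passes to weak limits, because projection is continuous. The diagonal $\{y=y'\}$ is closed, so the support condition also passes to weak limits by the Portmanteau theorem. Hence $\mathcal A$ is weakly compact. It is nonempty: take $(y,v)\mapsto(y,v,y,u)$ with $u\sim\nu(\cdot|y)$, i.e. $\pi=\int\eta(\cdot|y)\otimes\nu(\cdot|y)\,d\eta_Y$, which gives a finite value when $c$ is bounded below. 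This holds since a lower semi-continuous function on a compact set is bounded below.

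\textbf{Step 2: lower semicontinuity.} The cost term $\pi\mapsto\int c\,d\pi$ is weakly l.s.c. because $c$ is l.s.c. and bounded below (approximate $c$ from below by continuous functions). For the divergence terms, the key observation is that on $\mathcal A$ the $Y$-marginal is fixed and equal to $\eta_Y$. Disintegration then gives
\begin{equation*}
\int_{\mathcal Y} D_{\Psi_1}\bigl(\pi_1(\cdot|y)\,\|\,\eta(\cdot|y)\bigr)\,d\eta_Y(y)=D_{\Psi_1}(\pi_1\,\|\,\eta),
\end{equation*}
since $d\pi_1/d\eta(y,v)=d\pi_1(\cdot|y)/d\eta(\cdot|y)(v)$ and singular parts decompose fiberwise. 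The same identity holds for $\Psi_2$ with $\pi_2$ and $\nu$. We can therefore use the standard joint weak l.s.c. of Csiszár divergences, via the dual representation $D_\Psi(\alpha\|\beta)=\sup_{f\in\mathcal C}\int f\,d\alpha-\int\Psi^*(f)\,d\beta$ (Liero--Mielke--Savaré). This is a supremum of weakly continuous functionals of $\alpha$, composed with the continuous marginal maps $\pi\mapsto\pi_1,\pi_2$. I expect this identification of the conditional-averaged divergence with the joint divergence to be the main technical point. It needs a measurable selection of disintegrations and a check that the recession term $\Psi'_\infty\alpha^\perp$ splits fiberwise. With it in place, a minimizing sequence has a weakly convergent subsequence in $\mathcal A$, and l.s.c. yields a minimizer $\pi^\star$.

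\textbf{Step 3: uniqueness.} The set $\mathcal A$ is convex, the cost term is linear, and $D_{\Psi_i}$ is convex in its first argument. If $\Psi_i$ are strictly convex, then $D_{\Psi_1}(\cdot\|\eta)$ is strictly convex on the absolutely continuous part. Given two minimizers $\pi,\pi'$, consider their midpoint. Strict inequality would contradict minimality unless $\pi_1=\pi'_1$ and $\pi_2=\pi'_2$; for the singular parts we use linearity of $\Psi'_\infty\alpha^\perp$ combined with strict convexity on the density part. In the general Kantorovich setting, equal marginals do not force $\pi=\pi'$. To close the argument I would therefore either invoke uniqueness of the optimal conditional transport plan between the fixed marginals $\pi_1,\pi_2$, which holds for the quadratic cost via the triangular Brenier-type result \cite[Prop. 3.8]{hosseini2025conditional} when $\eta(\cdot|y)$ is absolutely continuous, or state uniqueness at the level of the marginals $(\tilde\eta,\tilde\nu)$. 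This last point is where the claimed uniqueness may need the additional regularity assumption.
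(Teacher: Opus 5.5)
\paragraph{Existence.} Your existence argument follows the same route as the paper: Prokhorov compactness on the compact product space, lower semicontinuity of $\pi\mapsto\int c\,d\pi$ and of the divergence terms, then the direct method as in Villani's Theorem~4.1. It is more careful in exactly the places where the paper is silent.

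The paper works in $\mathcal{P}$ although the problem is posed over $\mathcal{M}_+$. Your observation that $\pi_Y=\eta_Y$ forces unit mass justifies this reduction.

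The paper never checks that the constraint set is weakly closed. Your Portmanteau argument for the diagonal, together with continuity of the projection, supplies this.

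The paper simply asserts that the $y$-averaged divergences are weakly l.s.c., although the conditionals $\pi_1(\cdot|y)$ do not depend continuously on $\pi$. Your identification $\int_{\mathcal{Y}} D_{\Psi_1}(\pi_1(\cdot|y)\|\eta(\cdot|y))\,d\eta_Y(y)=D_{\Psi_1}(\pi_1\|\eta)$ on the admissible set is correct: disintegrating the joint Lebesgue decomposition of $\pi_1$ with respect to $\eta$ gives the fiberwise one. This reduces everything to the standard joint l.s.c.

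One slip: boundedness \emph{below} of $c$ does not make your product plan have finite cost. This is harmless. Finiteness is not needed for existence, since if the infimum is $+\infty$ every admissible plan is a minimizer. It is also automatic for the continuous quadratic cost.

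\paragraph{Uniqueness.} You are right to balk, and here the paper's own argument does not hold up. It claims the divergence terms are ``strictly convex with respect to $\pi$'', hence $K_{\mathrm{CUOT}}$ is strictly convex. But these terms depend on $\pi$ only through the linear, non-injective maps $\pi\mapsto\pi_1,\pi_2$. So $K_{\mathrm{CUOT}}$ is affine along any segment of admissible plans with common marginals.

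The uniqueness claim as stated is in fact false. Take the following data:
\begin{itemize}
\item $\mathcal{Y}=[0,1]$ and $\mathcal{V}=\mathcal{U}=[0,1]^2$, with $\eta_Y=\nu_Y=\delta_0$;
\item $\eta(\cdot|0)=\tfrac12(\delta_{(0,0)}+\delta_{(1,1)})$ and $\nu(\cdot|0)=\tfrac12(\delta_{(1,0)}+\delta_{(0,1)})$;
\item the quadratic cost, and strictly convex KL entropies.
\end{itemize}
A finite value forces $\pi_1\ll\eta$ and $\pi_2\ll\nu$. Every source atom is at squared distance $1$ from every target atom, so $K_{\mathrm{CUOT}}(\pi)=\tau+D_{\Psi_1}(\pi_1\|\eta)+D_{\Psi_2}(\pi_2\|\nu)\ge\tau$. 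Equality holds iff $\pi_1=\eta$ and $\pi_2=\nu$. Hence every coupling of these two measures, placed on $\{y=y'\}$, is a minimizer, giving a one-parameter family. So no argument can close your Step~3 without extra hypotheses.

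\paragraph{Repair.} Your proposed fix is the right one; commit to it explicitly in two parts.
\begin{itemize}
\item[(i)] Suppose $\Psi_1,\Psi_2$ are strictly convex with $(\Psi_i)'_{\infty}=+\infty$, as for the KL, $\chi^2$ and softplus entropies used in the paper. Then the relaxed marginals $\tilde\eta,\tilde\nu$ are unique. Your midpoint argument gives equality of the densities $d\pi_1/d\eta$ and $d\pi_2/d\nu$, and the infinite recession slope rules out singular parts. Drop your sentence about singular parts in the finite-slope case. Linearity of the recession term says nothing about \emph{where} the singular mass sits, so the midpoint argument does not identify it.
\item[(ii)] For the quadratic cost with $\eta(\cdot|y)$ absolutely continuous for $\eta_Y$-a.e.\ $y$, the plan itself is unique. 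Any minimizer is an optimal conditional coupling of its own marginals, because the penalty terms are frozen once the marginals are. Since $\tilde\eta\ll\eta$, each $\tilde\eta(\cdot|y)$ inherits absolute continuity. Fiberwise Brenier uniqueness, via the conditional Brenier result you cite, then determines $\pi^\star$.
\end{itemize}
The four-point example shows that the absolute-continuity hypothesis cannot be dropped.
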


\begin{proof}
    We establish the existence of a minimizer $\pi^\star \in \mathcal{P}(\mathcal{Y} \times \mathcal{V} \times \mathcal{Y} \times \mathcal{U})$ by verifying the conditions of Theorem 4.1 in \cite{villani}. Specifically, we demonstrate that the space of probability measures $\mathcal{P}(\mathcal{Y} \times \mathcal{V} \times \mathcal{Y} \times \mathcal{U})$ is compact and that the functional $K_{\mathrm{CUOT}}(\pi)$ is lower semi-continuous (l.s.c) with respect to the topology of weak convergence. The $K_{\mathrm{CUOT}}$ is defined as :
    \begin{equation}
        \label{eq:K_CUOT_def}
        \begin{aligned}
        K_{\mathrm{CUOT}}(\pi)
        :=\;&\int_{\mathcal{Y}\times\mathcal{V}\times\mathcal{Y}\times\mathcal{U}}
        c(y,v,y',u)\, d\pi(y,v,y',u) \\
        &+ \int_{\mathcal{Y}}
        D_{\Psi_1}\!\left(\pi_1(\cdot\mid y)\,\middle\|\,\eta(\cdot\mid y)\right)\, d\eta_Y(y) \\
        &+ \int_{\mathcal{Y}}
        D_{\Psi_2}\!\left(\pi_2(\cdot\mid y')\,\middle\|\,\nu(\cdot\mid y')\right)\, d\nu_Y(y')
\end{aligned}
\end{equation}

    First, since $\mathcal{Y}$, $\mathcal{V}$, and $\mathcal{U}$ are assumed to be compact metric spaces in Theorem \ref{thm:existence}, their product $\mathcal{Y} \times \mathcal{V} \times \mathcal{Y} \times \mathcal{U}$ is also a compact metric space. By Prokhorov's theorem (as referenced in \cite{villani}, Theorem 4.1), the space of probability measures $\mathcal{P}(\mathcal{Y} \times \mathcal{V} \times \mathcal{Y} \times \mathcal{U})$ is compact under the weak topology.

    Next, we address the lower semi-continuity of the objective functional $K_{\mathrm{CUOT}}$. Given that the cost function $c$ is non-negative and lower semi-continuous, the integral functional $\int c \, d\pi$ is l.s.c by Lemma 4.2 in \cite{villani}. Furthermore, the Csiszár $\Psi$-divergence is inherently lower semi-continuous with respect to the weak convergence of measures. Since the property of lower semi-continuity is preserved under functional summation, $K_{\mathrm{CUOT}}$ is lower semi-continuous.

    In conclusion, as $K_{\mathrm{CUOT}}$ is a lower semi-continuous functional defined on a compact space of measures, the existence of a minimizer $\pi^\star$ such that $K_{\mathrm{CUOT}}(\pi^\star) = \inf K_{\mathrm{CUOT}}$ is guaranteed by the Theorem 4.1 in \cite{villani}.

    Furthermore, we establish the uniqueness of the minimizer $\pi^\star$. This follows directly from the strictly convexity of the objective functional $K_{\mathrm{CUOT}}$. First, the integral term $\int c \, d\pi$ is linear with respect to $\pi$ and is, therefore, convex. Under the assumption in Theorem \ref{thm:existence} that the penalty functions $\Psi_1$ and $\Psi_2$ are strictly convex, the corresponding Csiszár divergences are also strictly convex with respect to $\pi$. Since the sum of a convex functional and strictly convex functionals is itself strictly convex, the objective functional $K_{\mathrm{CUOT}}$ is strictly convex. Consequently, the minimizer $\pi^\star$ is unique.
\end{proof}

\subsection{Proof of Theorem \ref{thm:dual} (Derivation of the Semi Dual Formulation of Conditional UOT)}  

\begin{theorem}[Duality for CUOT] 
    For a non-negative cost function $c$, the Conditional Unbalanced Optimal Transport problem has the following dual (Eq. \ref{eq:cuot-dual}) and semi-dual (Eq. \ref{eq:cuot-semidual}) formulation: 
    
    \textbf{Dual Formulation:}
    \begin{multline} 
        \sup_{\phi + \varphi \le c} \left[ \int_{\mathcal{Y}\times\mathcal{V}} -\Psi_1^* \bigl( -\phi(y,v) \bigr) \, \mathrm{d}\eta(y,v) \right. \\
        \left. + \int_{\mathcal{Y}\times\mathcal{U}} -\Psi_2^* \bigl( -\varphi(y,u) \bigr) \, \mathrm{d}\nu(y,u) \right]
    \end{multline} 
    
    \textbf{Semi-Dual Formulation:}
    \begin{multline} 
        \sup_{\varphi} \left[ \int_{\mathcal{Y}\times\mathcal{V}} -\Psi_1^* \bigl( -\varphi^{c}(y,v) \bigr) \, \mathrm{d}\eta(y,v) \right. \\
        \left. + \int_{\mathcal{Y}\times\mathcal{U}} -\Psi_2^* \bigl( -\varphi(y,u) \bigr) \, \mathrm{d}\nu(y,u) \right]
    \end{multline}
    where $\Psi_1^*$ and $\Psi_2^*$ denote the Legendre-Fenchel conjugates of the divergence-defining functions $\Psi_1$ and $\Psi_2$ in the CUOT problem, respectively. The functional variables $\phi \in \mathcal{C(Y \times V)}$ and $\varphi \in \mathcal{C(Y \times U)}$ are referred to as the \textbf{potential functions}.
 
    Furthermore, $\varphi^{c}(y,v)$ represents the $y$-conditional $c$-transform, which is defined as:
    \begin{equation} 
    \varphi^{c}(y,v) := \inf_{u \in  \mathcal{U}} \left\{ c(y, v; y, u) - \varphi(y, u) \right\}.
    \end{equation}
\end{theorem}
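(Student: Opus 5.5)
We derive the dual via Fenchel--Rockafellar duality, mirroring the unconditional UOT derivation \cite{chizat2018unbalanced, uotm}. The new feature is the conditional structure of the divergence terms.

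\textbf{Step 1: unconditionalize the divergences.} The constraint $\pi_Y=\eta_Y=\nu_Y$ means the $Y$-marginals of $\pi_1$ and $\eta$ agree. We disintegrate $\pi_1(dy,dv)=\eta_Y(dy)\,\pi_1(dv\mid y)$ and $\eta(dy,dv)=\eta_Y(dy)\,\eta(dv\mid y)$. Then $\frac{d\pi_1}{d\eta}(y,v)=\frac{d\pi_1(\cdot\mid y)}{d\eta(\cdot\mid y)}(v)$, and the singular parts also disintegrate fiberwise. Hence
\begin{equation*}
\int_{\mathcal Y} D_{\Psi_1}\bigl(\pi_1(\cdot\mid y)\,\|\,\eta(\cdot\mid y)\bigr)\,d\eta_Y(y)=D_{\Psi_1}(\pi_1\,\|\,\eta),
\end{equation*}
and likewise for $\Psi_2$. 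The primal therefore becomes an unbalanced OT problem over $\pi\in\mathcal M_+$ supported on the diagonal $\{y=y'\}$. The remaining constraint is $\pi_Y=\nu_Y$. We handle it by observing that the divergences with respect to $\eta$ and $\nu$ already control the $Y$-marginal. Concretely, we allow fiberwise mass with the marginal fixed. Equivalently, we identify $\pi$ with a measure $\gamma$ on $\mathcal Y\times\mathcal V\times\mathcal U$ via $(y,v,u)\mapsto(y,v,y,u)$, with marginals $\gamma_1$ on $\mathcal Y\times\mathcal V$ and $\gamma_2$ on $\mathcal Y\times\mathcal U$.

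\textbf{Step 2: Fenchel--Rockafellar.} We work on the compact space $X=\mathcal Y\times\mathcal V\times\mathcal U$. Define $F(\phi,\varphi)=\int\Psi_1^*(-\phi)\,d\eta+\int\Psi_2^*(-\varphi)\,d\nu$ on $E=\mathcal C(\mathcal Y\times\mathcal V)\times\mathcal C(\mathcal Y\times\mathcal U)$. Let $A:E\to\mathcal C(X)$ be $A(\phi,\varphi)(y,v,u)=\phi(y,v)+\varphi(y,u)$, and let $G(f)=\iota_{\{f\le c\}}$. The adjoint $A^*$ maps $\gamma\mapsto(\gamma_1,\gamma_2)$, and $G^*(\gamma)=\int c\,d\gamma$ for $\gamma\ge0$, and $+\infty$ otherwise. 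The standard identity $\sup_{\phi}\int(-\phi)\,d\alpha-\int\Psi^*(-\phi)\,d\beta=D_\Psi(\alpha\|\beta)$ is the dual representation of Csiszár divergences, with the recession constant handling the singular part. Using it, $F^*(-A^*\gamma)$ reproduces the two divergence terms. The qualification condition is continuity of $G$ at $A(\phi_0,\varphi_0)$ for $\phi_0=\varphi_0=-1$, which satisfies $\phi_0+\varphi_0<0\le c$ strictly. This uses the hypothesis that $c\ge 0$, so we may also assume $\Psi_i^*$ is finite near $1$. The theorem then gives zero duality gap: the primal equals $\sup_{\phi(y,v)+\varphi(y,u)\le c}\bigl[-\int\Psi_1^*(-\phi)\,d\eta-\int\Psi_2^*(-\varphi)\,d\nu\bigr]$, which is Eq.~\eqref{eq:cuot-dual}. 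Note the constraint is only imposed on the diagonal $y=y'$.

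\textbf{Step 3: semi-dual.} Since $\Psi_1^*$ is nondecreasing ($\Psi_1$ is supported on $\mathbb R_+$), $-\Psi_1^*(-\phi)$ is nondecreasing in $\phi$. For fixed $\varphi$, the largest admissible $\phi$ is $\phi(y,v)=\inf_u\{c(y,v;y,u)-\varphi(y,u)\}=\varphi^c(y,v)$. Substituting it yields Eq.~\eqref{eq:cuot-semidual}. On compact spaces with continuous $c$ and $\varphi$, the function $\varphi^c$ is continuous, so it stays in the admissible class.

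\textbf{Main obstacle.} The delicate point is the constraint $\pi_Y=\eta_Y$ in Step 1. Fenchel--Rockafellar applied to the reduced problem optimizes over all $\gamma\ge0$. Its optimizer need not have the prescribed $Y$-marginal, whereas Eq.~\eqref{eq:cuot-dual} contains no multiplier for that constraint. I would first try to show that the fiberwise rewriting makes the two problems agree. The alternative is to interpret the conditional divergence for general $\pi$ as $D_{\Psi_1}(\pi_1\|\eta)$, as the paper implicitly does. Otherwise one would add a potential $\lambda(y)$ and show it can be absorbed into $\phi$ and $\varphi$.
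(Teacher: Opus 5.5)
Your Steps 1--3 follow the paper's own proof. The paper also uses $\pi_Y=\eta_Y$ to replace the conditional density ratios by $d\pi_0/d\eta$, and sets the cost to $+\infty$ off $\{y=y'\}$ (your identification of $\pi$ with $\gamma$ on $\mathcal Y\times\mathcal V\times\mathcal U$). It then puts multipliers only on $P=\pi_0$ and $Q=\pi_1$, and invokes Fenchel--Rockafellar at $\phi\equiv\varphi\equiv-1$. Your version is cleaner: explicit $F$, $A$, $G$, the dual representation of $D_\Psi$, and monotonicity of $\Psi_1^*$ to justify $\phi=\varphi^c$, where the paper only appeals to ``tightness''. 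One small correction: $\Psi_i^*(1)<\infty$ is a separate assumption, not a consequence of $c\ge 0$.

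The obstacle you flag, however, is a genuine gap, and the paper's proof has it too, silently. Step 2 proves strong duality for $\inf_{\gamma\in\mathcal{M}_+(\mathcal Y\times\mathcal V\times\mathcal U)}\int c\,d\gamma+D_{\Psi_1}(\gamma_1\|\eta)+D_{\Psi_2}(\gamma_2\|\nu)$, in which each fiber's mass is free. That is not \eqref{eq:condUOT} with the hard constraint, and your remedies (a) and (c) cannot bridge the difference.

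For (c): with a multiplier $\lambda(y)$ the dual objective becomes $\int\lambda\,d\eta_Y-\int\Psi_1^*(\lambda-\tilde\phi)\,d\eta-\int\Psi_2^*(-\varphi)\,d\nu$ over $\tilde\phi+\varphi\le c$. Here $\lambda$ drops out only if $\Psi_1^*(s+\lambda)=\Psi_1^*(s)+\lambda$, i.e.\ in the balanced case. For KL, maximizing in $\lambda$ turns $1-\int e^{-\tilde\phi}\,d\eta(\cdot|y)$ into $-\log\int e^{-\tilde\phi}\,d\eta(\cdot|y)$.

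For (a): the two values really differ. Take $\eta_Y=\nu_Y=\tfrac12(\delta_{y_0}+\delta_{y_1})$, $\eta(\cdot|y_0)=\eta(\cdot|y_1)=\delta_0$, $\nu(\cdot|y_0)=\delta_0$, $\nu(\cdot|y_1)=\delta_L$, with KL entropies. The hard constraint forces $\pi(\cdot|y_1)=\delta_{(0,L)}$, so the CUOT value is $\tfrac12\tau L^2$. The relaxed problem, and hence the supremum in \eqref{eq:cuot-dual}, equals $1-e^{-\tau L^2/2}<\tfrac12\tau L^2$.

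So only your option (b) is viable. State the primal with conditional plans of arbitrary mass disintegrated against $\eta_Y$; equivalently, use joint divergences and drop the hard $Y$-marginal constraint. This is also what the paper's later fiberwise rewriting with $\pi(\cdot|y)\in\mathcal{M}_+$ tacitly uses. For that problem your argument proves both \eqref{eq:cuot-dual} and \eqref{eq:cuot-semidual}.
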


\begin{proof}
Let $\tilde{c}$ be the extension of our cost function $c$ to $y \neq y'$
\begin{equation} 
\label{eq:tilde_cost}
    \tilde{c}(y,v,y',u) = 
    \begin{cases}
        c(y,v,y',u) = \tau \|v-u \|_{2}^{2} \qquad \textrm{if } y = y'\\
        \infty \qquad \textrm{if } y \neq y'
    \end{cases}
\end{equation}
The primal problem of Conditional Unbalanced Optimal Transport (CUOT) Kantorovich is defined using the Csisz\'ar divergence as follows:
\begin{equation}
\begin{aligned}
    \inf_{\pi} \quad & \int_{\mathcal{Y}\times\mathcal{V}\times \mathcal{Y} \times \mathcal{U}} \tilde{c}(y,v,y',u)\, \mathrm{d}\pi(y,v,y',u) \\
    &+ \int_{\mathcal{Y}} \left[ \int_{\mathcal{V}} \Psi_1 \left( \frac{\mathrm{d}\pi_{\mathcal{V}}(v|y)}{\mathrm{d}\eta_{\mathcal{V}}(v|y)} \right) \mathrm{d}\eta_{\mathcal{V}}(v|y) \right] \mathrm{d}\eta_Y(y) \\
    &+ \int_{\mathcal{Y}} \left[ \int_{\mathcal{U}} \Psi_2 \left( \frac{\mathrm{d}\pi_{\mathcal{U}}(u|y')}{\mathrm{d}\nu_{\mathcal{U}}(u|y')} \right) \mathrm{d}\nu_{\mathcal{U}}(u|y') \right] \mathrm{d}\nu_Y(y').
\end{aligned}
\end{equation}

Let $\pi_0$ and $\pi_1$ denote the projections of $\pi$ onto $\mathcal{Y}\times\mathcal{V}$ and $\mathcal{Y}\times\mathcal{U}$, respectively. Under the assumption that the $y$-marginals are balanced (i.e., $\pi_{Y} = \eta_{Y}$), the density ratio simplifies to the joint density ratio:
\begin{equation}
    \frac{\mathrm{d}\pi_{\mathcal{V}}(v|y)}{\mathrm{d}\eta_{\mathcal{V}}(v|y)} = \frac{\mathrm{d}\pi_0(y,v)}{\mathrm{d}\eta(y,v)}.
\end{equation}
Applying the same logic to the second penalty term, the primal problem can be compactly rewritten in terms of the marginals $\pi_0$ and $\pi_1$:
\begin{equation}
\begin{aligned}
    \inf_{\pi} \quad & \int \tilde{c}(y,v,y',u)\, \mathrm{d}\pi(y,v,y',u) \\
    &+ \int_{\mathcal{Y} \times \mathcal{V}} \Psi_1 \left( \frac{\mathrm{d}\pi_0(y,v)}{\mathrm{d}\eta(y,v)} \right) \, \mathrm{d}\eta(y,v) \\
    &+ \int_{\mathcal{Y} \times \mathcal{U}} \Psi_2 \left( \frac{\mathrm{d}\pi_1(y',u)}{\mathrm{d}\nu(y',u)} \right) \, \mathrm{d}\nu(y',u).
\end{aligned}
\end{equation}

To derive the dual, we introduce auxiliary measures $P$ and $Q$ constrained such that $P = \pi_0$ and $Q = \pi_1$. Using Lagrange multipliers $\phi \in C_b(\mathcal{Y}\times\mathcal{V})$ and $\varphi \in C_b(\mathcal{Y}\times\mathcal{U})$ to enforce these constraints, the Lagrangian $\mathcal{L}(\pi, P, Q, \phi, \varphi)$ is given by:
\begin{equation}
\begin{aligned}
    \mathcal{L} &= \int \tilde c \, \mathrm{d}\pi 
    + \int \Psi_1 \left( \frac{\mathrm{d}P}{\mathrm{d}\eta} \right) \mathrm{d}\eta 
    + \int \Psi_2 \left( \frac{\mathrm{d}Q}{\mathrm{d}\nu} \right) \mathrm{d}\nu \\
    &\quad + \int \phi(y,v) \left( \mathrm{d}P(y,v) - \mathrm{d}\pi_0(y,v) \right) \\
    &\quad + \int \varphi(y',u) \left( \mathrm{d}Q(y',u) - \mathrm{d}\pi_1(y',u) \right).
\end{aligned}
\end{equation}
Note that $\int \phi \, \mathrm{d}\pi_0 = \int \phi(y,v) \, \mathrm{d}\pi(y,v,y',u)$. The dual objective function $g(\phi,\varphi) = \inf_{\pi, P, Q} \mathcal{L}$ is then separable:

\begin{equation}
\begin{aligned}
    g(&\phi,\varphi) = \inf_{P} \left[ \int \phi\, \mathrm{d}P + \int \Psi_1 \left( \frac{\mathrm{d}P}{\mathrm{d}\eta} \right) \, \mathrm{d}\eta \right] 
    \\&+ \inf_{Q} \left[ \int \varphi\, \mathrm{d}Q + \int \Psi_2 \left( \frac{\mathrm{d}Q}{\mathrm{d}\nu} \right) \, \mathrm{d}\nu \right]
    \\& + \inf_{\pi} \left[ \int \Bigl( \tilde{c}(y,v,y',u) - \phi(y,v) - \varphi(y',u) \Bigr)\, \mathrm{d}\pi \right].
\end{aligned}
\end{equation}

Using the definition of the convex conjugate $\Psi^*(z) = \sup_{t} (tz - \Psi(t))$, the minimization over $P$ yields:
\begin{equation}
\label{eq:fenchel-transform-step}
\begin{aligned}
    &-\sup_{P} \left[ \int \left( -\phi \frac{\mathrm{d}P}{\mathrm{d}\eta} -  \Psi_1 \left( \frac{\mathrm{d}P}{\mathrm{d}\eta} \right) \right) \mathrm{d}\eta \right] = \\
    & -\int \Psi_1^* \bigl( -\phi(y,v) \bigr) \, \mathrm{d}\eta(y,v).
\end{aligned}
\end{equation}
A similar result holds for the term involving $Q$ and $\Psi_2^*$. To avoid the trivial solution $-\infty$ in the minimization over $\pi$, we impose the constraint:
\[
    \phi(y,v) + \varphi(y',u) \le \tilde c(y,v,y',u) \quad \text{almost everywhere.}
\]
This leads to the dual formulation:
\begin{equation}
\label{eq:final-dual-uot}
\begin{aligned}
    \sup_{\substack{\phi,\varphi \\ \phi + \varphi \le \tilde c}} & \left[ \int_{\mathcal{Y}\times\mathcal{V}} -\Psi_1^* \bigl( -\phi(y,v) \bigr) \, \mathrm{d}\eta(y,v) \right. \\
    & \quad \left. + \int_{\mathcal{Y}\times\mathcal{U}} -\Psi_2^* \bigl( -\varphi(y',u) \bigr) \, \mathrm{d}\nu(y',u) \right].
\end{aligned}
\end{equation}

Since $c \ge 0$, choosing $\phi \equiv -1$ and $\varphi \equiv -1$ ensures finiteness, satisfying the conditions for Fenchel-Rockafellar's theorem, which implies strong duality. Finally, utilizing the tightness condition $\phi(y,v) \approx \inf_{u} [c(y,v,y,u) - \varphi(y,u)]$, we obtain the semi-dual form:
\begin{equation}
\label{eq:semi-dual-final}
\begin{aligned}
    \sup_{\varphi} \biggl[ &\int_{\mathcal{Y}\times\mathcal{V}} -\Psi_1^* \left( - \inf_{u \in  \mathcal{U}} \bigl[ c(y,v,y,u) - \varphi(y,u) \bigr] \right) \mathrm{d}\eta(y,v) \\
    &+ \int_{\mathcal{Y}\times\mathcal{U}} -\Psi_2^*\bigl(-\varphi(y,u)\bigr) \, \mathrm{d}\nu(y,u) \biggr].
\end{aligned}
\end{equation}
\end{proof}

\subsection{Proof of Theorem \ref{thm:parameter} (Validity of Parameterization for CUOT)}

\begin{theorem}[Validity of Parameterization for CUOT] 

Given the existence of an optimal potential $\varphi^\star$ in the CUOT problem [Eq.\ref{eq:cuot-semidual}], there exists an optimal triangular map $T^{\triangle\star} : \mathcal Y \times\mathcal V \rightarrow \mathcal Y \times \mathcal U$, equivalently $T^{\Delta \star}(y,v) = (y,\, T^\star(y,v)).$ 
such that 
\begin{multline}
    T^\star(y,v) \in \operatorname{arginf}_{u \in \mathcal U} [c(y, v; y, u) - \varphi^\star(y, u)] \\ 
    \text{$\eta$-almost surely.}
\end{multline}
 
By the equivalence established in Eq.\ref{eq:c-transform param}, the $y$-conditional $c$-transform $\varphi^c$ is validly parameterized by the triangular map $T_\varphi^\triangle(y, v) = (y, T_\varphi(y, v))$ as follows:
\begin{equation}
    \varphi^c(y, v) = c(y, v; y, T_\varphi(y, v)) - \varphi(y, T_\varphi(y, v)).
\end{equation}
In particular, the following inequality holds:
\begin{equation}
\begin{aligned}
    \tau \left({\mathcal{W}_2^{\nu_Y}}\right)^2(\eta, \nu) &\geq \int_{\mathcal Y} D_{\Psi_1}\!\big(\tilde{\eta}(\cdot\mid y)\,\|\,\eta(\cdot\mid y)\big)\, \mathrm d\nu_Y(y) \\
&+ \int_{\mathcal Y} D_{\Psi_2}\!\big(\tilde{\nu}(\cdot\mid y)\,\|\,\nu(\cdot\mid y)\big)\, \mathrm d\nu_Y(y).
\end{aligned}
\end{equation}
where $\tilde{\eta} := \pi_{1}^{\star}$ and $\tilde{\nu} := \pi_{2}^{\star}$ denote the relaxed conditional marginal distributions of the optimal CUOT plan, which are explicitly given by $d\tilde\eta(y,v) = \Psi_1^{*'}(-\varphi^*(y,v))\,d\eta(y,v)$ and $d\tilde\nu(y,u) = \Psi_2^{*'}(-\phi^*(y,u))\,d\nu(y,u)$. Here, ${\mathcal{W}_2^{\nu_Y}}$ denotes the Conditional Wasserstein-2 distance defined in Eq. \ref{eq:cond_wasserstein}
\end{theorem}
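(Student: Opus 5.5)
The argument has two parts. First, I link the optimal potential $\varphi^\star$ to the optimal plan $\pi^\star$ from Theorem \ref{thm:existence} through complementary slackness. Second, I compare the optimal CUOT value against the value of an admissible competitor built from conditional OT plans.

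\textbf{Step 1 (existence of the triangular map).} Let $\phi^\star=(\varphi^\star)^c$, so that $(\phi^\star,\varphi^\star)$ attains the dual (Eq.~\ref{eq:cuot-dual}) and, by Theorem \ref{thm:dual}, has no duality gap with $\pi^\star$. Writing the primal value minus the dual value gives a sum of three nonnegative terms:
\[
\int (\tilde c-\phi^\star-\varphi^\star)\,d\pi^\star ,
\]
plus two Fenchel--Young gaps of the form $\Psi_1(\frac{d\pi_1^\star}{d\eta})+\Psi_1^*(-\phi^\star)+\phi^\star\frac{d\pi_1^\star}{d\eta}\ge 0$ (and the analogous term for $\Psi_2$). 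Zero gap forces each term to vanish. The first term vanishing means $\pi^\star$ is concentrated on $\{y=y',\ \varphi^\star(y,u)+(\varphi^\star)^c(y,v)=c(y,v;y,u)\}$. Equivalently, for $\pi^\star$-a.e.\ $(y,v,y,u)$, $u$ belongs to the argmin set $\operatorname{arginf}_u[c(y,v;y,u)-\varphi^\star(y,u)]$.

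The Fenchel--Young equalities give $d\pi_1^\star=\Psi_1^{*\prime}(-\phi^\star)\,d\eta$ and $d\pi_2^\star=\Psi_2^{*\prime}(-\varphi^\star)\,d\nu$; this is the stated explicit form of $\tilde\eta,\tilde\nu$, modulo the labeling of the potentials.

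To produce a map, note that for fixed $(y,v)$ the argmin is a nonempty compact set: $\mathcal U$ is compact and $\varphi^\star$ is continuous. The set-valued map $(y,v)\mapsto\operatorname{argmin}$ is closed-graph. A measurable selection theorem (Kuratowski--Ryll-Nardzewski) then gives a Borel $T^\star$ with $T^\star(y,v)$ in the argmin for every $(y,v)$, hence $\eta$-a.s.

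Uniqueness of the selection is not claimed. It would hold for the quadratic cost under $y$-wise semiconcavity of $\varphi^\star$ and absolute continuity of $\eta(\cdot|y)$, via the conditional Brenier theorem \cite[Prop.~3.8]{hosseini2025conditional}. Setting $T^{\triangle\star}=(y,T^\star)$, Eq.~\ref{eq:c-transform param} immediately yields the claimed identity $\varphi^c=c(\cdot,T_\varphi)-\varphi(\cdot,T_\varphi)$.

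\textbf{Step 2 (divergence bound).} I compare the optimal value $K_{\mathrm{CUOT}}(\pi^\star)$ with the competitor $\pi^C=\int \delta_y\otimes\gamma_y\,d\nu_Y(y)$. Here $\gamma_y$ is an optimal $\mathcal W_2$ coupling of $\eta(\cdot|y)$ and $\nu(\cdot|y)$, chosen measurably in $y$ by measurable selection of optimal plans (Villani, Cor.~5.22). Then $\pi^C$ is admissible: its $y$-marginal is $\nu_Y=\eta_Y$ and its support lies on $\{y=y'\}$. Its conditional marginals coincide with $\eta(\cdot|y)$ and $\nu(\cdot|y)$, so both divergence terms vanish because $\Psi_i(1)=0$. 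Its cost is $\tau(\mathcal W_2^{\nu_Y})^2(\eta,\nu)$. Using $c\ge 0$, optimality gives
\[
\tau(\mathcal W_2^{\nu_Y})^2(\eta,\nu)=K_{\mathrm{CUOT}}(\pi^C)\ge K_{\mathrm{CUOT}}(\pi^\star)\ge \text{(two divergence terms of }\pi^\star),
\]
which is Eq.~\ref{eq:div_bound}, since $\eta_Y=\nu_Y$.

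\textbf{Main obstacle.} The delicate step is Step 1's complementary slackness. It requires the dual supremum to be attained by a pair in which $\phi^\star$ equals $(\varphi^\star)^c$, and it requires $\tilde c=\infty$ off the diagonal to be handled properly. I would first restrict the dual to the diagonal set $\{y=y'\}$, which is closed, on which $c$ is continuous and bounded on compacts. On that set the classical UOT slackness argument \cite{uotm} applies verbatim, since the $c$-transform only improves the dual objective because $\Psi_1^*$ is nondecreasing.
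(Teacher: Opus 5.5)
Your Step 2 is the paper's own argument for \eqref{eq:div_bound}. Both restrict the $y$-wise problem to $\Pi(\eta(\cdot|y),\nu(\cdot|y))$, where the divergences vanish and the value is $\tau(\mathcal W_2^{\nu_Y})^2(\eta,\nu)$. The paper passes through an intermediate term $\tau(\mathcal W_2^{\nu_Y})^2(\tilde\eta,\tilde\nu)$ before discarding it, while you drop $\int c\,d\pi^\star\ge 0$ directly and make the measurable choice of $\gamma_y$ explicit.

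Step 1 is a genuinely different route. The paper linearizes the dual at the optimum (Lemma~\ref{lemma:linear}) and treats $\tilde\eta,\tilde\nu$ as the weights of the resulting linear functional. It then asserts that CUOT reduces to a balanced OT problem between them, and cites Villani and Hosseini et al.\ for a unique Monge map in the argmin of $\tilde c-\phi^\star$, triangular because $\tilde c=\infty$ off the diagonal. You instead split the primal--dual gap into a slackness term and two Fenchel--Young gaps. This actually proves that $\pi_1^\star,\pi_2^\star$ have the stated densities; the paper never checks that its linearization weights are the marginals of $\pi^\star$. It also shows $\pi^\star$ is concentrated on the argmin graph. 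Kuratowski--Ryll-Nardzewski then gives a Borel selector from compactness and continuity alone, which is exactly what the inner minimization in \eqref{eq:cuot-Jphi} needs.

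The paper's route additionally claims that the map induces the plan, $\pi^\star=(\mathrm{Id},T^{\triangle\star})_\#\tilde\eta$. Your slackness step gives this as soon as the argmin is a singleton $\tilde\eta$-a.e. Neither argument can obtain it from the stated hypotheses, since the Brenier-type result the paper cites needs absolute continuity of $\eta(\cdot|y)$. So your caveat is warranted. The regularity you attribute to $\varphi^\star$ is not needed, though: for the quadratic cost, $(\varphi^\star)^c(y,\cdot)$ is automatically semiconcave. At each of its differentiability points, any minimizer must equal $v-\nabla_v(\varphi^\star)^c(y,v)/(2\tau)$.

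One caveat you share with the paper: both arguments rest on Theorem~\ref{thm:dual}, whose derivation carries no multiplier for the constraint $\pi_Y=\eta_Y$. So the UOT slackness argument you call verbatim really applies to the diagonal problem without that constraint. With the constraint, the values can differ: for KL with $\eta(\cdot|y)=\delta_0$ and $\nu(\cdot|y)=\delta_1$, the constrained primal value is $\tau$, while \eqref{eq:cuot-dual} gives $2-2e^{-\tau/2}$. Your Step 2 does not depend on this.
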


\begin{lemma} \label{lemma:linear}
For $(\varphi,\phi)\in C(Y\times V)\times C(Y\times U)$, define
\begin{equation} \label{eq:lemma}
\begin{aligned}
I(\varphi,\phi)
:=\;& \int_{Y\times V} -\Psi_1^*\!\big(-\varphi(y,v)\big)\, d\eta(y,v) \\ 
&+ \int_{Y\times U} -\Psi_2^*\!\big(-\phi(y',u)\big)\, d\nu(y',u).
\end{aligned}
\end{equation}
Then for any variations $(\delta\varphi,\delta\phi)\in C(Y\times V)\times C(Y\times U)$,
the functional derivative of $I$ is given by
\begin{equation}
\begin{aligned}
&\left.\frac{d}{d\epsilon}\, I(\varphi+\epsilon\delta\varphi,\ \phi+\epsilon\delta\phi)\right|_{\epsilon=0}= \\
&\int_{Y\times V} \delta\varphi(y,v)\, \Psi_1^{*'}\!\big(-\varphi(y,v)\big)\, d\eta(y,v) \\
+
&\int_{Y\times U} \delta\phi(y',u)\, \Psi_2^{*'}\!\big(-\phi(y',u)\big)\, d\nu(y',u).
\end{aligned}
\end{equation}    
\end{lemma}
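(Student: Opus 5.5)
The lemma is a Gateaux derivative computation. I will differentiate under the integral sign, treating the two integrals in $I$ separately because $I$ is additively separable in $\varphi$ and $\phi$. Throughout, I assume (as the algorithm in the paper does) that $\Psi_1^*$ and $\Psi_2^*$ are differentiable, convex and non-decreasing on $\mathbb{R}$. The spaces $Y\times V$ and $Y\times U$ are compact, as in Theorem \ref{thm:existence}, so $\varphi,\phi,\delta\varphi,\delta\phi$ are bounded continuous functions.

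First I compute the pointwise derivative. For fixed $(y,v)$, set $g_\epsilon(y,v) := -\Psi_1^*\bigl(-\varphi(y,v)-\epsilon\,\delta\varphi(y,v)\bigr)$. By the chain rule,
\[
\frac{d}{d\epsilon} g_\epsilon(y,v) = \delta\varphi(y,v)\,\Psi_1^{*'}\bigl(-\varphi(y,v)-\epsilon\,\delta\varphi(y,v)\bigr),
\]
and at $\epsilon=0$ this equals $\delta\varphi(y,v)\,\Psi_1^{*'}(-\varphi(y,v))$. The same computation applies to the $\nu$-term with $\Psi_2^*$, $\phi$ and $\delta\phi$.

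Next I justify interchanging the derivative and the integral; this step carries the only real content. Let $M:=\|\varphi\|_\infty+\|\delta\varphi\|_\infty$. For $|\epsilon|\le 1$, every argument of $\Psi_1^{*'}$ lies in the compact interval $[-M,M]$. A convex differentiable function on $\mathbb{R}$ has a continuous, monotone derivative, so $\Psi_1^{*'}$ is bounded on $[-M,M]$ by some constant $L$. The mean value theorem then bounds the difference quotient:
\[
\bigl|(g_\epsilon-g_0)/\epsilon\bigr| \le L\,\|\delta\varphi\|_\infty .
\]
This bound is a constant, hence integrable against the probability measure $\eta$. Dominated convergence therefore lets the limit $\epsilon\to 0$ pass inside the integral. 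The $\nu$-term is handled identically.

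Adding the two differentiated terms gives the stated formula. The obstacle I anticipate is the regularity of $\Psi_i^*$. If $\Psi_i^*$ is only convex and takes the value $+\infty$ somewhere, as happens for some Csiszár entropies, I would restrict to potentials whose range lies inside the interior of $\mathrm{dom}\,\Psi_i^*$. Compactness keeps the range uniformly inside that interior, so for small $\epsilon$ the same argument goes through on a slightly smaller interval.
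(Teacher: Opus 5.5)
Your proposal is correct and follows the same route as the paper. Both apply the chain rule pointwise to $\epsilon\mapsto -\Psi_i^*(-\varphi-\epsilon\,\delta\varphi)$ and then differentiate under the integral sign. The only difference is that you justify exchanging the derivative and the integral, via the mean value bound and dominated convergence on compact domains, which the paper's proof takes for granted.
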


\begin{proof}
For any $(f,g)\in C(Y\times V)\times C(Y\times U)$, let
\begin{equation}
\begin{aligned}
I(f,g)
&:=
\int_{Y\times V} -\Psi_1^*\!\big(-f(y,v)\big)\, d\eta(y,v) \\
&+\int_{Y\times U} -\Psi_2^*\!\big(-g(y',u)\big)\, d\nu(y',u).
\end{aligned}
\end{equation}
Then, for arbitrarily given potentials $(\varphi,\phi)\in C(Y\times V)\times C(Y\times U)$,
and perturbations $(\delta\varphi,\delta\phi)\in C(Y\times V)\times C(Y\times U)$ which satisfy $
(\varphi+\delta\varphi)(y,v)+(\phi+\delta\phi)(y',u)
\le  c\big(y,v,y',u)\big)$,
\begin{equation}
\begin{aligned}
\mathcal{L}(\epsilon)
&:=\frac{d}{d\epsilon}\, I(\varphi+\epsilon\delta\varphi,\ \phi+\epsilon\delta\phi).
\end{aligned}
\end{equation}
Note that $\mathcal{L}(\epsilon)$ is a continuous function of $\epsilon$. Moreover, by the chain rule,
\begin{equation}
\begin{aligned}
\mathcal{L}(\epsilon)
&=
\int_{Y\times V}\delta\varphi(y,v)\,
\Psi_1^{*'}\!\big(-(\varphi+\epsilon\delta\varphi)(y,v)\big)\, d\eta(y,v)
\\
&\quad+
\int_{Y\times U}\delta\phi(y',u)\,
\Psi_2^{*'}\!\big(-(\phi+\epsilon\delta\phi)(y',u)\big)\, d\nu(y',u).
\end{aligned}
\end{equation}
Thus, the functional derivative of $I(\varphi,\phi)$, i.e.\ $\mathcal{L}(0)$, is given by
\begin{equation}
\begin{aligned}
\mathcal{L}(0)
&=
\int_{Y\times V}\delta\varphi(y,v)\,\Psi_1^{*'}\!\big(-\varphi(y,v)\big)\, d\eta(y,v)\\
&+
\int_{Y\times U}\delta\phi(y',u)\,\Psi_2^{*'}\!\big(-\phi(y',u)\big)\, d\nu(y',u),
\end{aligned}
\end{equation}
where $(\delta\varphi,\delta\phi)\in C(Y\times V)\times C(Y\times U)$ denote the variations of
$(\varphi,\phi)$, respectively.
\end{proof}

\begin{proof}
By Lemma \ref{lemma:linear}, the derivative of Eq.\ref{eq:lemma} is
\begin{equation}\label{eq:cuot_deriv}
\begin{aligned}
    \int_{Y\times V} & \delta\varphi(y,v)\Psi_1^{*'}\!\big(-\varphi^*(y,v)\big)\,d\eta(y,v)
    \\&+
    \int_{Y\times U}\delta\phi(y',u)\Psi_2^{*'}\!\big(-\phi^*(y',u)\big)\,d\nu(y',u).
\end{aligned}
\end{equation}
Any such linearization must satisfy the inequality constraint
\[
(\varphi^*+\delta\varphi)(y,v)+(\phi^*+\delta\phi)(y',u)\le
 \tilde{c}\big(y,v,y',u\big),
\]
so that $(\delta\varphi,\delta\phi)$ defines an admissible variation. Under this constraint, the linearized dual form can be expressed as
\begin{equation}\label{eq:cuot_linearized}
    \sup_{(\delta\varphi,\delta\phi)\in C(Y\times V)\times C(Y\times U)}
    \int_{Y\times V}\delta\varphi\,d\tilde\eta+\int_{Y\times U}\delta\phi\,d\tilde\nu,
\end{equation}
where $\delta\varphi(y,v)+\delta\phi(y',u) \le \tilde{c}\big(y,v,y',u)\big)-\varphi^*(y,v)-\phi^*(y',u)$, $d\tilde\eta(y,v) = \Psi_1^{*'}(-\varphi^*(y,v))\,d\eta(y,v)$, and $d\tilde\nu(y',u) = \Psi_2^{*'}(-\phi^*(y',u))\,d\nu(y',u).$

Moreover, when $(\varphi^*, \phi^*)$ is optimal, the inequality constraint together with tightness
(i.e.\ $ \tilde{c} - \varphi^* - \phi^* = 0$ $\tilde\pi^*$-almost everywhere)
ensures that the linearized dual objective is non-positive.
In other words, Eq.\ref{eq:cuot_linearized} can be viewed as the linearization of the OT dual
\begin{equation}\label{eq:cuot_ot_dual_style}
    \sup_{\varphi+\phi\le  c}
    \left[
    \int_{Y\times V}\varphi\,d\tilde\eta
    +
    \int_{Y\times U}\phi\,d\tilde\nu
    \right].
\end{equation}
Therefore, the CUOT problem reduces to a standard OT formulation with marginals $\tilde{\eta}$ and $\tilde{\nu}$.
By \cite{villani, hosseini2025conditional}, there exists a unique measurable OT map $T^*$ solving the Monge problem between $\tilde{\eta}$ and $\tilde{\nu}$.
Furthermore, as shown in \cite{villani}, it satisfies
\begin{equation}
    T^*(y, v) \in \operatorname*{arg\,inf}_{(y', u) \in \mathcal{Y} \times \mathcal{U}}
\left\{ \tilde{c}(y, v, y', u)) - \phi^*(y', u) \right\}.
\end{equation}

Hence, recall that the cost function $\tilde{c}$ in Eq. \ref{eq:tilde_cost} is infinite whenever $y \neq y'$.
Consequently, the induced optimal transport map $T^*$ inherits the triangular structure, satisfying:
$$ T^*(y, v) = (y, T_{\mathcal{U}}(y, v)) \quad \tilde{\eta}\text{-almost surely}.$$ This justifies the parameterization of the map in our semi-dual formulation as a triangular map.

Eq.~\ref{eq:condUOT} can be equivalently written as
\begin{equation}
\label{eq:condUOT_wise}
\begin{aligned}
\inf_{\pi(\cdot| y)\in\mathcal M_+(\mathcal{V} \times \mathcal{U})}
\int_{\mathcal Y}&\Big[
\int_{\mathcal V\times\mathcal U} c(y,v,y,u)\,\mathrm d\pi(v,u| y)
\\&+ D_{\Psi_{1}}\big(\pi_1(\cdot | y) \| \eta(\cdot | y)\big)
\\&+ D_{\Psi_{2}}\big(\pi_2(\cdot | y) \| \nu(\cdot | y)\big)
\Big]\mathrm d\nu_Y(y)
\end{aligned}
\end{equation}
Now, if we  constrain the conditional plan to the strict coupling set $\Pi(\eta(\cdot| y),\nu(\cdot|y))$, the divergence terms vanish and the objective reduces to the conditional OT cost, whose optimum equals $\tau \mathcal{W}_2^2(\eta(\cdot| y),\nu(\cdot| y))$. Thus, for the optimal $\pi(\cdot | y)\in\mathcal M_+$ where $\tilde{\eta}:=\pi_{1}(\cdot|y)$ and $\tilde{\nu}:=\pi_{2}(\cdot|y)$, it is trivial that

\begin{equation}
\begin{aligned}
    \tau {\mathcal{W}_2^{\nu_Y}}^2&(\eta, \nu) \geq \tau {\mathcal{W}_2^{\nu_Y}}^2(\tilde{\eta}, \tilde{\nu}) \\&+ \int_{\mathcal Y} D_{\Psi_1}\!\big(\tilde{\eta}(\cdot| y)\,\|\,\eta(\cdot| y)\big)\, \mathrm d\nu_Y(y) \\
&+ \int_{\mathcal Y} D_{\Psi_2}\!\big(\tilde{\nu}(\cdot| y)\,\|\,\nu(\cdot| y)\big)\, \mathrm d\nu_Y(y).\\
    &\geq  \int_{\mathcal Y} D_{\Psi_1}\!\big(\tilde{\eta}(\cdot| y)\,\|\,\eta(\cdot| y)\big)\, \mathrm d\nu_Y(y) \\
&+ \int_{\mathcal Y} D_{\Psi_2}\!\big(\tilde{\nu}(\cdot| y)\,\|\,\nu(\cdot| y)\big)\, \mathrm d\nu_Y(y).
\end{aligned}
\end{equation}

\end{proof}

\subsection{Marginal Tightness in \texorpdfstring{$\alpha$}{alpha}-scheduled CUOT} \label{app:tightness_alpha}
\begin{proof}
Recall that CUOT problem Eq.~\ref{eq:condUOT} can be equivalently written as Eq.~\ref{eq:condUOT_wise}. 
In particular, the $\alpha$-scaled CUOT problem $C_{ub}^{\alpha}$ is equivalent to the CUOT problem $C_{ub}$ with a rescaled cost intensity $\tau \mapsto \tau/\alpha$. Indeed,
{
\small
\begin{equation} 
\label{eq:alpah_CUOT_converge}
\begin{aligned}
\pi^{\alpha,\star} =
\operatorname*{arg\,inf}_{\pi^{\alpha}(\cdot| y)\in \mathcal M_+(\mathcal V\times \mathcal U)}
\int_{\mathcal Y}&\Big[
\int_{\mathcal V\times\mathcal U} \tau \| v-u \|_{2}^{2}\,\mathrm d\pi^{\alpha}(v,u| y)
\\&+ \alpha D_{\Psi_{1}}\big(\pi_1^{\alpha}(\cdot | y) \| \eta(\cdot | y)\big)
\\&+ \alpha D_{\Psi_{2}}\big(\pi_2^{\alpha}(\cdot | y) \| \nu(\cdot | y)\big)
\Big]\mathrm d\nu_Y(y)
\\=
\operatorname*{arg\,inf}_{\pi(\cdot| y)\in \mathcal M_+(\mathcal V\times \mathcal U)}
\int_{\mathcal Y}&\Big[
\int_{\mathcal V\times\mathcal U} \frac{\tau}\alpha   \| v-u \|_{2}^{2}\,\mathrm d\pi(v,u| y)
\\&+ D_{\Psi_{1}}\big(\pi_1(\cdot | y) \| \eta(\cdot | y)\big)
\\&+ D_{\Psi_{2}}\big(\pi_2(\cdot | y) \| \nu(\cdot | y)\big)
\Big]\mathrm d\nu_Y(y)
\end{aligned}
\end{equation}}
As established in Theorem.~\ref{thm:parameter}, the marginal discrepancies scale linearly with the cost intensity. For the optimal plan $\pi^{\alpha,\star}$, this relationship can be interpreted as:
\begin{equation}
\label{eq:tightness_alpha}
\begin{aligned}
\frac{\tau}{\alpha}\,{\mathcal{W}_2^{\nu_Y}}^2(\eta, \nu)
&\ge
\int_{\mathcal Y} D_{\Psi_1}\!\big(\tilde{\eta}^\alpha(\cdot| y)\,\|\,\eta(\cdot| y)\big)\, \mathrm d\nu_Y(y) \\
&+ \int_{\mathcal Y} D_{\Psi_2}\!\big(\tilde{\nu}^\alpha(\cdot| y)\,\|\,\nu(\cdot| y)\big)\, \mathrm d\nu_Y(y).
\end{aligned}
\end{equation}
where $\tilde{\eta}^{\alpha}(\cdot| y):=\pi^{\alpha,\star}_{1}(\cdot| y)$ and $\tilde{\nu}^{\alpha}(\cdot | y):=\pi^{\alpha,\star}_{2}(\cdot | y)$. Therefore, as $\alpha \to \infty$, the conditional marginal distributions of $\pi^{\alpha,\star}$ converge in the Csiszár divergences to the source and target conditional distributions, i.e.,
\begin{equation}
\label{eq:alpha_infty_marginals}
\begin{aligned}
\lim_{\alpha\to\infty} D_{\Psi_1}&\!\big(\tilde\eta^{\alpha}(\cdot| y)\,\|\,\eta(\cdot| y)\big)
\\&=
\lim_{\alpha\to\infty} D_{\Psi_2}\!\big(\tilde\nu^{\alpha}(\cdot| y)\,\|\,\nu(\cdot| y)\big)
= 0
\end{aligned}
\end{equation}
for $\nu_Y$-a.e.\ $y\in\mathcal Y$. This formulation implies that $\alpha$-scheduling to dynamically tighten the distribution matching throughout the training process. 
\end{proof}

\section{Experiment Details}
\label{sec:Experiment_details}
In this section, we describe the experimental setup and the implementation details of our model.
\subsection{Data} 
\label{sec:data}
We will describe the dataset setup used to evaluate conditional generative modeling.

\paragraph{2D synthetic}
\label{sec:data-2D}
The 2D synthetic dataset is a two-dimensional dataset \cite{pedregosa2011scikit}. Specifically, it includes moons, circles, swissroll, and checkerboard. The dataset is defined as the set $\{(u,y) \in \mathbb{R}^2 | u \in U = \mathbb{R}, y \in Y = \mathbb{R}\}$. Given $y$, we perform conditional modeling of $\nu(u|y)$. We evaluated the performance using the Wasserstein-2 ($\mathcal W_2$) distance. For the dataset size, we generated 20,000 samples for training and 5,000 samples for testing. All other data arguments were set following \cite{kerrigan2024dynamic}, and we used the generation code from their work.

\paragraph{CIFAR-10}
\label{sec:data-cifar}
The CIFAR-10 dataset \cite{cifar10} consists of 50,000 training images distributed across 10 classes, all of which were utilized for our class-conditional generation experiments. To evaluate the quality of the generated images at regular intervals during training, we used the entire training set of 50,000 samples as the reference distribution. Our evaluation metrics include the Fréchet Inception Distance (FID) \cite{fid}, Inception Score (IS) \cite{salimans2016improved}, and Intra-class FID (IFID) \cite{miyato2018cgans}. All other experimental configurations and setup details follow the protocols described in \cite{uotm}.

\subsection{Evaluation Metric}
For the 2D synthetic dataset, we measured the generative performance using the Wasserstein-2 ($\mathcal{W}_2$) distance. For the CIFAR-10 experiments, we adopted FID \cite{fid}, IS \cite{salimans2016improved}, and IFID \cite{miyato2018cgans} as our primary evaluation metrics. The IFID is determined by averaging the individual FID scores measured across all distinct classes..

\subsection{Implementation Details}
\label{sec:implementation_details}
In this section, we describe the implementation and training details for each dataset. We keep the Conditional UOT framework consistent to ensure theoretical guarantees. At the same time, we choose network architectures based on the domain characteristics of low-dimensional synthetic data and high-dimensional image data.

\paragraph{2D synthetic Data}
\label{2D implementations}
We followed the toy data implementation from \cite{uotm} and extended it to a conditional setting. The triangular mapping structure is defined as $(T_Y(y),T_U(T_Y(y), x, z))$. To implement this, the condition $y$ and data $x$ are first embedded into 128-dimensional vectors via two and three Residual Blocks, respectively. These embeddings are then concatenated and added to a 256-dimensional auxiliary variable $z$. The resulting vector is fed into the output module. The Residual Blocks and the output module are identical to those in \cite{uotm}. Since the first component of the triangular mapping, i.e. $T_Y(y)$, is an identity mapping, the output module only generates the $U$-component. For the potential network (discriminator), we added a module to process $y$ that is identical to the existing module used for $x$. We set the hyperparameters, including $\tau$, epochs, batch size, and learning rate, as follows: $\tau = 0.0007$, 800 epochs, batch size of 256, and learning rates of $1.6 \times 10^{-4}$ and $10^{-4}$ for the generator and discriminator, respectively. We achieved the best performance when using (KL, KL) as the convex conjugate and adjusting the R1 regularization hyperparameter to $\gamma = 0.1$.

\paragraph{CIFAR-10}
\label{CIFAR-10 implementations}
Our implementation is primarily based on \cite{uotm}. But we modify several points of \cite{uotm} for conditioning our generator and potential. The potential network follows Projection discriminator\cite{miyato2018cgans}, which is widely adopted in conditional generative adversarial networks. For the generator, we use NCSN++ \cite{song2020score} as the backbone and integrate the conditional information via AdaGN modulation, drawing inspiration from the AdaIN mechanism in \cite{styleganada2020}.

\paragraph{Generating function $\Psi$ of the Csisár divergence} \label{app:candidate_div}

In this section, we introduce candidate functions for $\Psi$. We focus on convex conjugate $\Psi^*$ that satisfy the requirements of being convex, differentiable, and non-decreasing, as employed in table \ref{tab:divergence}
\\
\begin{itemize}
    \item \textbf{KL divergence}:
    \begin{equation}
    \begin{aligned}
        &\Psi(x) = \begin{cases} x \log x - x + 1, & \text{if } x > 0 \\ \infty, & \text{if } x \le 0 \end{cases} \\
        &\Psi^*(x) = e^x - 1
    \end{aligned}
    \end{equation}

    \item \textbf{$\chi^2$ divergence}:
    \begin{equation}
    \begin{aligned}
        &\Psi(x) = \begin{cases} (x-1)^2, & \text{if } x \ge 0 \\ \infty, & \text{if } x < 0 \end{cases} \\
        &\Psi^*(x) = \begin{cases} \frac{1}{4}x^2 + x, & \text{if } x \ge -2 \\ -1, & \text{if } x < -2 \end{cases}
    \end{aligned}
    \end{equation}
    \item \textbf{softplus}: 
        \begin{equation}
        \begin{aligned}
            &\Psi(x) = \begin{cases} x \log x + (1-x)\log(1-x), & \text{if } 0 \le x \le 1 \\
            \infty, & \text{otherwise}
            \end{cases} \\
            &\Psi^*(x) = \log(1+e^{x})
        \end{aligned}
        \end{equation}
\end{itemize}

\section{Additional Result} \label{app:addtional_results}

\begin{figure*}[t]
    \centering

    \begin{minipage}{0.15\linewidth} \centering \small \hspace*{14pt}Target \end{minipage} \hfill
    \begin{minipage}{0.15\linewidth} \centering \small \hspace*{14pt}COTM \end{minipage} \hfill
    \begin{minipage}{0.15\linewidth} \centering \small \hspace*{14pt}CUOTM \end{minipage} \hfill
    \begin{minipage}{0.21\linewidth} \centering \small \hspace*{11pt}COTM (KDE) \end{minipage} \hfill
    \begin{minipage}{0.21\linewidth} \centering \small \hspace*{11pt}CUOTM (KDE) \end{minipage}
    
    \vspace{0pt}

    \begin{minipage}[t]{0.15\linewidth}
        \includegraphics[width=\textwidth, align=c]{figure/2moons/2moons_true.png}
    \end{minipage} \hfill
    \begin{minipage}[t]{0.15\linewidth}
        \includegraphics[width=\textwidth, align=c]{figure/2moons/2moons_OT.png}
    \end{minipage} \hfill
    \begin{minipage}[t]{0.15\linewidth}
        \includegraphics[width=\textwidth, align=c]{figure/2moons/2moons_UOT.png}
    \end{minipage} \hfill
    \begin{minipage}[t]{0.21\linewidth}
        \includegraphics[width=\textwidth, align=c]{figure/2moons/2moons_OT_kde.png}
    \end{minipage} \hfill
    \begin{minipage}[t]{0.21\linewidth}
        \includegraphics[width=\textwidth, align=c]{figure/2moons/2moons_UOT_kde.png}
    \end{minipage}
    
    \medskip
    
    \begin{minipage}[t]{0.15\linewidth}
        \includegraphics[width=\textwidth, align=c]{figure/board/checkerboard_true.png}
    \end{minipage} \hfill
    \begin{minipage}[t]{0.15\linewidth}
        \includegraphics[width=\textwidth, align=c]{figure/board/checkerboard_OT.png}
    \end{minipage} \hfill
    \begin{minipage}[t]{0.15\linewidth}
        \includegraphics[width=\textwidth, align=c]{figure/board/checkerboard_UOT.png}
    \end{minipage} \hfill
    \begin{minipage}[t]{0.21\linewidth}
        \includegraphics[width=\textwidth, align=c]{figure/board/checkerboard_OT_kde.png}
    \end{minipage} \hfill
    \begin{minipage}[t]{0.21\linewidth}
        \includegraphics[width=\textwidth, align=c]{figure/board/checkerboard_UOT_kde.png}
    \end{minipage}
    
    \medskip
    
    \begin{minipage}{0.15\linewidth}
        \includegraphics[width=\textwidth, align=c]{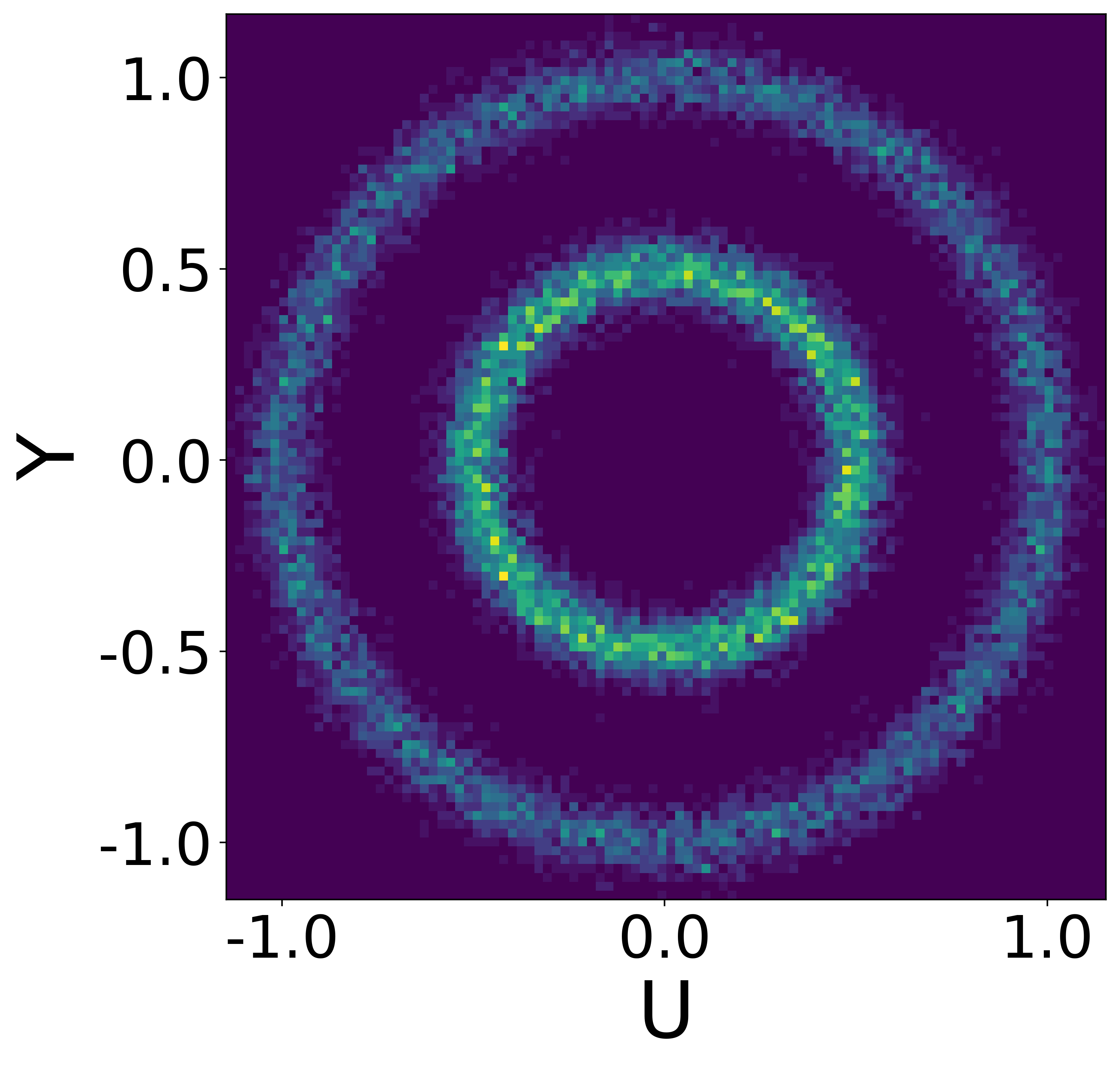}
    \end{minipage} \hfill
    \begin{minipage}{0.15\linewidth}
        \includegraphics[width=\textwidth, align=c]{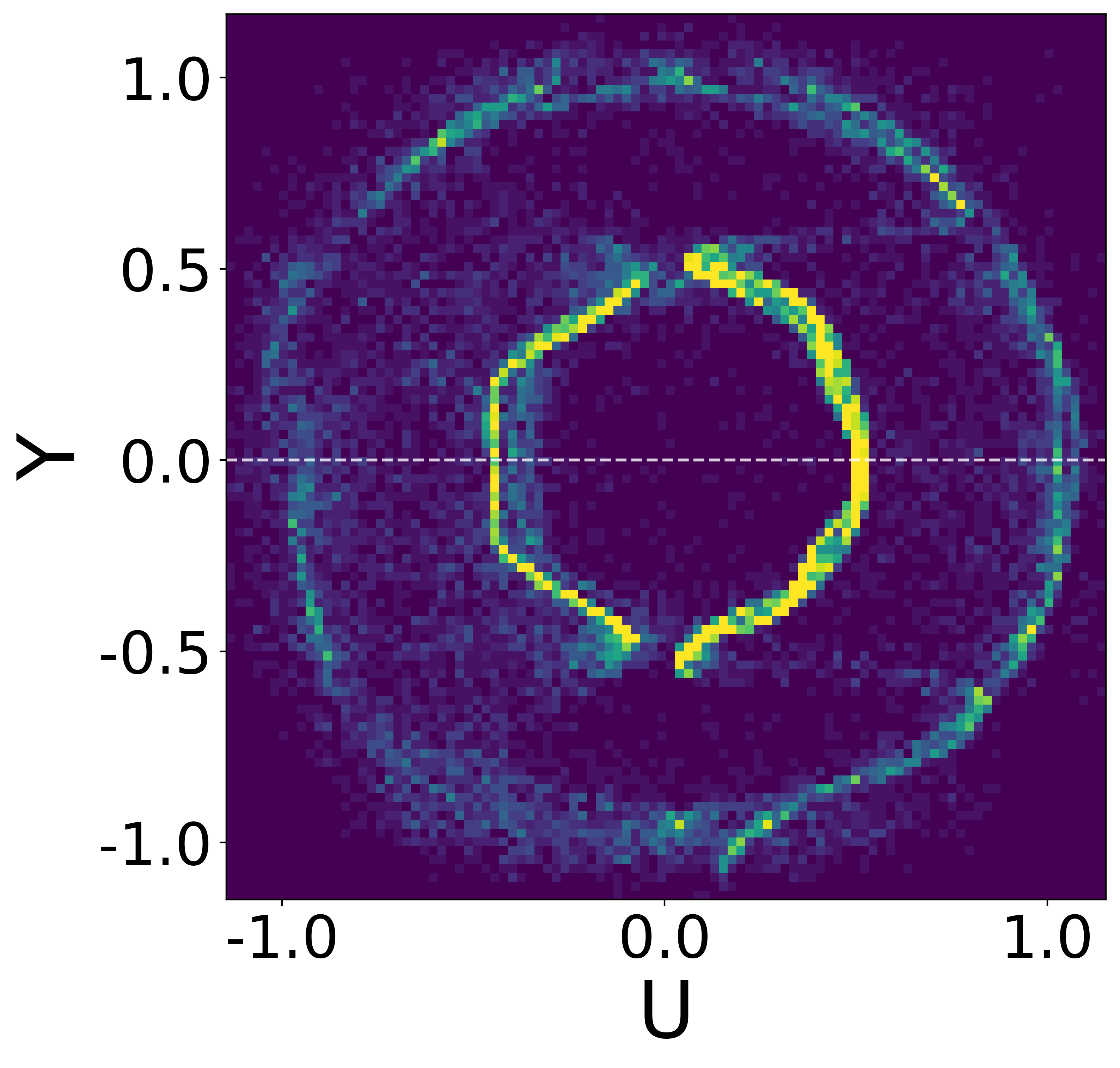}
    \end{minipage} \hfill
    \begin{minipage}{0.15\linewidth}
        \includegraphics[width=\textwidth, align=c]{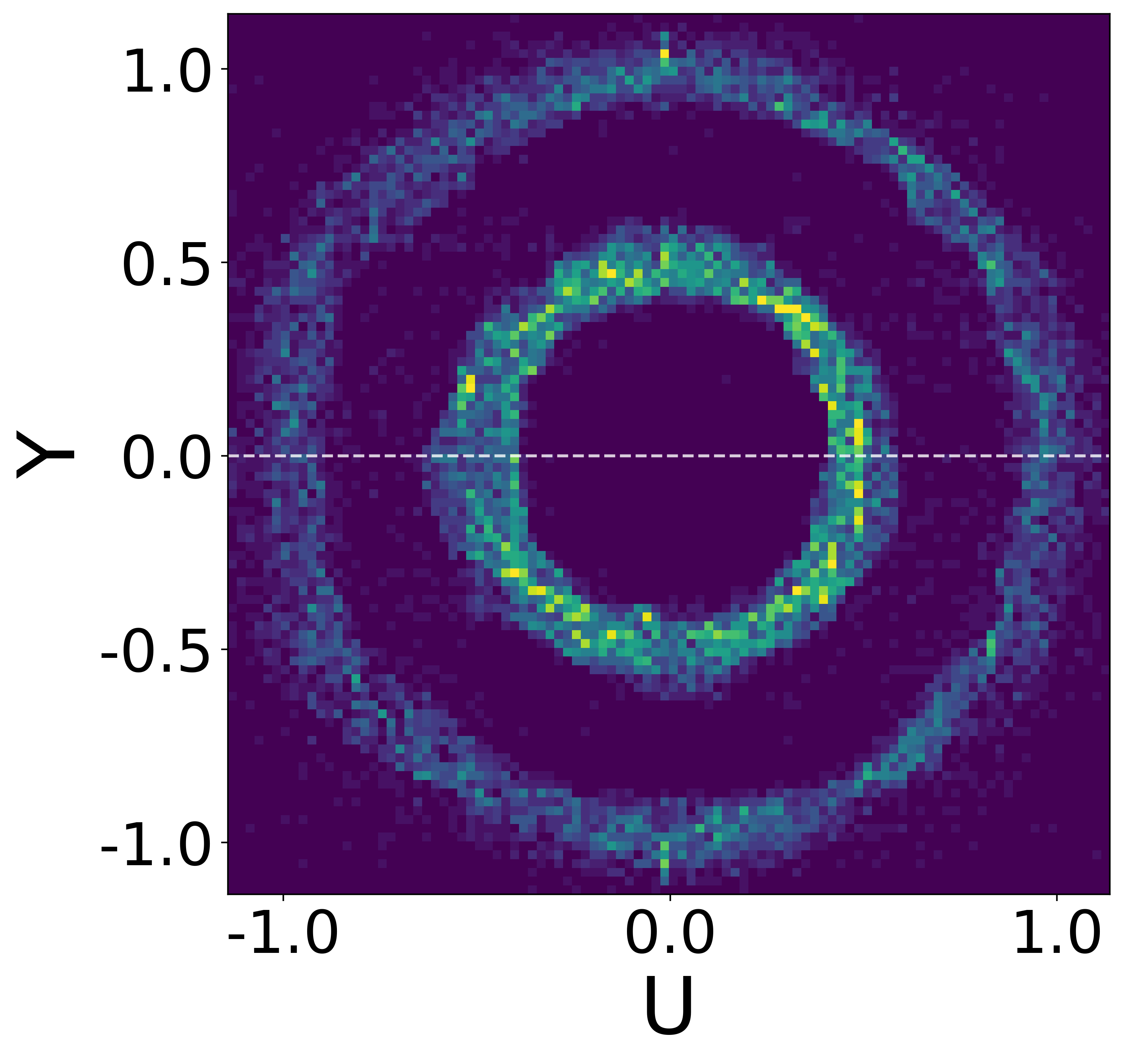}
    \end{minipage} \hfill
    \begin{minipage}{0.21\linewidth}
        \includegraphics[width=\textwidth, align=c]{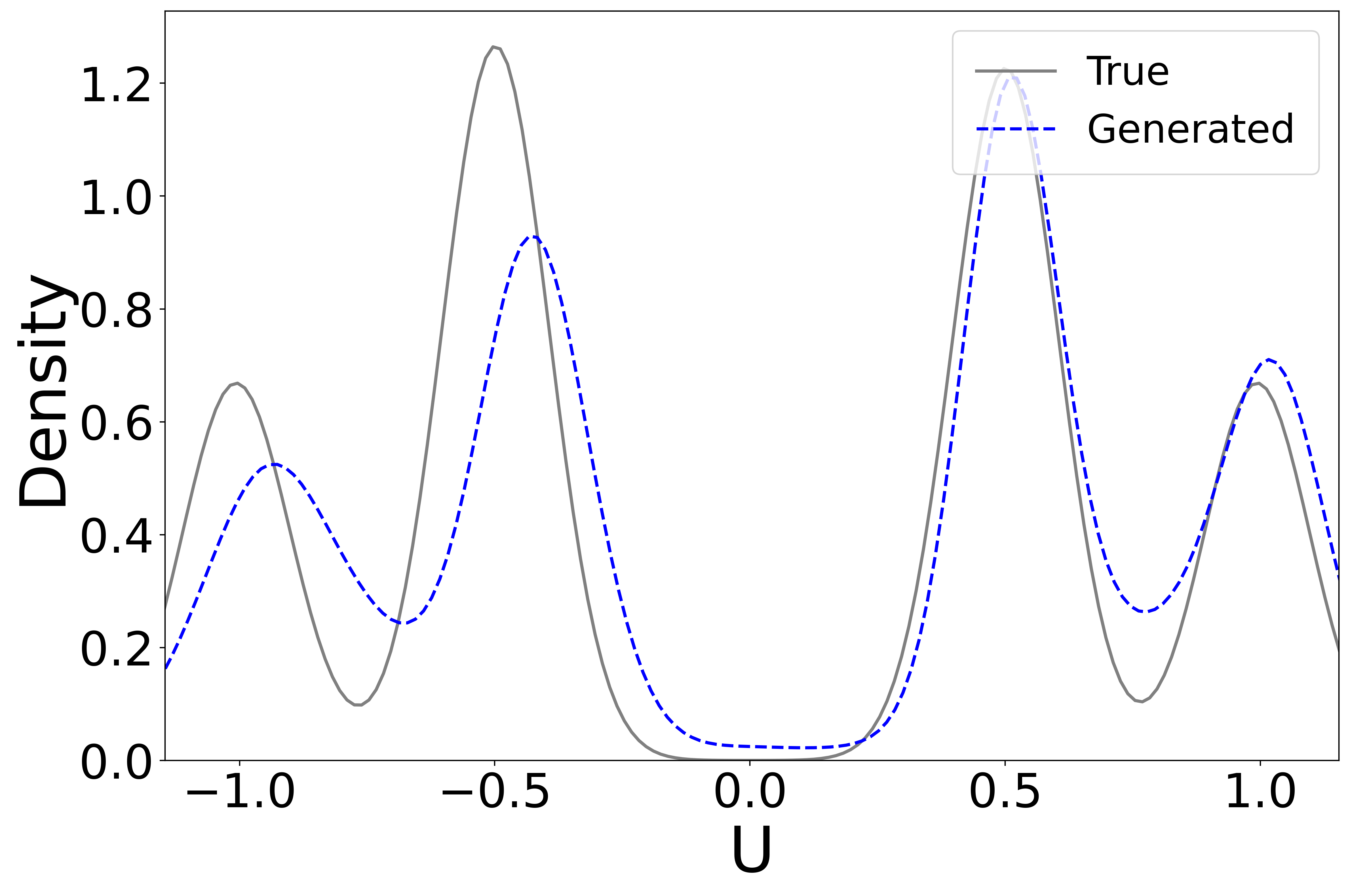}
    \end{minipage} \hfill
    \begin{minipage}{0.21\linewidth}
        \includegraphics[width=\textwidth, align=c]{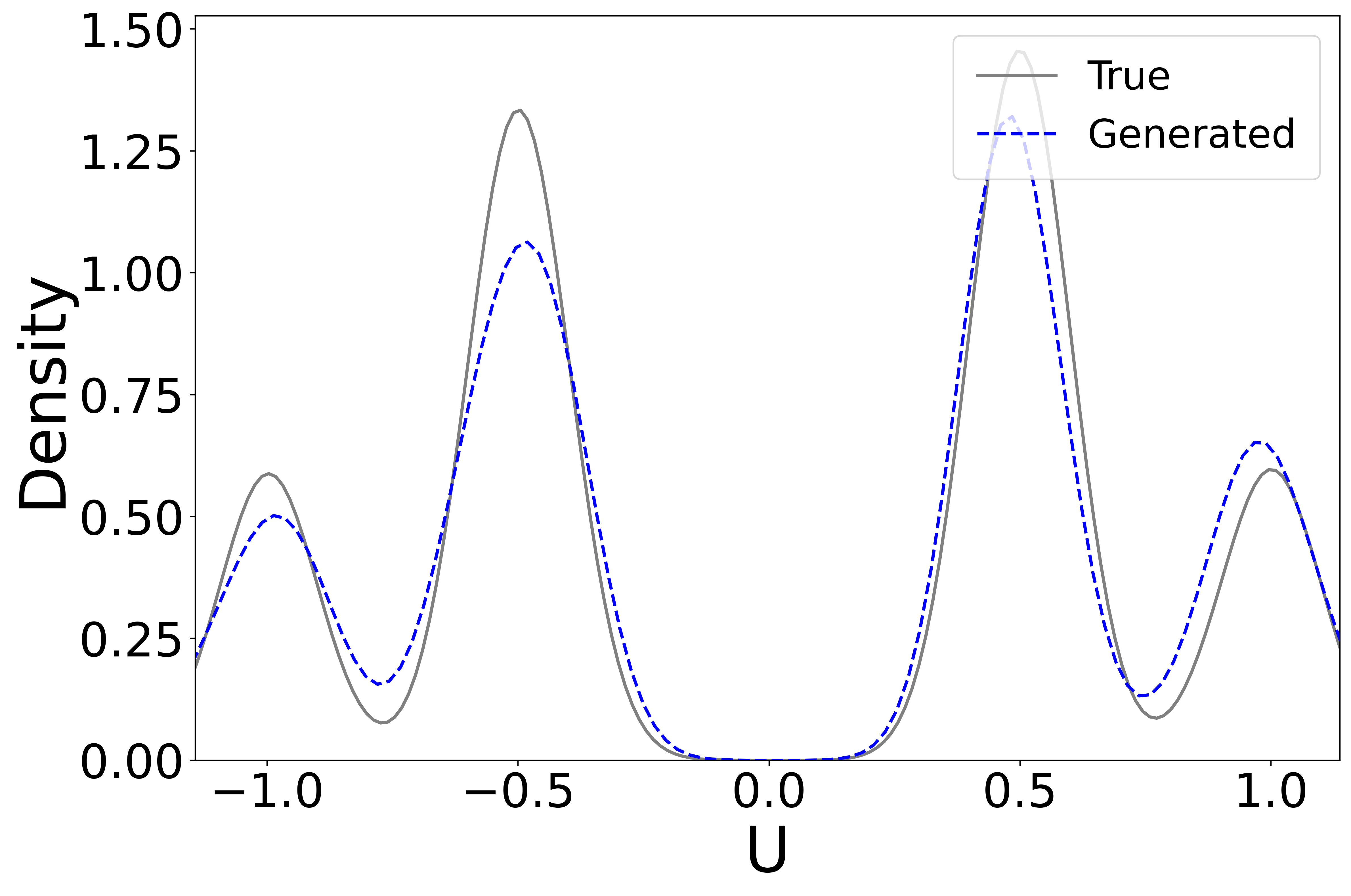}
    \end{minipage}
    
    \medskip
    
    \begin{minipage}{0.15\linewidth}
        \includegraphics[width=\textwidth, align=c]{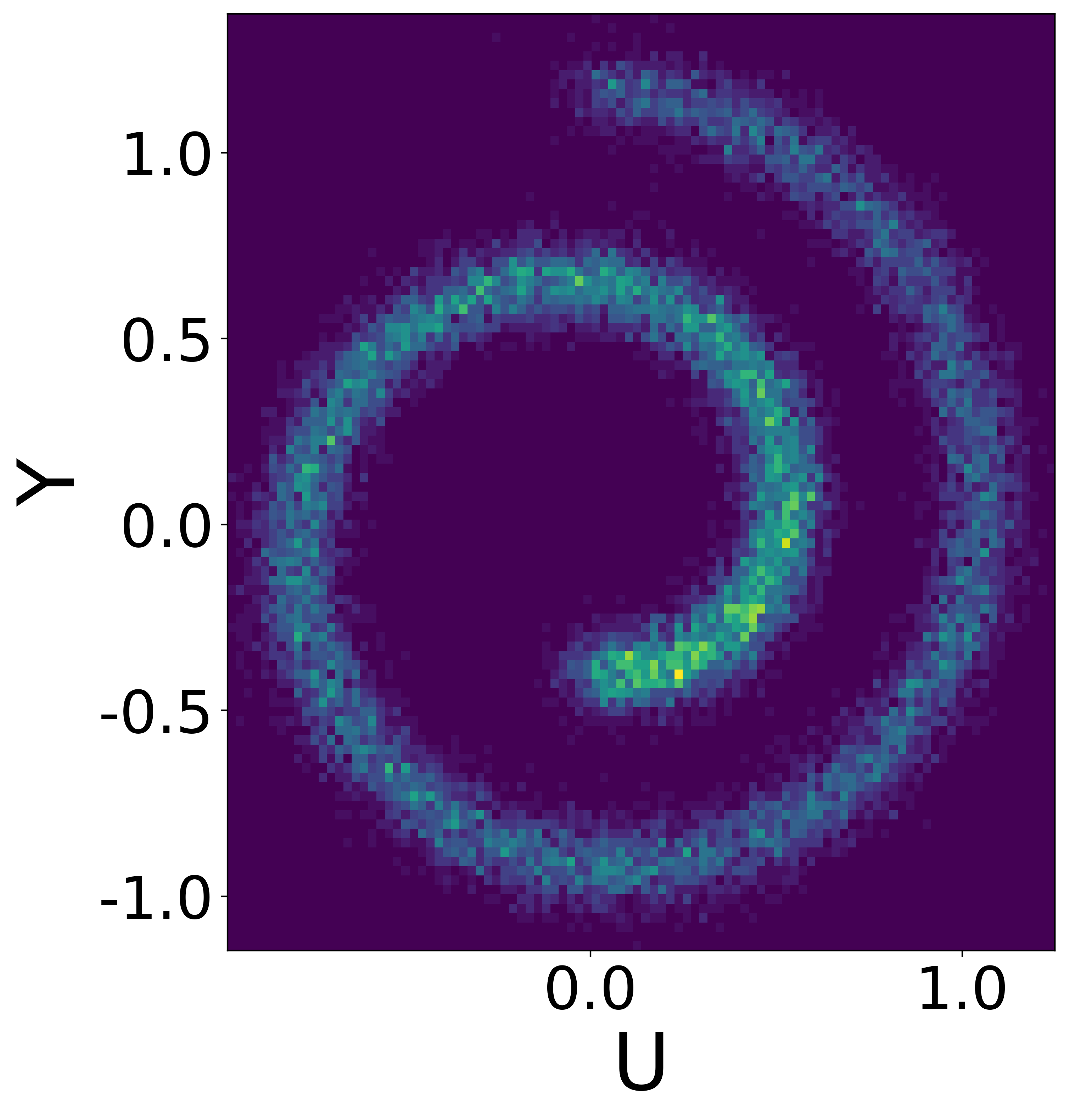}
    \end{minipage} \hfill
    \begin{minipage}{0.15\linewidth}
        \includegraphics[width=\textwidth, align=c]{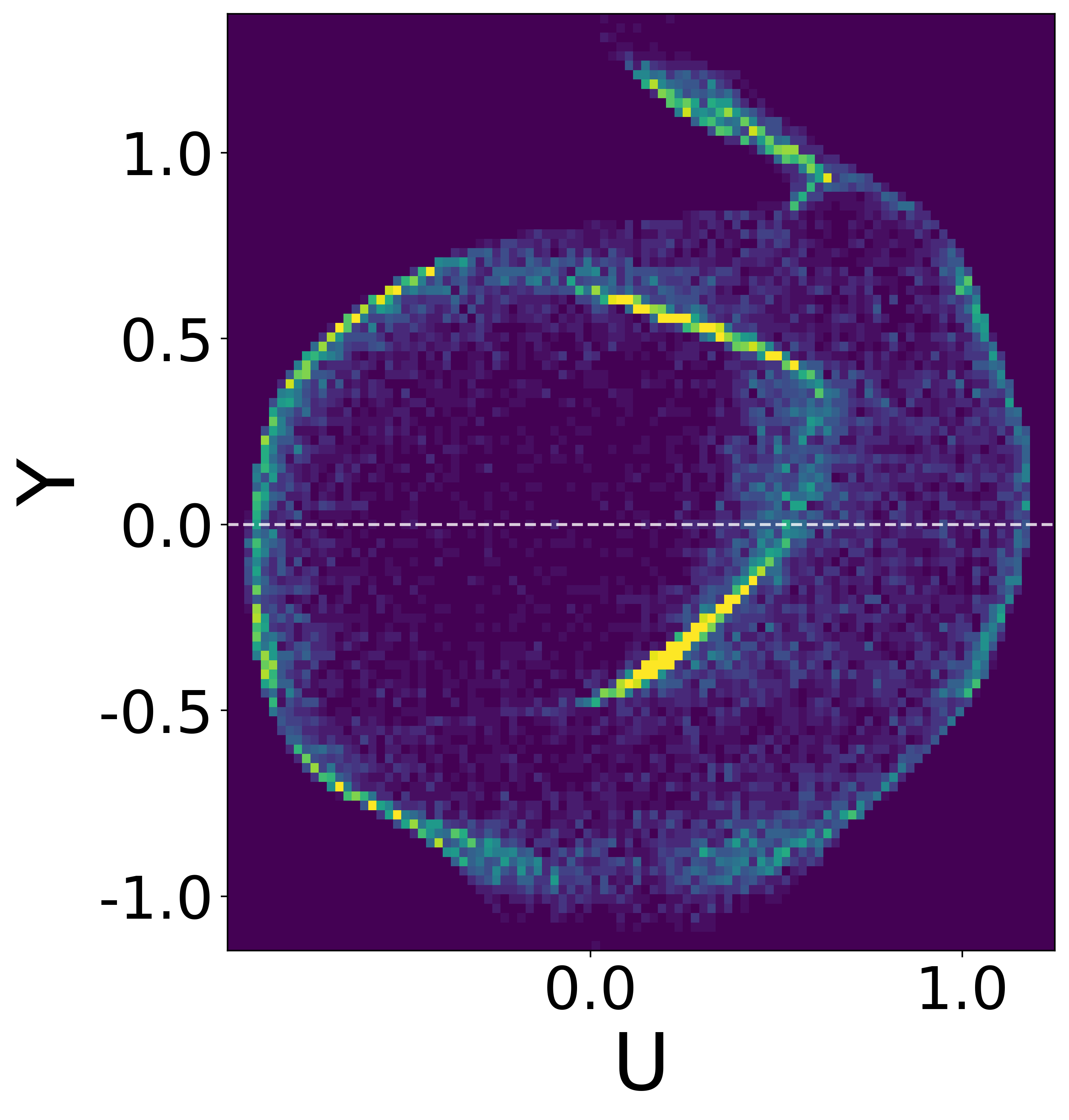}
    \end{minipage} \hfill
    \begin{minipage}{0.15\linewidth}
        \includegraphics[width=\textwidth, align=c]{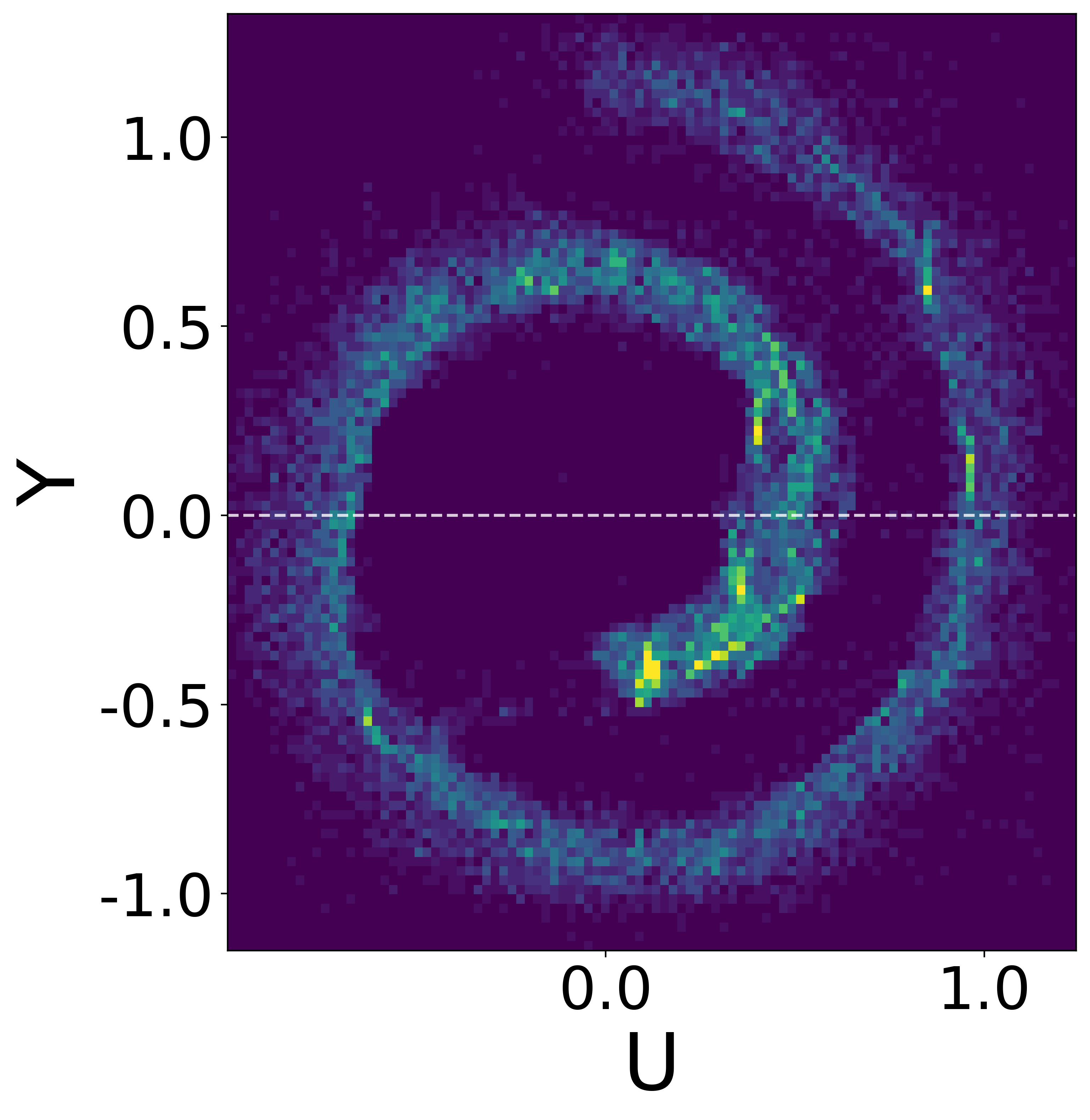}
    \end{minipage} \hfill
    \begin{minipage}{0.21\linewidth}
        \includegraphics[width=\textwidth, align=c]{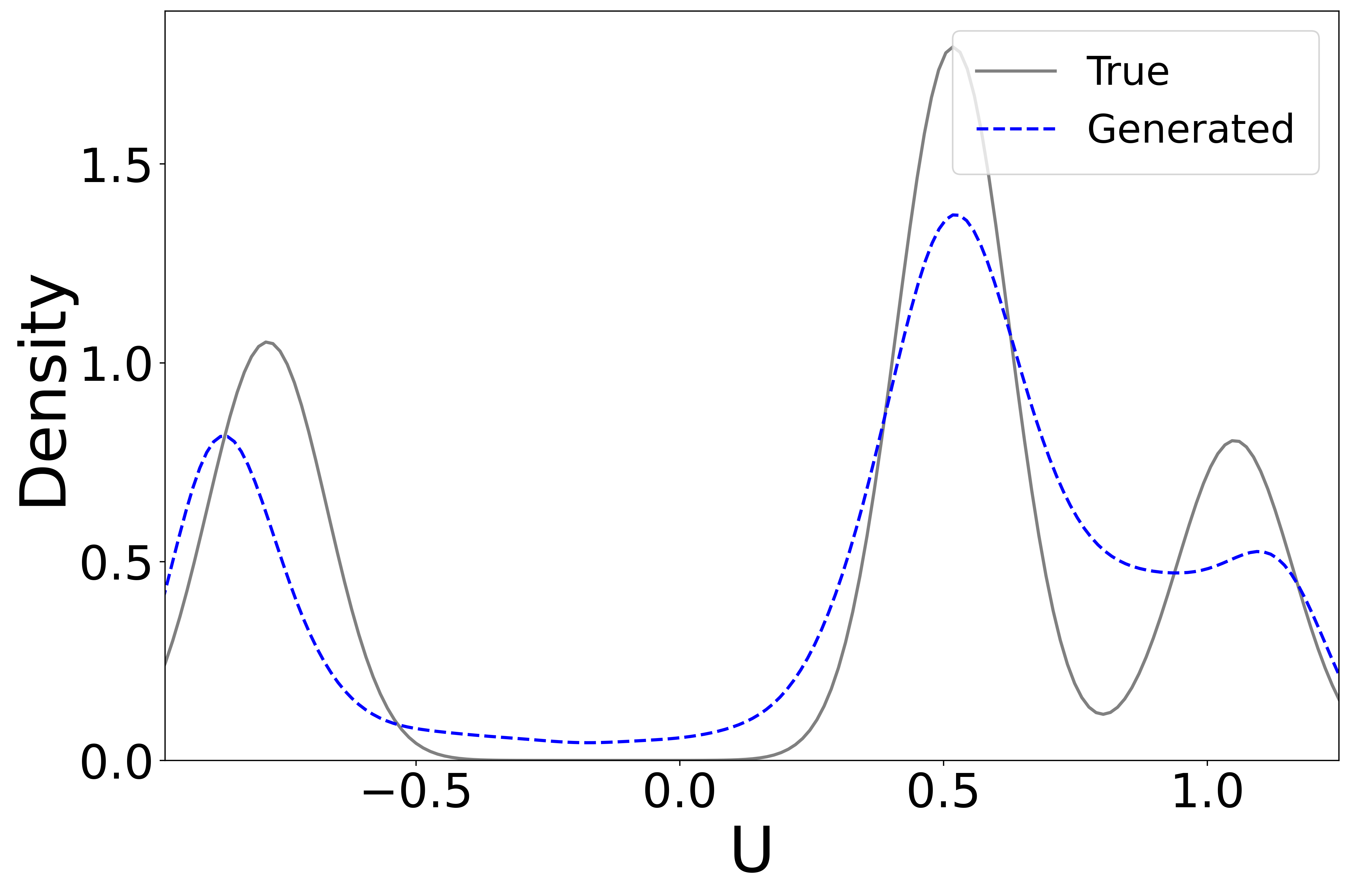}
    \end{minipage} \hfill
    \begin{minipage}{0.21\linewidth}
        \includegraphics[width=\textwidth, align=c]{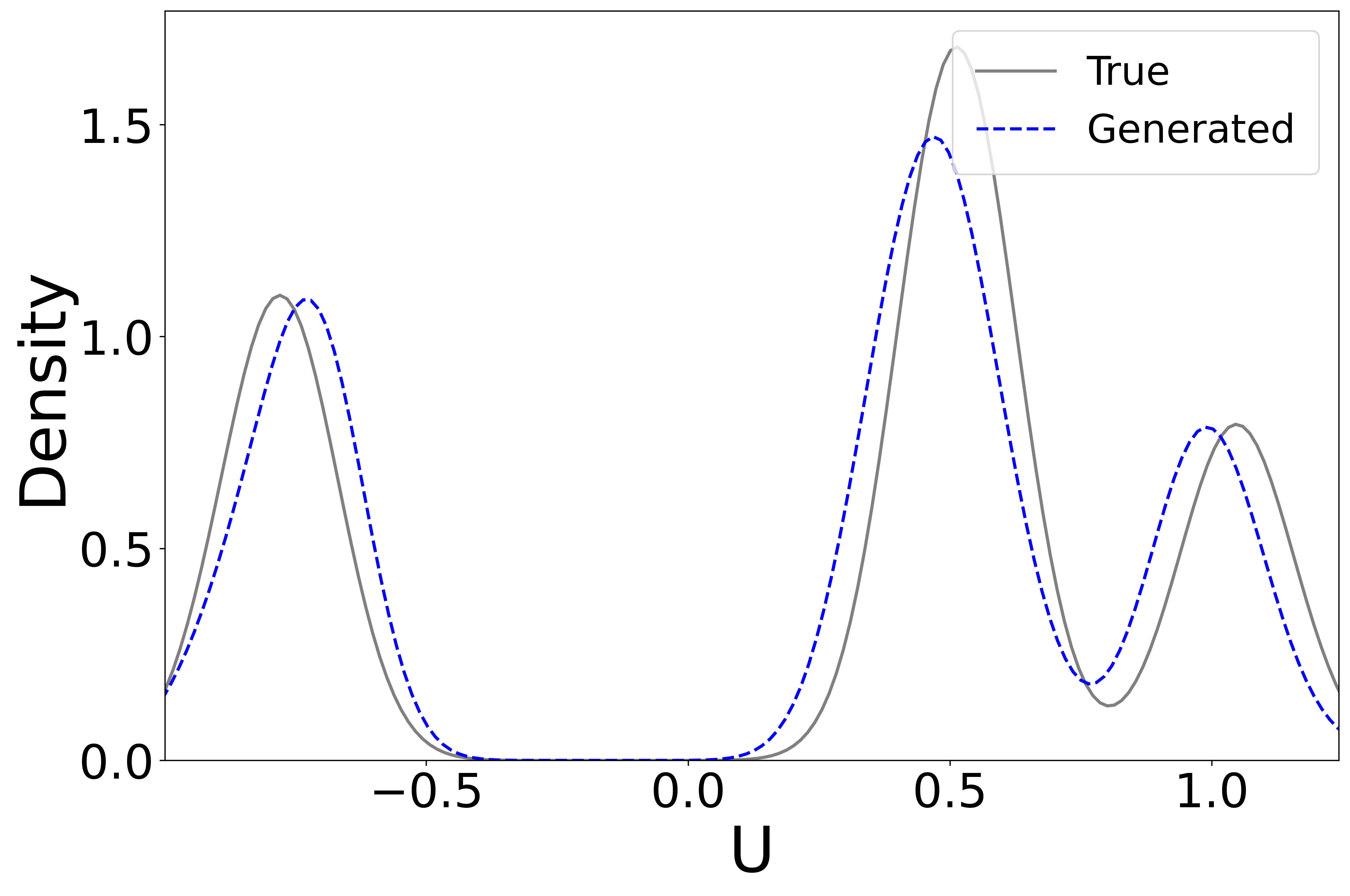}
    \end{minipage}

    \caption{\textbf{Qualitative results on 2D synthetic datasets.} From left to right: target distribution, samples from COTM, samples from CUOTM (Ours), and the respective KDE density visualizations. Each row shows results for the \textit{Moons}, \textit{Checkerboard}, \textit{Circles}, and \textit{Swissroll} datasets, respectively. 
    } 
    \label{fig:2d_synthetic_results_appendix}
\end{figure*}

\begin{figure*}[t]
    \centering
    
    \subfloat{
        \includegraphics[width=0.3\linewidth]{figure/outliar/circles_true_152.png}
    }
    \hfill
    \subfloat{
        \includegraphics[width=0.3\linewidth]{figure/outliar/circles_OT_1.52.png}
    }
    \hfill
    \subfloat{
        \includegraphics[width=0.3\linewidth]{figure/outliar/circles_UOT_1.52.png}
    }   

    \medskip

    \subfloat{
        \includegraphics[width=0.3\linewidth]{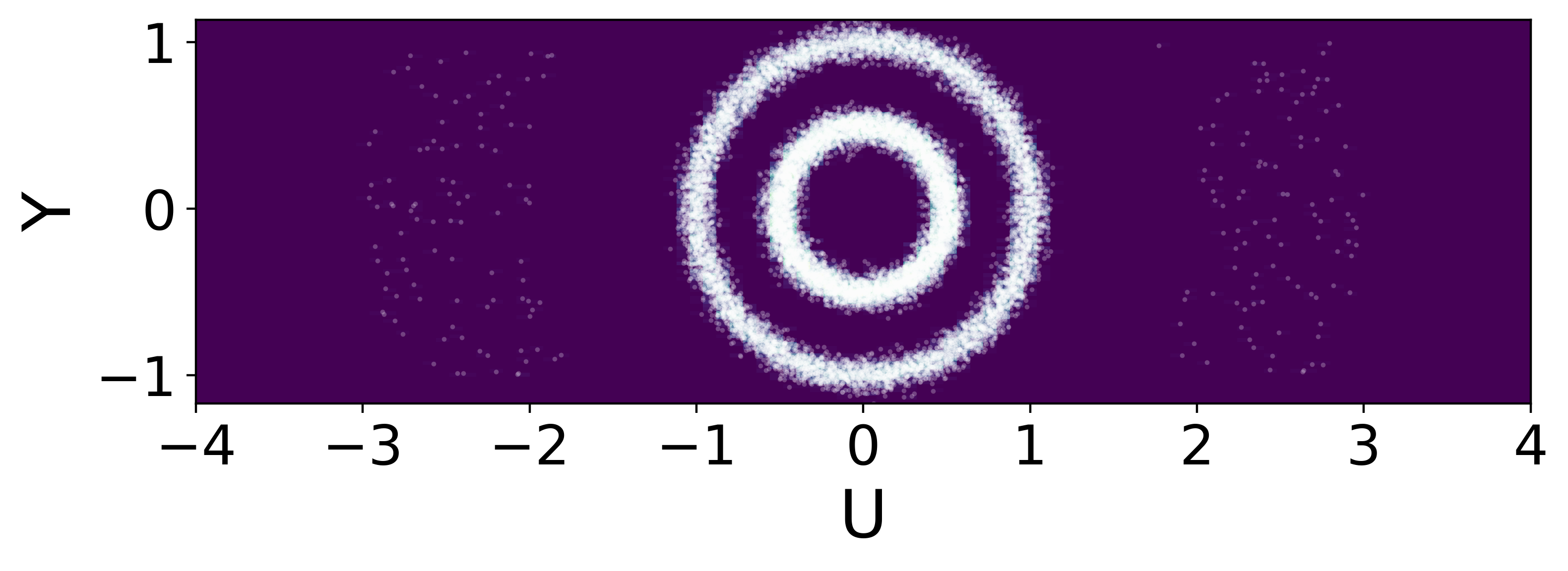}
    }
    \hfill
    \subfloat{
        \includegraphics[width=0.3\linewidth]{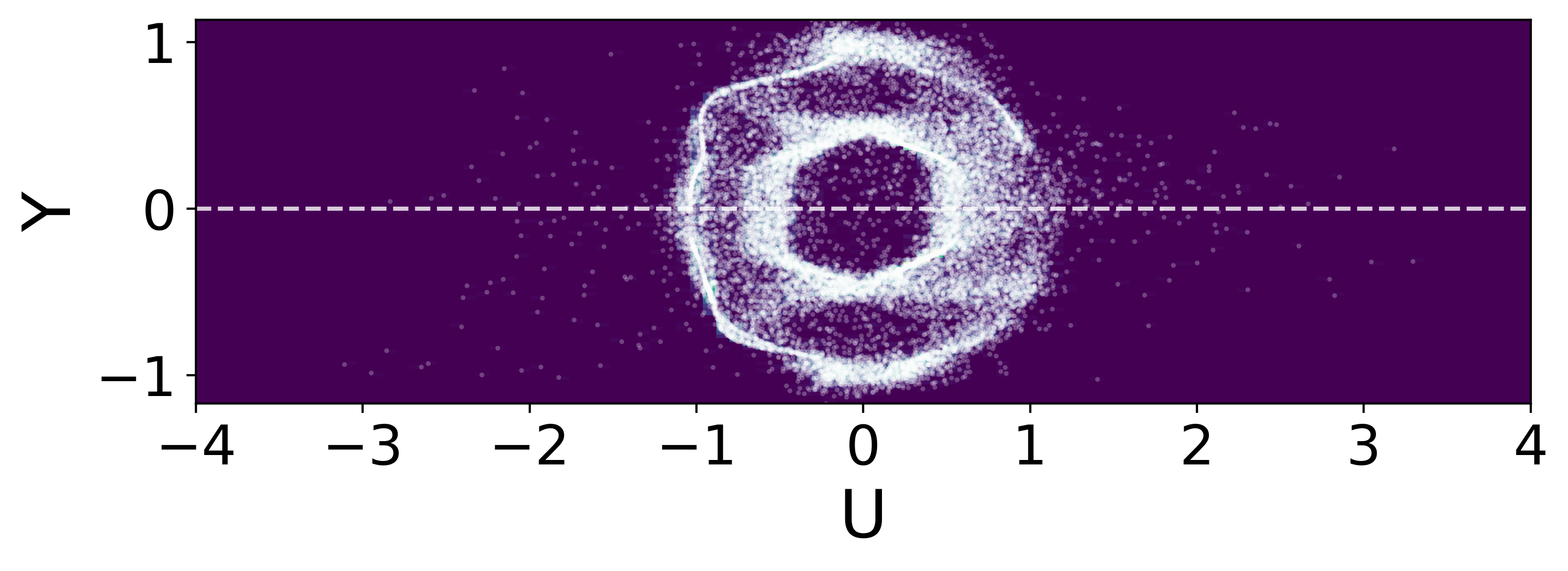}
    }
    \hfill
    \subfloat{
        \includegraphics[width=0.3\linewidth]{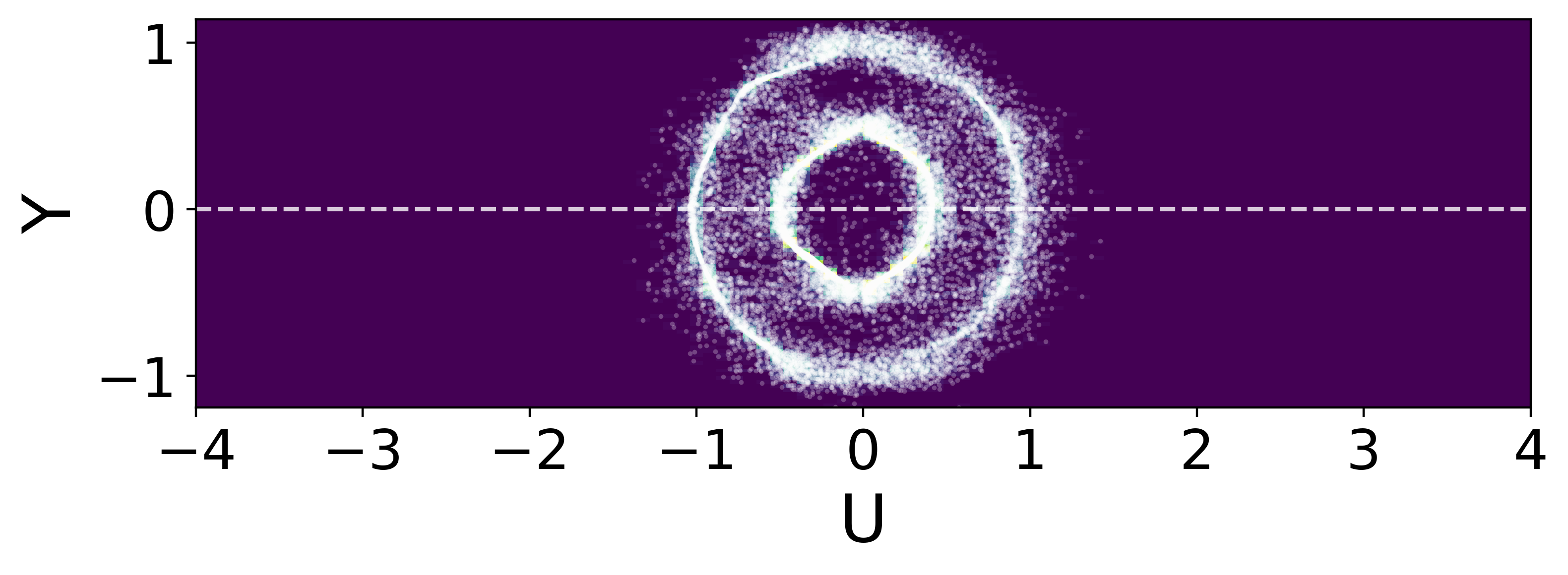}
    }   

    \medskip

    \subfloat{
        \includegraphics[width=0.3\linewidth]{figure/outliar/circles_true_34.png}
    }
    \hfill
    \subfloat{
        \includegraphics[width=0.3\linewidth]{figure/outliar/circles_OT_34.png}
    }
    \hfill
    \subfloat{
        \includegraphics[width=0.3\linewidth]{figure/outliar/circles_UOT_34.png}
    }   

    \medskip

    \setcounter{subfigure}{0}
    \subfloat[True Distribution $\nu$]{
        \includegraphics[width=0.3\linewidth]{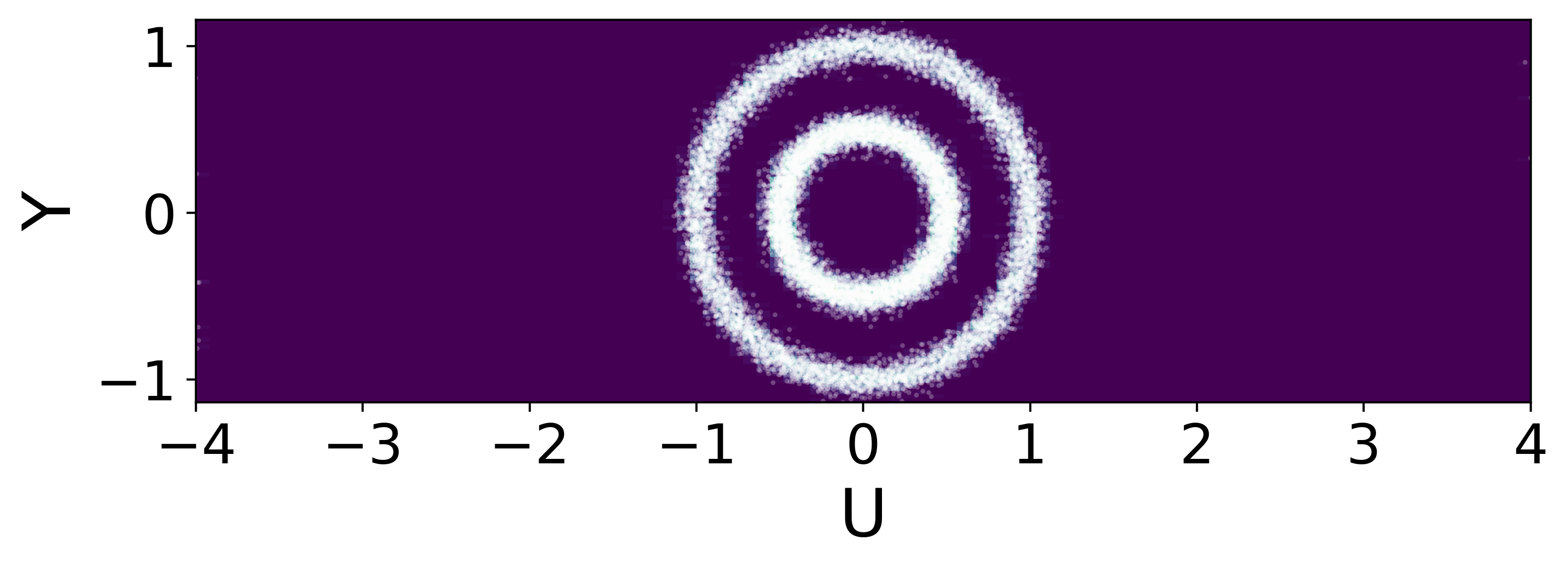}
    }
    \hfill
    \subfloat[Generated distribution (COTM)]{
        \includegraphics[width=0.3\linewidth]{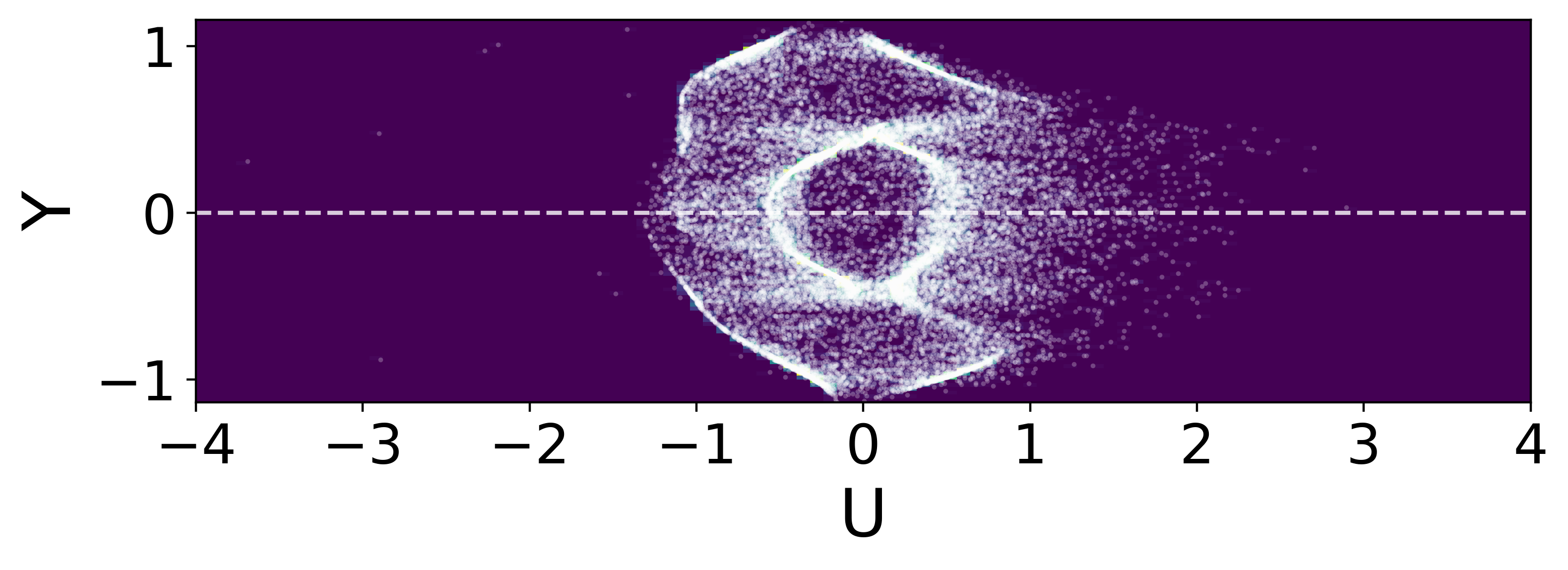}
    }
    \hfill
    \subfloat[Generated distribution (CUOTM)]{
        \includegraphics[width=0.3\linewidth]{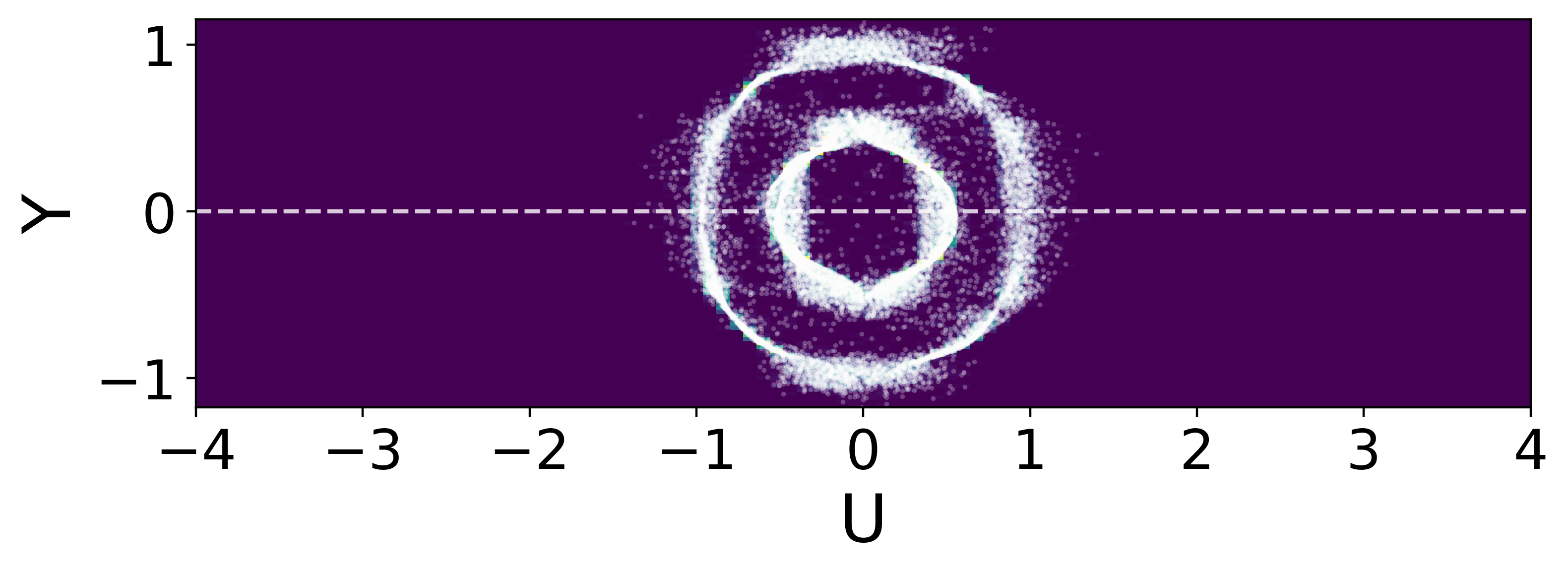}
    }
    
    \caption{\textbf{Qualitative comparison of outlier robustness on the Circles dataset.} The in-distribution modes (two center circles) are perturbed with 1\% outliers. From top to bottom, the outliers are located within annular regions of radii $r \in [1.5, 2]$, $r \in [2, 3]$, $r \in [3, 4]$, and $r \in [4, 5]$.}
    \label{fig:outlier_appendix}
\end{figure*}

\end{document}